\documentclass{article} 
\usepackage{iclr2026_conference,times}

\usepackage{amsmath,amsfonts,bm}

\def\eqref#1{equation~\ref{#1}}

\def\1{\bm{1}}

\DeclareMathAlphabet{\mathsfit}{\encodingdefault}{\sfdefault}{m}{sl}
\SetMathAlphabet{\mathsfit}{bold}{\encodingdefault}{\sfdefault}{bx}{n}

\usepackage{url}

\usepackage{graphicx}
\usepackage{booktabs}
\usepackage{multirow}
\usepackage{bbding}
\usepackage{subcaption}
\usepackage{algorithm}
\usepackage{algorithmic}
\usepackage{amsthm}
\newtheorem{theorem}{Theorem}
\newtheorem{definition}{Definition}
\newtheorem{corollary}{Corollary}
\newtheorem{assumption}{Assumption}
\newtheorem{lemma}{Lemma}
\usepackage{wrapfig}
\usepackage{makecell}
\usepackage{pifont}
\usepackage[hidelinks]{hyperref}
\usepackage{enumitem}
\newcommand{\repolink}{\url{https://anonymous.4open.science/status/LA-LORA-50E5/}}

\usepackage{comment}
\title{Rethinking LoRA for Privacy-Preserving Federated Learning in Large Models}
\author{Jin Liu \textsuperscript{1}, Yinbin Miao\textsuperscript{1}, Ning Xi\textsuperscript{1}\thanks{Corresponding author.}, Junkang Liu\textsuperscript{2}\\
\textsuperscript{1} School of Cyber Engineering, Xidian University \\
\textsuperscript{2} College of Intelligence and Computing, Tianjin University\\
\texttt{\{jinliu9787, junkangliukk\}@gmail.com, \{ybmiao, nxi\}@xidian.edu.cn}
}

\iclrfinalcopy 
\begin{document}

\maketitle

\begin{abstract}

Fine-tuning large vision models (LVMs) and large language models (LLMs) under differentially private federated learning (DPFL) is hindered by a fundamental privacy-utility trade-off. Low-Rank Adaptation (LoRA), a promising parameter-efficient fine-tuning (PEFT) method, reduces computational and communication costs by introducing two trainable low-rank matrices while freezing pre-trained weights. However, directly applying LoRA in DPFL settings leads to performance degradation, especially in LVMs. Our analysis reveals three previously underexplored challenges: (1) gradient coupling caused by the simultaneous update of two asymmetric low-rank matrices, (2) compounded noise amplification under differential privacy, and (3) sharpness of the global aggregated model in the parameter space. To address these issues, we propose LA-LoRA (\textbf{L}ocal \textbf{A}lternating \textbf{LoRA}), a novel approach that decouples gradient interactions and aligns update directions across clients to enhance robustness under stringent privacy constraints. Theoretically, LA-LoRA strengthens convergence guarantees in noisy federated environments. Extensive experiments demonstrate that LA-LoRA achieves state-of-the-art (SOTA) performance on Swin Transformer and RoBERTa models, showcasing robustness to DP noise and broad applicability across both LVMs and LLMs. For example, when fine-tuning the Swin-B model on the Tiny-ImageNet dataset under a strict privacy budget ($\epsilon = 1$), LA-LoRA outperforms the best baseline, RoLoRA, by 16.83\% in test accuracy. Code is provided in \repolink.
\end{abstract}

\section{Introduction}

As foundation models such as GPT~\citep{achiam2023gpt}, BERT~\citep{kenton2019bert}, and ViT~\citep{dosovitskiy2020image} scale in size and capacity, adapting them to downstream tasks increasingly relies on vast and heterogeneous datasets. However, centralized collection is becoming impractical due to data silos and rising concerns over user privacy~\citep{liu2024fedbcgd, liuimproving}. \textbf{Federated learning (FL,~\cite{mcmahan2017communication})} offers a privacy-preserving solution by enabling decentralized model training without sharing raw data across clients.

Despite this promise, applying large-scale models in FL remains a major challenge. These models typically consist of billions of parameters, making full-model fine-tuning computationally expensive and communication-heavy. To mitigate this,~\textbf{parameter-efficient fine-tuning (PEFT)} methods, such as Low-Rank Adaptation (LoRA,~\cite{hu2021lora}), have been integrated into FL. Approaches like FedIT~\citep{zhang2024towards} freeze the majority of the model weights and only update a lightweight set of LoRA parameters, reducing communication overhead to less than 0.1\% of the full model while preserving performance .

\textbf{However, privacy remains a critical concern in FL}. Although raw data is not directly shared, transmitting gradients or model updates can still expose sensitive information. Prior work has shown that adversaries may reconstruct private data from such gradients~\citep{liu2024beyond, han2024fedsecurity, fan2025badpfl, ye2025emerging}. To provide formal privacy guarantees, \textbf{differential privacy (DP,~\cite{dwork2006differential})} is commonly integrated into federated optimization.

While differentially private federated learning (DPFL) with PEFT methods (e.g., DP-LoRA,~\cite{liu2025differentially}) provides privacy guarantees, it often incurs reduced performance due to federated constraints. Federated training introduces additional complexities, including non-iid data distributions, noisy updates, and difficulties in aggregating LoRA modules. In practice, LoRA-based fine-tuning under privacy constraints often suffers from severe performance degradation. Beyond the prior finding of DP noise amplification in FFA-LoRA~\citep{sun2024improving}, we identify two further challenges, leading to three key issues of LoRA in DPFL \cite{Feng2022SSR, Feng2023MaskCon, Feng2024NoiseBox, Feng2024CLIPCleaner}:
 
\begin{itemize}[left=0pt, labelsep=1em]
\item \textbf{Gradient coupling.} Simultaneous updates to LoRA’s asymmetric matrices cause gradient interference, destabilizing training, especially under DP noise and non-iid distributions.
\item \textbf{Amplified DP noise.} LoRA’s semi-quadratic update structure amplifies the impact of injected DP noise, severely degrading learning stability.
\item \textbf{Sharp global solutions.} LoRA's low-rank constraints reduce client capacity, often resulting in sharp global minima after aggregation, compromising robustness and generalization.
\end{itemize}

To overcome these, we propose \textbf{LA-LoRA}, a novel framework built on a structural rethinking of how LoRA fits into DPFL. Its core is a \textbf{local alternating update mechanism}  that decouples gradient interference and suppresses DP noise. To further enhance stability and generalization, LA-LoRA incorporates a simple yet effective \textbf{low-pass smoothing filter}. 
We provide theoretical evidence that our update scheme ensures stable optimization while preserving the low-rank structure of LoRA. Extensive experiments on both \textbf{image classification} and \textbf{language understanding} tasks validate the effectiveness of LA-LoRA. Results show that LA-LoRA achieves strong performance while offering privacy protection and optimization stability. Our contributions are summarized as follows:

\begin{itemize}[left=0pt, labelsep=1em]
\item We identify two overlooked challenges of applying LoRA in DPFL: gradient coupling and aggregation sharpness. Along with noise amplification, these motivate the design of our algorithm.
\item We propose LA-LoRA, a novel algorithm that alternates local updates of LoRA components to mitigate gradient interference, noise amplification, and aggregation instability. A low-pass smoothing filter further enhances cross-client consistency and stability.
\item Theoretically, we prove that alternating updates yield unique closed-form solutions and ensure stable low-rank optimization, mitigating projection distortion in federated settings.
\item We validate our algorithm on both vision (Swin Transformer) and language (RoBERTa) models, covering image classification and NLP tasks. LA-LoRA outperforms SOTA privacy-preserving federated LoRA methods on various tasks and demonstrates significant performance gains.
\end{itemize}

\begin{figure}[tb]
\centering
\includegraphics[width=0.98\textwidth]{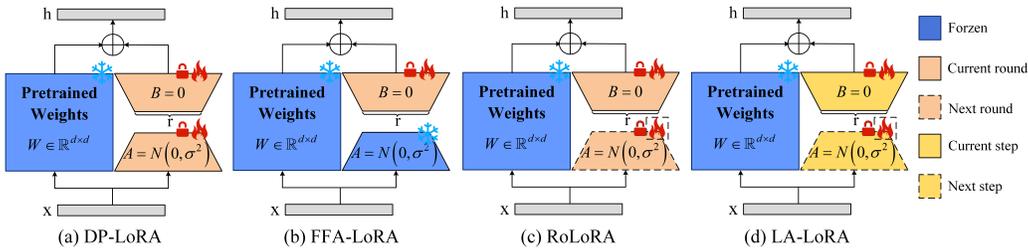}

\caption{The illustration of DP-LoRA, FFA-LoRA, RoLoRA, and LA-LORA. DP-LoRA updates both noisy $A$ and $B$ simultaneously and sends them to the server for aggregation. FFA-LoRA freezes $A$, updates only the noisy $B$, and sends it to the server. RoLoRA alternately updates noisy $A$ and $B$ across rounds. Our LA-LoRA alternately updates noisy $A$ and $B$ within each local round.}
\label{fig1}
\end{figure}

\section{Background And Related Work}

\begin{definition}[$(\epsilon, \delta)$-DP,~\cite{dwork2014algorithmic}]
Let $\epsilon > 0$ and $0<\delta<1$. A random mechanism $\mathcal{M}:\mathcal{X}^n\!\to\!\mathcal{O}$ satisfies $(\epsilon, \delta)$-DP if, for any two neighboring datasets $\mathcal{D}$ and $\mathcal{D}'$ differing in one sample, and for any measurable subset $\mathcal{U} \subseteq \mathcal{O}$ of possible outputs,
\begin{equation}
    \Pr[\mathcal{M}(\mathcal{D}) \in \mathcal{U}] \leq e^\epsilon \cdot \Pr[\mathcal{M}(\mathcal{D}') \in \mathcal{U}] + \delta.
\end{equation}
\end{definition}

\textbf{Differentially private federated learning (DPFL).} A general FL system with a central server and $N$ clients. Each client $i$ holds a local dataset $\mathcal{D}_i$ and performs local training on it. The server aggregates the model updates from all clients and updates the global model $W$. Our privacy goal is sample-level protection (changes to any single data sample should not significantly affect the output). Building on \citet{noble2022differentially}, we provide $(\epsilon,\delta)$-DP guarantees. We perform privacy accounting via \textbf{Rényi DP (RDP)}~\citep{mironov2017renyi}, composing per-step privacy loss for the subsampled Gaussian mechanism \citep{wang2019subsampled} with per-sample clipping, and finally convert to $(\epsilon,\delta)$-DP.

To achieve DP in FL, each client $i$ clips the per-sample gradients to a fixed $\ell_2$ norm to bound sensitivity, then adds Gaussian noise to the aggregated clipped gradients locally, protecting the contribution of individual training examples. For a local mini-batch $\mathcal{B}_i$, the privatized update is computed as:
\begin{equation}
        g_{ij} = \nabla \mathcal{L}_{ij}/\max\left(1, \|\nabla \mathcal{L}_{ij}\|_2/C \right), \forall j \in \mathcal{B}_i,\quad {g}_{i} = \left( \small\!\sum\nolimits_{j \in \mathcal{B}_i} g_{ij} + \mathcal{N}(0, C^2 \sigma^2) \right) / |\mathcal{B}_i|,
\end{equation}
where $\mathcal{L}_{ij}$ denotes the loss for sample $j$ on client $i$, $C$ is the clipping norm bound, and $\sigma$ is the Gaussian noise multiplier determined by the privacy budget $(\epsilon, \delta)$.

\textbf{Parameter-efficient fine-tuning (PEFT).} The rapid scaling of modern pre-trained models has led to an increased demand for fine-tuning in resource-constrained environments. PEFT tackles this by training a small set of parameters while freezing most backbone weights, with representative methods including adapters~\citep{houlsby2019parameter}, prefix-tuning~\citep{li2021prefix}, prompt-tuning~\citep{lester2021power}, LoRA~\citep{hu2021lora, tian2024hydralora} and other methods~\citep{shin2023fedsplitx, zaken2021bitfit}. LoRA is particularly popular for its simplicity and strong performance under minimal parameter overhead: it augments a pre-trained weight matrix $W_0 \in \mathbb{R}^{m \times n}$ with two low-rank matrices, an up-projection $B \in \mathbb{R}^{m\times r}$ and a down-projection $A \in \mathbb{R}^{r\times n}$ with $r \ll \min \{m,n\}$, and reparameterizes the weights as $W = W_0 + sBA$ while keeping $W_0$ fixed. $B$ is initialized to an zero matrix suppress early updates, while $A$ uses a random Gaussian initialization.

\textbf{PEFT in FL.}
PEFT in FL has been explored to reduce communication overhead, computational cost, and privacy risks. Accordingly, various strategies have been proposed, such as adapter-based~\citep{cai2023efficient, ghiasvand2024communication}, prompt-based~\citep{zhao2023fedprompt, qiu2023text}, and selective tuning approaches~\citep{yu2023bridging}. Recently, LoRA-based methods have received increasing attention in FL. FedIT~\citep{zhang2024towards} directly incorporates LoRA into the standard FedAvg framework for instruction tuning. RoLoRA~\citep{chen2024robust} addresses heterogeneity by alternately optimizing $A$ and $B$ across communication rounds. FedSA-LoRA~\citep{guo2024selective} uploads only $A$ to the server for aggregation, keeping $B$ local to support personalized adaptation. Other works explore heterogeneous configurations~\citep{cho2024heterogeneous, wang2024flora} or personalized decomposition~\citep{qi2024fdlora} to improve adaptability under system and data heterogeneity.
FedBCGD \citep{liu2024fedbcgd} proposes an accelerated block coordinate gradient descent framework for FL.
FedSWA \citep{liuimproving} improves generalization under highly heterogeneous data by stochastic weight averaging.
FedAdamW \citep{liu2025fedadamw} introduces a communication-efficient AdamW-style optimizer tailored for federated large models.
FedNSAM \citep{liu2025consistency} studies the consistency relationship between local and global flatness in FL.
FedMuon \citep{liu2025fedmuon} accelerates federated optimization via matrix orthogonalization.
DP-FedPGN \citep{liu2025dp} develops a penalizes gradient norms to encourage globally flatter minima in DP-FL.

\begin{wraptable}{r}{0.4\textwidth}
\centering
\small
\vspace{-13pt}
\caption{\small Comparison of LoRA-based methods in DPFL. \checkmark\ denotes support, \ding{53}\ denotes not considered. \textbf{Exp.} indicates the effective expression ability under DP.}
\label{tab:comparison}
\setlength{\tabcolsep}{3pt}
\begin{tabular}{l|cccc}
	\toprule
	\textbf{Method} & \textbf{DP} & \textbf{ LVMs} & \textbf{Exp.} & \textbf{Speed}\\
	\midrule
	DP-LoRA      & \checkmark & \ding{53} & mid & slow\\
	FFA-LoRA  & \checkmark & \ding{53} & low  & slow \\
	RoLoRA   & \ding{53} & \ding{53} & mid & mid\\
	LA-LoRA   & \checkmark & \checkmark   & high &  fast\\
	\bottomrule
\end{tabular}
\vspace{-12pt}
\end{wraptable}

\textbf{LoRA in DPFL.}
In DPFL, LoRA-based methods typically apply DP mechanisms to the low-rank matrices $A$ and $B$, rather than the full model parameters, which significantly reduces computational and communication overhead. DP-LoRA~\citep{liu2025differentially} computes per-sample gradients for $A$ and $B$ locally, clips them, adds Gaussian noise, and sends privatized updates for aggregation. FFA-LoRA~\citep{sun2024improving} freezes the down-projection matrix $A$ and trains only the up-projection $B$, injecting calibrated noise into its updates. This design avoids noise amplification, but at the cost of reduced model expressiveness. Despite recent efforts, differentially private federated LoRA methods still suffer from noticeable performance degradation and remain underexplored. Figure~\ref{fig1} shows the differences in the ideas of existing methods, while Table~\ref{tab:comparison} compares their actual characteristics.

\section{Limitations of LoRA under Privacy-Preserving FL}
\label{sec:3}

Although LoRA is widely used for efficient adaptation, its behavior under differential privacy is still poorly understood. We show that DP interacts with the low-rank parameterization and causes failure modes that are further amplified in federated learning due to data heterogeneity and noisy local updates. We therefore focus on DPFL and include centralized DP experiments in Appendix~\ref{app:central_exp}.

\subsection{Gradient Coupling in Asymmetric Updates}
We find that simultaneously updating the two LoRA matrices $A$ and $B$ leads to intrinsic gradient coupling that destabilizes training under DP noise and data heterogeneity. In LoRA, $A\in \mathbb{R}^{r\times n}$ projects the input into a lower-dimensional subspace, while $B\in \mathbb{R}^{m\times r}$ maps this low-rank representation back to the original output space. Their asymmetric roles give rise to distinct gradient behaviors. More precisely, the gradients of the loss $\mathcal{L}$ with respect to $A$ and $B$ are interdependent:

\begin{equation}
\nabla_{A} \mathcal{L}= sB^\top (\nabla_{W} \mathcal{L}), \quad \nabla_{B} \mathcal{L}= s(\nabla_{W} \mathcal{L}) A^\top.\label{lora_gradeint}
\end{equation}
Thus, the update direction for $A$ (resp.\ $B$) is re-parameterized by the current $B$ (resp.\ $A$). When both matrices are updated in the same step, the latent basis defined by $A$ can shift while $B$ adapts to outdated directions, creating \emph{representation drift} and a mismatch between the two gradient pathways. In the presence of DP perturbations and non-iid data, this coupling makes the trajectory more sensitive to perturbations and client drift, leading to oscillations and unstable convergence.

To quantify this coupling, we compute the cosine similarity between $\nabla_A\mathcal{L}$ and $\nabla_B\mathcal{L}$ during each training step under the experimental setup described in Section~\ref{sec6}. As shown in Figure~\ref{fig:challenge_1}(a), DP-LoRA (simultaneous updates) maintains persistently lower similarity, reflecting strong coupling and mismatch. LA-LoRA(-filter) (local alternating updates) achieves much higher similarity. This alignment difference manifests in training dynamics and final performance:
Figure~\ref{fig:challenge_1} (b) shows that the simultaneous update of $A$ and $B$ in DP-LoRA leads to a higher test loss, whereas LA-LoRA(-filter) achieves a smoother convergence. Figure~\ref{fig:challenge_1}(c) further indicates that this problem. We observe similar trends in a non-private federated setup (Appendix~\ref{app:non_dp_exp}), which suggests that gradient alignment is inherently beneficial, while DP noise further amplifies the coupling issue.

\begin{figure}[h]
    \centering
    \subfloat[Cosine Similarity.]{
    \includegraphics[width=0.3\textwidth]{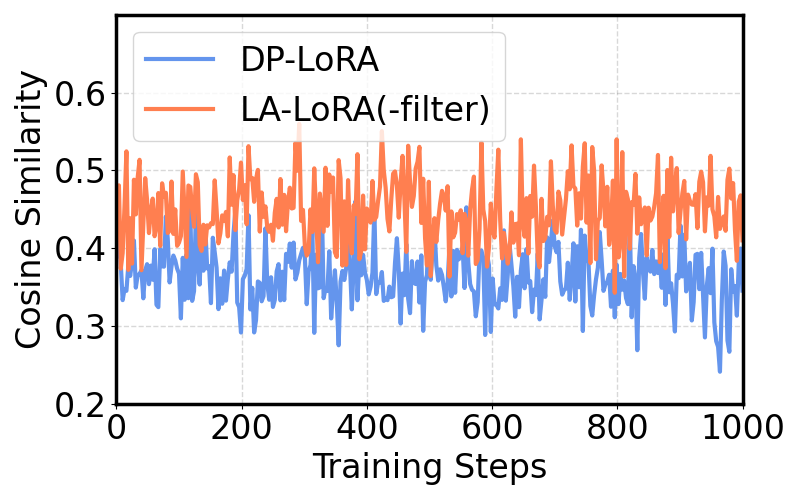}}
    \subfloat[Test Loss.]{
    \includegraphics[width=0.3\textwidth]{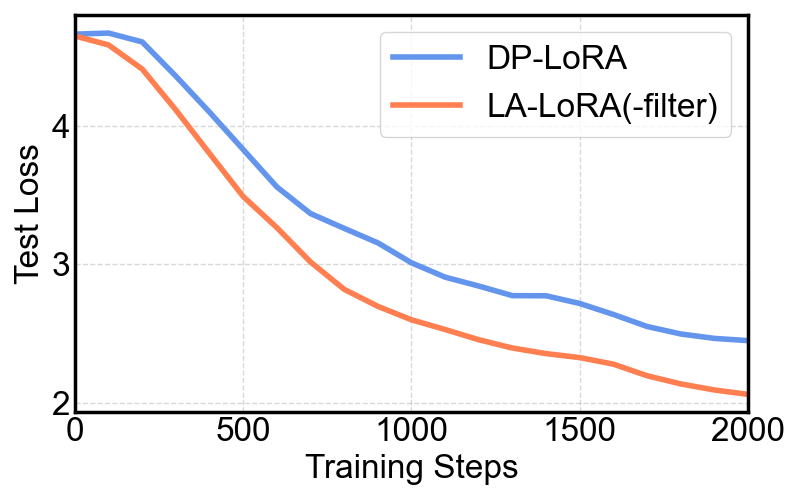}}
        \subfloat[Test Accuracy.]{
    \includegraphics[width=0.3\textwidth]{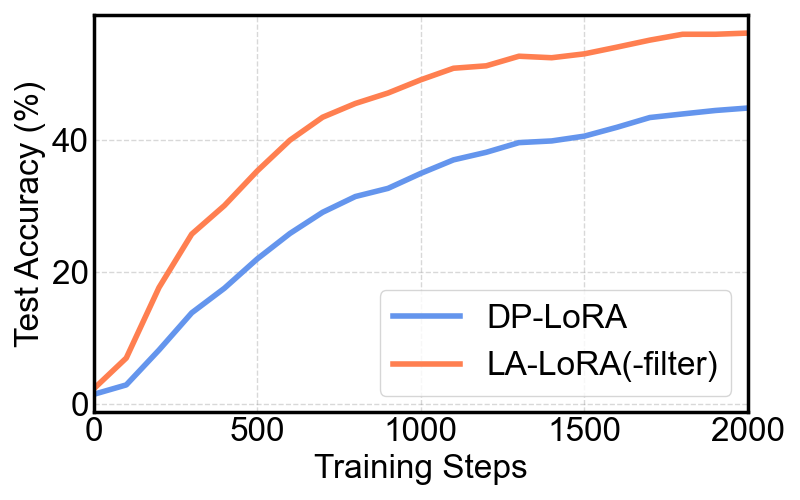}}
    \vspace{-3mm}
    \caption{Comparison of cosine similarity between $\nabla_A\mathcal{L}$ and $\nabla_B\mathcal{L}$, test loss and test accuracy for Swin-T on CIFAR-100 ($\epsilon=3$). LA-LoRA(-filter) uses local alternating updates without smoothing.} 
    \label{fig:challenge_1}
    \centering \vspace{-1ex}
\end{figure}

\subsection{Structural Amplification of DP Noise}
Differential privacy injects noise into local sample gradients to protect client data. Noise is added independently to $A$ and $B$. We ignore the LoRA scaling factor $s$. For client $i$, the resulting noisy low-rank term can be written as:
\begin{equation}
\begin{split}\label{noise_term}
(B_i + \mathcal{N}_{B_i})(A_i + \mathcal{N}_{A_i}) = B_i A_i + B_i \mathcal{N}_{A_i} + \mathcal{N}_{B_i} A_i + \mathcal{N}_{B_i} \mathcal{N}_{A_i},
\end{split}
\end{equation}
where $\mathcal{N}_{A_i}$ and $\mathcal{N}_{B_i}$ are Gaussian noises independently sampled per client $i$. This decomposition reveals that the update $B_iA_i$ is perturbed by three terms: $B_i \mathcal{N}_{A_i}$, $\mathcal{N}_{B_i} A_i$ and $\mathcal{N}_{B_i} \mathcal{N}_{A_i}$. The first two are linear in the noise, while $\mathcal{N}_{B_i} \mathcal{N}_{A_i}$ is non-Gaussian and introduces quadratic amplification effect. This structural cascade increases the variance of the aggregated updates and hinders convergence. 

\begin{wrapfigure}{r}{0.35\textwidth}
  \centering
  \includegraphics[width=0.32\textwidth]{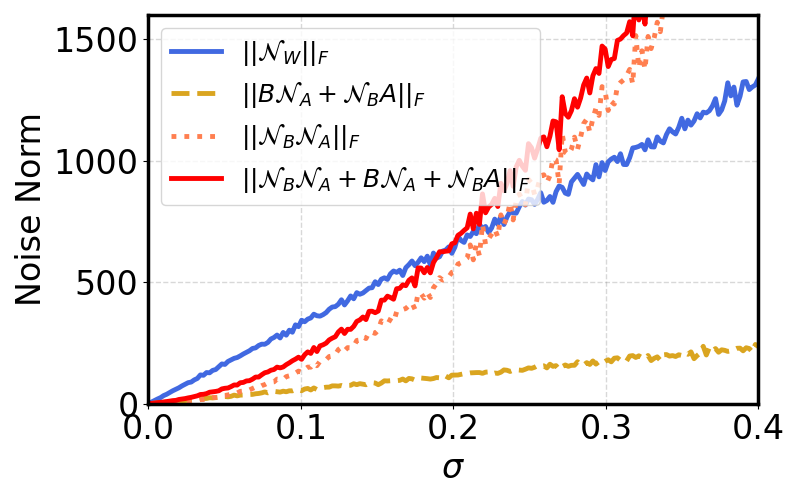}
  \caption{Scaling of perturbation Frobenius norms with $\sigma$ on QNLI.}
  \label{fig:challenge_2}
\end{wrapfigure}

We consider a synthetic example. In our setting, $W \in \mathbb{R}^{1024 \times 1024}$ with the dataset QNLI and other configurations described in Section~\ref{sec6}. As the Gaussian noise scale $\sigma$ changes, we report the Frobenius norms of the induced perturbations. Figure~\ref{fig:challenge_2} indicates that for small $\sigma$, LoRA ($\|\mathcal{N}_B \mathcal{N}_A + B \mathcal{N}_A + \mathcal{N}_B A\|_F$) incurs smaller perturbations than full-model ($\|\mathcal{N}_W\|_F$) updates. As $\sigma$ increases, the multiplicative term ($\|\mathcal{N}_B \mathcal{N}_A\|_F$) grows quadratically and becomes the leading contribution, causing the total LoRA perturbation to exceed the full-model curve.

\subsection{Sharpness in Global Aggregation}
A third distinct challenge arises after global aggregation. It is well established that the geometry of the loss landscape critically affects generalization. Convergence to flat minima, regions with low curvature, has been shown to promote robustness and better generalization, while sharp minima correlate with overfitting and instability~\citep{hochreiter1997flat, kaddour2022flat}. 


We observe that this issue is aggravated in LoRA-based DPFL. In parameter space, each client contributes low-rank factors $A_i$ and $B_i$, and the server aggregates them under naive FedAvg. Unlike full-parameter updates, low-rank factor aggregation alters the geometry of the global update in parameter space. Specifically, heterogeneous factor directions across clients do not align, and their composition produces global updates that consistently land in narrower, high-curvature regions of the landscape. The problem is exacerbated by DP noise, which injects stochastic perturbations that destabilize the already misaligned updates.

\begin{wrapfigure}[12]{r}{0.52\textwidth} 
  \vspace{-\baselineskip}
  \centering
  \begin{subfigure}{0.48\linewidth}
    \centering
    \includegraphics[width=\linewidth]{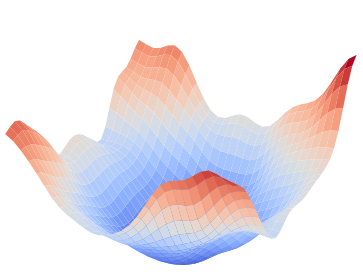}
    \caption{DP-LoRA, $\epsilon{=}1$.}
  \end{subfigure}\hfill
  \begin{subfigure}{0.48\linewidth}
    \centering
    \includegraphics[width=\linewidth]{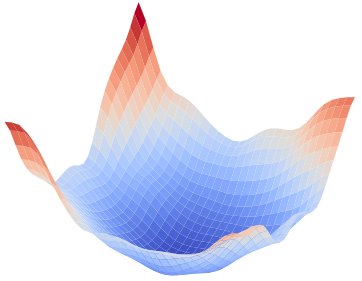}
    \caption{LA-LoRA(-filter), $\epsilon{=}1$.}
  \end{subfigure}
  \caption{Comparison of global loss landscapes for fine-tuning Swin-T model on CIFAR-100.}
  \label{fig:challenge_3}
\end{wrapfigure}

Figure~\ref{fig:challenge_3} visualizes the loss landscapes of Swin-T trained with DP-LoRA and LA-LoRA(-filter) under $\epsilon=1$. DP-LoRA produces a sharper and more irregular landscape, while LA-LoRA(-filter) yields a flatter and smoother basin. This suggests that simultaneous updates in DPFL may hinder generalization by increasing the curvature of the aggregated model. Table~\ref{result_hessian} further confirms this sharpness issue via Hessian eigenvalue analysis. See Section~\ref{sec6} for experimental setup.

\section{Our Method}

We address the aforementioned challenges from two complementary perspectives. At the \textbf{optimization level}, we break the tight dependency between the two low-rank factors so that gradients are decoupled, cross-noise terms are avoided, and the update trajectory remains smoother. At the \textbf{pre-aggregation level}, we further suppress residual variance by filtering out high-frequency components of DP perturbations on each client before aggregation, effectively smoothing noise and improving generalization. Building on these ideas, we develop LA-LoRA (\textbf{L}ocal \textbf{A}lternating \textbf{Lo}w-\textbf{R}ank \textbf{A}daptation), illustrated in Figure~\ref{fig:LA-LORA}. 
It combines (i) a \textbf{local alternating update strategy} for the first perspective, and (ii) an \textbf{optional Gaussian low-pass filter} for the second, jointly improving stability and consistency under DPFL.

\begin{figure}[h]
\centering
\includegraphics[width=0.92\textwidth]{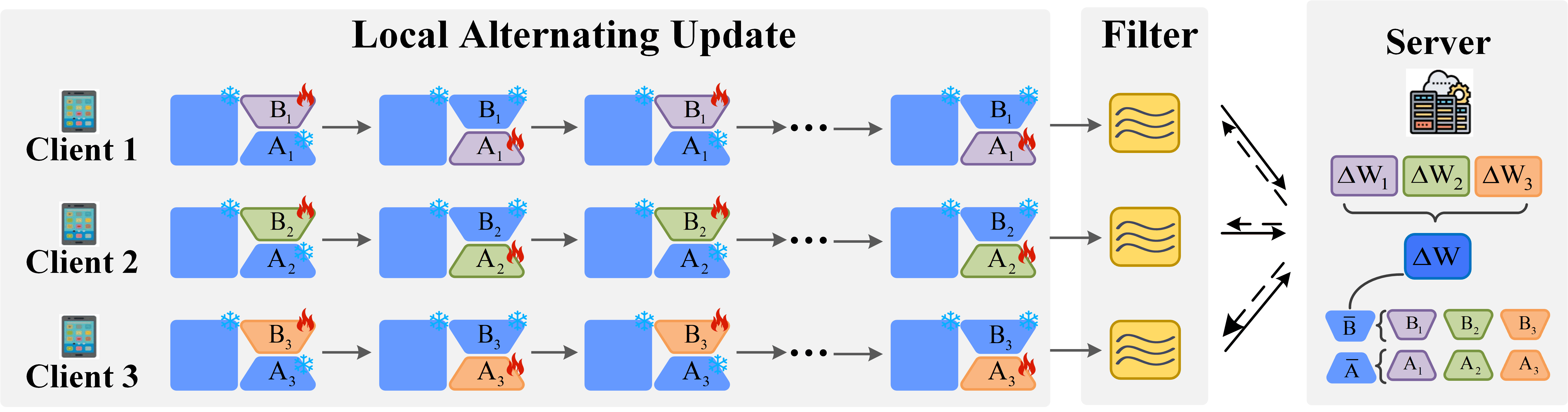}

\caption{Our LA-LoRA framework.}
\label{fig:LA-LORA}

\end{figure}

\subsection{Local Alternating Update Strategy}
Instead of simultaneously updating both $A$ and $B$, LA-LoRA adopts an alternating scheme, where the two low-rank matrices are updated in turn within each local training round. For client $i$ at local step $k$ of communication round $t$, we keep one factor fixed and update the other:
\begin{itemize}
    \item Update $B_i$ if $k$ is odd, keeping $A_i$ fixed: 
    $B_{i,k+1}^{t} = B_{i,k}^{t} - \eta_B \nabla_B \mathcal{L}_i(W_0 + sB_{i,k}^{t} A_{i,k}^{t})$.
    \item Update $A_i$ if $k$ is even, keeping $B_i$ fixed: 
    $A_{i,k+1}^{t} = A_{i,k}^{t} - \eta_A \nabla_A \mathcal{L}_i(W_0 + sB_{i,k}^{t} A_{i,k}^{t})$.
\end{itemize}
Here, $W_0$ denotes the frozen backbone from the server. $\mathcal{L}_i(\cdot)$ is the local training loss on client $i$'s private dataset $\mathcal{D}_i$, $\eta_A$, $\eta_B$ are learning rates, and $s$ is the LoRA scaling. Gradients $\nabla_A\mathcal{L}_i$ and $\nabla_B\mathcal{L}_i$ are taken with respect to the corresponding low-rank matrices.

This local alternating design addresses the three challenges outlined in Section~\ref{sec:3}:

\textbf{Challenge 1 (Gradient coupling)}. By updating one matrix at a time, the gradients become conditionally independent,  which alleviates the tightly coupled dynamics described in Eq.~(\ref{lora_gradeint}):  
\begin{align}
\label{LA_update}
   \nabla_B \mathcal{L}_i = s(\nabla_W \mathcal{L}_i) A^\top, \quad
\nabla_A \mathcal{L}_i = sB^\top (\nabla_W \mathcal{L}_i).
\end{align}

\textbf{Challenge 2 (Amplified DP noise)}. Since only one matrix is updated and privatized at each step, the multiplicative noise term $\mathcal{N}_{B_i} \mathcal{N}_{A_i}$ in Eq.~(\ref{noise_term}) is eliminated. The perturbed update reduces to:
\begin{align}
\begin{cases}
(B_i \!+\! \mathcal{N}_{B_i}) A_i \!=\!\!\ B_i A_i\! + \!\mathcal{N}_{B_i} A_i, & \!\!\!\!\!\!\quad\text{if } A_i \text{ is fixed}, \\
B_i (A_i \!+\! \mathcal{N}_{A_i})\! =\!\! B_i A_i\! +\! B_i \mathcal{N}_{A_i}, & \!\!\!\!\!\!\!\quad\text{if } B_i \text{ is fixed}.
\end{cases}
\end{align}

\textbf{Challenge 3 (Sharp global solutions)}. Local alternating updates constrain each step to a structured lower-dimensional subspace, the column space of $A_i$ or the row space of $B_i$. This implicit regularization suppresses sensitivity to stochastic noise and client heterogeneity, yielding flatter and more stable global solutions. We report the maximum Hessian eigenvalue, which characterizes the steepest curvature of the loss surface~\citep{sagun2017empirical, grosse2016kronecker}. As shown in Table~\ref{result_hessian}, LA-LoRA consistently achieves smaller eigenvalues than DP-LoRA across datasets and privacy settings, indicating a flatter loss landscape and more stable training dynamics.


\begin{wraptable}{r}{0.45\textwidth} 
\small
\vspace{-5mm}
\centering
\caption{\small Maximum Hessian eigenvalue of Swin-B on CIFAR-100 and Tiny-ImageNet.}
\setlength{\tabcolsep}{3pt}
\begin{tabular}{l|cc}
\toprule
\textbf{Method} & \textbf{CIFAR-100} & \textbf{Tiny-ImageNet} \\
\midrule
DP-LoRA $\epsilon{=}\infty$ & 42.45  & 44.80 \\
LA-LoRA $\epsilon{=}\infty$ & 30.12  & 33.25 \\
DP-LoRA $\epsilon{=}1$      & 101.62 & 115.36 \\
LA-LoRA $\epsilon{=}1$      & 64.77  & 69.53 \\
\bottomrule
\end{tabular}
\label{result_hessian}
\end{wraptable}

\subsection{Smoothing with a Low-Pass Filter}
\label{sec:lowpass}

To further improve training stability under DP, LA-LoRA introduces an optional low-pass smoothing filter applied to the LoRA gradients before aggregation. DP noise often manifests as high-frequency perturbations, which destabilize local updates and amplify sharpness in global aggregation~\citep{zhang2024doppler}. Our goal is to attenuate these high-frequency components via a lightweight operation that leaves the model architecture and the privacy mechanism unchanged.

Specifically, we adopt a fixed 1D Gaussian kernel $G_s = \tfrac{1}{16}[1, 4, 6, 4, 1]$, i.e., the standard 5-tap binomial low-pass filter. For $A \in \mathbb{R}^{r \times n}$, smoothing is applied row-wise along the input feature dimension; for $B \in \mathbb{R}^{m \times r}$, smoothing is applied column-wise along the output dimension. The filter acts along meaningful input/output feature axes while keeping different low-rank components decoupled. Denoting by $\ast$ the 1D convolution with symmetric padding, the filtered gradients are
\begin{equation}
    \widehat\nabla_A \mathcal{L}_i[j, :] = G_s \ast \nabla_A \mathcal{L}_i[j, :],  \forall j \in [1,r], 
   \quad \widehat\nabla_B \mathcal{L}_i[:, j] = G_s \ast \nabla_B \mathcal{L}_i[:, j], \forall j \in [1,r],
\end{equation}
From an optimization perspective, filtering noisy gradients with $G_s$ can be interpreted as approximately imposing a one-dimensional smoothness regularizer along the filtered dimension, discouraging abrupt changes between neighboring entries and inducing a low-pass effect on the update.

In practice, the Gaussian kernel $G_s$ reduces DP-induced fluctuations while preserving the structural semantics encoded in $A$ and $B$. The operation stabilizes local updates and alleviates sharpness in global aggregation with negligible overhead, making it suitable for federated environments.
\vspace{-1mm}
\subsection{LA-LoRA Framework}
\vspace{-1mm}
\paragraph{Local client update.}
At global round $t\in [T]$, each selected client \(i\) fine-tunes only the LoRA factors \((A,B)\) on top of a frozen backbone \(W_0\) via the reparameterization \(W_0 + sBA\).
The factors are initialized as \({A}^{t}_{i,1}\!\leftarrow\!A^{t-1}\) and \({B}^{t}_{i,1}\!\leftarrow\!B^{t-1}\).
The client performs \(K\) local steps. At each step $k\in [K]$ it samples a mini-batch \(\mathcal{B}_i\subset\mathcal{D}_i\) of size \(\lfloor bR\rfloor\). $b$ is the local data sampling rate, $R$ is the size of $\mathcal{D}_i$.
The client performs alternating updates of $B$ and $A$ across local steps.

\noindent\textbf{Odd steps (update \(B\)).}
For each example \(j\in\mathcal{B}_i\), compute the per-example gradient
\begin{equation}
g_{ij} \;=\; \nabla_B \,\mathcal{L}_i\!\big(W_0 + sB^{t}_{i,k}A^{t}_{i,k},\, d^i_j\big),
\end{equation}
Perform per-example \(\ell_2\)-norm clipping with threshold $C$: \(g_{ij}\leftarrow g_{ij}/\max\{1,\lVert g_{ij}\rVert_2/C\}\). Aggregate over the mini-batch and inject Gaussian noise to ensure DP:
\begin{equation}
g_i \;=\; {1}/{(bR)}\sum\nolimits_{j\in\mathcal{B}_i} g_{ij} \;+\; {C}/{(bR)}\cdot \mathcal{N}(0,\sigma^2).
\end{equation}
Smooth the noisy gradient with a fixed operator \(G_s\) (low-pass filter) to obtain \(\widehat g_i = G_s * g_i\), then
\[
{B}^{t}_{i,k+1} \;=\; {B}^{t}_{i,k} - \eta_B\,\widehat g_i, 
\qquad
{A}^{t}_{i,k+1} \;=\; {A}^{t}_{i,k}.
\]

\noindent\textbf{Even steps (update \(A\)).}
Repeat the same procedure for \(A\) with learning rate $\eta_A$.

\noindent After \(K\) alternating steps, the client returns the locally updated (and smoothed) factors \(\big(A_i^{t}, B_i^{t}\big)\) for server-side aggregation.
Throughout, \(W_0\) remains frozen and only the low-rank adaptation \(BA\) is modified; gradient clipping with Gaussian noise provides privacy, while \(G_s\) suppresses high-frequency perturbations and stabilizes optimization.

\noindent\textbf{Server aggregation.}
The server averages client uploads:
\begin{equation}
    A^t = {1}/{|\mathcal{C}_t|} \sum\nolimits_{i \in \mathcal{C}_t} {A}_i^t,\quad B^t = {1}/{|\mathcal{C}_t|} \sum\nolimits_{i \in \mathcal{C}_t} {B}_i^t,
\end{equation}
and then updates the global model as \(W^t = W_0 + sB^t A^t\). After $T$ rounds, the final model is \( W^T = W_0 + sB^T A^T \).
Here, $\mathcal{C}_t$ is the set of participating clients at round $t$ with $|\mathcal{C}_t|=\lfloor qN\rfloor$, $q$ is the client sampling rate and $N$ is the total number of clients.

\section{Theoretical Analysis}  
In the following, we state the necessary theorems. Full theoretical details appear in Appendix~\ref{app:Theoretical}.
\begin{theorem} [Privacy guarantee] Following the privacy analysis of~\citep{noble2022differentially}, LA-LoRA ensures that after $T$ communication rounds with $K$ local steps per client, the weight matrix $W^T$ satisfies $(\epsilon, \delta)$-DP towards any third party:
\begin{equation} \label{eq:10}
\epsilon=\mathcal{O}\left({b \sqrt{T K \log (2 / \delta) \log (2 T / \delta)}}/{\sigma}\right).
\end{equation}
With respect to the server, after $T$ rounds, the accumulated privacy budget satisfies $(\epsilon_s, \delta_s)$-DP,
\begin{equation} 
\epsilon_s=\epsilon \sqrt{{N}/{q}}, \quad \delta_s={\delta}/{2}\left({1}/{q}+1\right).
\end{equation}
\label{thm:privacy}
\end{theorem}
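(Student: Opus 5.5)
The plan is to reduce LA-LoRA's privacy analysis to that of DP-FedAvg with local DP-SGD as treated by \citet{noble2022differentially}, and then inherit both bounds from there.

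\textbf{Step 1: the mechanism at one local step.} At step $k$ of round $t$, client $i$ updates only one factor, $B$ on odd steps and $A$ on even steps. The other factor is held fixed at its current value, which is itself a function of earlier privatized outputs. Conditioned on the past, the step is therefore a subsampled Gaussian mechanism:
\begin{itemize}
\item each example is included with Poisson-like rate $b$;
\item each per-example gradient (with respect to the active factor only) is clipped to norm $C$, so replacing one sample changes the summed gradient by at most $2C$, or $C$ in the add/remove relation;
\item noise $\mathcal{N}(0,C^2\sigma^2)$ is added.
\end{itemize}
The low-pass filter $G_s$, the learning-rate step, the frozen factor, the server averaging and the reconstruction $W^t=W_0+sB^tA^t$ are all deterministic functions of released noisy quantities. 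By post-processing they cost no privacy. This is the only point where LA-LoRA differs from the cited setting. The alternation changes which coordinates are released, but not the sensitivity or the noise, so each step has the same Rényi-DP curve as one DP-SGD step. I expect this to be the main point needing care: the function being privatized changes adaptively with $k$, because the active factor and the frozen factor both depend on past noise. I would handle it by invoking adaptive composition of RDP, which allows the mechanism at each step to be chosen based on previous outputs.

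\textbf{Step 2: the third-party bound.} A sample of client $i$ is touched only in rounds where client $i$ participates, and within a round in $K$ steps. Upper-bounding by participation in all $T$ rounds gives $TK$ subsampled Gaussian steps. For small $b$ and moderate $\sigma$, the amplification-by-subsampling RDP bound \citep{wang2019subsampled,mironov2017renyi} gives per-step RDP of order $\mathcal{O}(\alpha b^2/\sigma^2)$. Composing $TK$ times gives $\mathcal{O}(\alpha TKb^2/\sigma^2)$. Converting to $(\epsilon,\delta)$-DP via $\epsilon=\mathrm{RDP}(\alpha)+\log(1/\delta)/(\alpha-1)$ and optimizing over $\alpha$ yields $\epsilon=\mathcal{O}\big(b\sqrt{TK\log(1/\delta)}/\sigma\big)$. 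Following Noble et al.'s treatment, the extra $\log(2T/\delta)$ factor comes from splitting $\delta$ in two: half goes to a high-probability bound on the number of rounds, or tail of the subsampled batch sizes, where the client participates, union-bounded over $T$ rounds; the other half goes to the RDP conversion. This reproduces Eq.~\ref{eq:10}.

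\textbf{Step 3: the server bound.} The server sees each participating client's individual upload $(A_i^t,B_i^t)$ rather than only the average. Following \citet{noble2022differentially}, I would:
\begin{itemize}
\item argue that the server's view of client $i$ reduces to the per-client local mechanism observed over its participation rounds;
\item relate it to the third-party bound by noting that the third-party analysis amortizes over the $qN$ participating clients per round, and removing this averaging and client-subsampling amplification scales the noise-to-sensitivity ratio accordingly;
\item rescale $\epsilon$ in the square-root regime to obtain $\epsilon_s=\epsilon\sqrt{N/q}$;
\item account for the failure probability of the participation concentration event over the roughly $1/q$ rounds' worth of sampling, giving $\delta_s=\frac{\delta}{2}(1/q+1)$.
\end{itemize}
Since LA-LoRA's transmitted factors are post-processings of per-step noisy gradients, exactly as in that work, the constants carry over unchanged once Step 1 is established.
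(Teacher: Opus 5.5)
Your route is the paper's route. Its proof amounts to three things: invoking \citet{noble2022differentially}, recalling RDP composition and the RDP-to-$(\epsilon,\delta)$ conversion, and observing that the Gaussian filter is post-processing. Your Step~1 is more explicit than anything the paper writes: each local step is a subsampled Gaussian mechanism on the active factor only, with the same sensitivity and noise multiplier as a DP-SGD step, composed adaptively because the frozen factor depends on past releases. Step~2 is sound. The bound you obtain, $\mathcal{O}\bigl(b\sqrt{TK\log(1/\delta)}/\sigma\bigr)$, is already at most a constant times the stated $\epsilon$, since $\log(1/\delta)\le\log(2/\delta)$ and $\log(2T/\delta)\ge\log 2$. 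The $\delta$-splitting story you attach to the $\log(2T/\delta)$ factor is therefore not needed. As written it is a guess about Noble et al.\ rather than part of your argument, so drop it.

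Step~3, however, does not hold together. Your Step~2 bound is a per-client bound: it uses client $i$'s own noise multiplier $\sigma$ and charges it for all $T$ rounds. It involves no averaging over the $qN$ participating clients and no client-sampling amplification. There is therefore nothing to ``remove,'' and the rescaling by $\sqrt{N/q}$ has no derivation behind it. The same goes for the ``participation concentration over $1/q$ rounds'' account of $\delta_s$. In Noble et al.\ it is the third-party bound that benefits from aggregation and client sampling, which is why the server's budget is larger there. You imported that narrative without the corresponding gain in Step~2.

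The fix is short, because your Step~2 argument already controls everything the server observes. Given the released global state, each upload $(A_i^t,B_i^t)$ is a post-processing of at most $K$ privatized gradients of client $i$. The client-sampling pattern and the other clients' uploads do not depend on client $i$'s data. Hence the server's view is itself $(\epsilon,\delta)$-DP with the $\epsilon$ of Step~2. Since $\sqrt{N/q}\ge 1$ and $\frac{\delta}{2}\bigl(\frac1q+1\bigr)\ge\delta$ for $q\le 1$, the $(\epsilon_s,\delta_s)$ guarantee follows immediately from monotonicity of DP in its parameters.
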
 
\vspace{-4mm}
Noise is added to the clipped sample gradients before forming the client update, and the Gaussian low-pass filter is a deterministic linear transform of these noisy updates. By the post-processing invariance of DP~\citep{dwork2014algorithmic}, this filtering step does not weaken the guarantees in Theorem~\ref{thm:privacy}.

\begin{theorem} [Closed-form projected gradients]\label{thm_scale_gradient}
Let $B_k\in \mathbb{R}^{m\times r}$ and $A_k\in \mathbb{R}^{r\times n}$ be full rank, i.e., rank($B_k$) = rank($A_k$) = $r$, and let $s=\alpha/r>0$ denote the LoRA scaling. In LoRA, updates to $B_k$ and $A_k$ can be obtained by projecting the full gradient onto the column space of $A_k$ and the row space of $B_k$, respectively. Within iteration $k$, we update $B$ first and then $A$. This projection is formulated as a least-squares problem, whose unique solution yields: 
\begin{align}\label{scaled_gradient}
\begin{split}
\tilde{\nabla}_{B_k}\mathcal{L} \;=\; \frac{1}{s^2}\,\nabla_{B_k}\mathcal{L}\,(A_k A_k^\top)^{-1},\qquad
\tilde{\nabla}_{A_k}\mathcal{L} \;=\; \frac{1}{s^2}\,(B_{k+1}^\top B_{k+1})^{-1}\,\nabla_{A_k}\mathcal{L},
\end{split}
\end{align}
where $\nabla_{B_k}\mathcal{L}$, $\nabla_{A_k}\mathcal{L}$ are the gradients defined in Eq.~(\ref{lora_gradeint}). The projected gradients $\tilde{\nabla}_{B_k}\mathcal{L}$, $\tilde{\nabla}_{A_k}\mathcal{L}$ are obtained by solving the least-squares problem.
\end{theorem}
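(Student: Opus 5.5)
We prove the statement by writing each projection as an explicit least-squares problem in the factor being updated, with the other factor held fixed. This mirrors the alternating scheme of Section~4.1, in which $B$ is updated first within iteration $k$ and $A$ is then updated at the new point $B_{k+1}$. Write $G:=\nabla_W\mathcal{L}$ for the full gradient evaluated at $W_0+sB_kA_k$.

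\textbf{The $B$-step.} With $A_k$ fixed, a change $\Delta B$ in $B$ changes the weight by $s\,\Delta B\,A_k$. The projected gradient $\tilde\nabla_{B_k}\mathcal{L}$ is defined as the matrix $X\in\mathbb{R}^{m\times r}$ for which $sXA_k$ best approximates $G$ in Frobenius norm:
\[
\tilde\nabla_{B_k}\mathcal{L}=\arg\min_{X}\ \|sXA_k-G\|_F^2 .
\]
This objective is a convex quadratic in $X$. Setting its derivative to zero gives the normal equation
\[
s^2 XA_kA_k^\top = sGA_k^\top .
\]
By Eq.~(\ref{lora_gradeint}), the right-hand side equals $\nabla_{B_k}\mathcal{L}$.

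Since $\mathrm{rank}(A_k)=r$, the Gram matrix $A_kA_k^\top\in\mathbb{R}^{r\times r}$ is positive definite. Hence the objective is strictly convex, its minimizer is unique, and
\[
X=\tfrac{1}{s^2}\nabla_{B_k}\mathcal{L}\,(A_kA_k^\top)^{-1}.
\]
Note that $sXA_k=GA_k^\top(A_kA_k^\top)^{-1}A_k$ is the orthogonal projection of the rows of $G$ onto the row space of $A_k$. This confirms the geometric reading of the step as a projection. The statement's phrase ``column space of $A_k$'' corresponds to the column space of $A_k^\top$.

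\textbf{The $A$-step.} We fix $B_{k+1}$, which is the already-updated factor, consistent with the ordering in the statement. We assume $B_{k+1}$ retains full column rank $r$. The projected gradient is
\[
\tilde\nabla_{A_k}\mathcal{L}=\arg\min_Y\ \|sB_{k+1}Y-G\|_F^2 .
\]
The normal equation is
\[
s^2B_{k+1}^\top B_{k+1}Y=sB_{k+1}^\top G,
\]
and the right-hand side is $\nabla_{A_k}\mathcal{L}$ computed with $B_{k+1}$ in Eq.~(\ref{lora_gradeint}). Invertibility of $B_{k+1}^\top B_{k+1}$ gives the unique minimizer
\[
Y=\tfrac{1}{s^2}(B_{k+1}^\top B_{k+1})^{-1}\nabla_{A_k}\mathcal{L}.
\]
The corresponding weight change $sB_{k+1}Y=B_{k+1}(B_{k+1}^\top B_{k+1})^{-1}B_{k+1}^\top G$ is the projection of $G$ onto the column space of $B_{k+1}$.

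\textbf{Main obstacle.} The linear algebra above is routine. The delicate point is the rank hypothesis for the $A$-step. The statement assumes full rank of $B_k$, whereas the $A$-step needs full rank of $B_{k+1}$. We will either state that hypothesis explicitly for $B_{k+1}$, or argue that it holds for a sufficiently small step size, since full rank is an open condition under the perturbation $B_k\mapsto B_k-\eta_B\,\tilde\nabla_{B_k}\mathcal{L}$.

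A second point needs care. The gradient $\nabla_{A_k}\mathcal{L}$ in the conclusion must be evaluated with $B_{k+1}$, and $G$ should be either the gradient at the intermediate point or the gradient at $(B_k,A_k)$. We will fix this convention so that the identification with Eq.~(\ref{lora_gradeint}) is exact.
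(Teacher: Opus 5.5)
Your proposal is correct and follows the paper's proof essentially step for step. Like the paper, you write the normal equations $s^2XA_kA_k^\top = sGA_k^\top$ and $s^2B_{k+1}^\top B_{k+1}Y = sB_{k+1}^\top G$, invert the $r\times r$ Gram matrices, and substitute the LoRA gradient identities from Eq.~(\ref{lora_gradeint}).

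Your two caveats are well placed and sharper than the paper. The paper silently assumes $B_{k+1}$ has full rank. Its proof also uses $\nabla_{W_k}\mathcal{L}$ in the $A$-step, even though its least-squares problem is posed with $\nabla_{W_{k+\frac{1}{2}}}\mathcal{L}$. Taking $G$ at the intermediate point, with $\nabla_{A_k}\mathcal{L}$ evaluated at $B_{k+1}$, is the convention that makes the identification exact.
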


Under the full-rank assumption, Theorem~\ref{thm_scale_gradient} yields closed-form projected gradients that use only the local parameter gradients $\nabla_{A_k}\mathcal{L}$ or $\nabla_{B_k}\mathcal{L}$ plus an $r\times r$ solve. This avoids the full model gradient $\nabla_W\mathcal{L}$. In practice, the computation reduces to forming the small Gram matrices $A_kA_k^\top$ or $B_{k+1}^\top B_{k+1}$ and solving a small $r\times r$ system, which is lightweight when $r\ll \min\{m,n\}$.

\begin{theorem} [Stable feature learning]\label{thm_efficient_learning}
Assume that, for the input $x$, $BAx$ has dimension $\mathcal{O}(n)$. In LA-LoRA, if we use the learning rate $\eta = \mathcal{O}(1)$ to update $B$ and $A$, it would achieves stable feature learning. Moreover, the model update achieves stable feature learning as well with  
\begin{equation}  
\label{w_update_lora_alt}
\begin{split}
    W_{k+1} =& W_{k} -  \eta (\nabla_{W_{k}} \mathcal{L})Proj_{r(A_k)} - \eta Proj_{c(B_{k+1})}(\nabla_{W_{k+\frac{1}{2}}} \mathcal{L}).
\end{split}
\end{equation}
where $Proj_{r(A_k)}$ denotes the orthogonal projection onto the \emph{row} space of \(A_k\), and $Proj_{c(B_{k+1})}$ denotes the orthogonal projection onto the \emph{column} space of \(B_{k+1}\). Besides, $\eta (\nabla_{W_{k}}\mathcal{L})Proj_{r(A_k)}, \eta Proj_{c(B_{k+1})}(\nabla_{W_{k+\frac{1}{2}}} \mathcal{L}) \in \mathcal{O}(1).$ However, when doing joint update, the update will introduce additional across term $$\eta^2 (B_k^\top B_k)^{-1}B_k^\top(\nabla_{W_{k}} \mathcal{L}) (\nabla_{W_k}\mathcal{L}) A_k^\top( A_k A_k^\top)^{-1}\in \mathcal{O}(1).$$ The across term is indeed the second order term w.r.t $\eta$, but it is same magnitude as $\eta Proj_{c(B_k)}(\nabla_{W_{k}}\mathcal{L})$ and $\eta (\nabla_{W_{k}}\mathcal{L})Proj_{r(A_k)}$ in infinite-width NN setting.
\end{theorem}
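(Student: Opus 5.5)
The plan is to combine the closed-form projected gradients of Theorem~\ref{thm_scale_gradient} with the standard $\gamma$-scaling heuristics for infinite-width networks (as in LoRA+ and RoLoRA-type analyses). In this framework a quantity ``scales as $\Theta(n^\gamma)$'' in the width $n$. Stable feature learning means that, with $\eta=\mathcal{O}(1)$, each update contributes $\Theta(1)$ to the features $BAx$: it neither vanishes nor blows up.

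\textbf{Step 1: the weight increment of each half-step.} Work in the alternating scheme, with $B$ updated first and then $A$, and use the scaled gradients $\tilde\nabla_{B_k}\mathcal{L}$ and $\tilde\nabla_{A_k}\mathcal{L}$.
\begin{itemize}[left=0pt]
\item \emph{$B$ half-step.} By Eq.~(\ref{lora_gradeint}), $\nabla_{B_k}\mathcal{L}=s(\nabla_{W_k}\mathcal{L})A_k^\top$. The $B$ step then changes $W$ by
\begin{equation*}
-\eta s\,\tilde\nabla_{B_k}\mathcal{L}\,A_k=-\eta(\nabla_{W_k}\mathcal{L})A_k^\top(A_kA_k^\top)^{-1}A_k=-\eta(\nabla_{W_k}\mathcal{L})Proj_{r(A_k)},
\end{equation*}
with the powers of $s$ cancelling up to a constant absorbed into $\eta$.
\item \emph{$A$ half-step.} Here the gradient is evaluated at $W_{k+\frac12}=W_0+sB_{k+1}A_k$. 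The $A$ step changes $W$ by
\begin{equation*}
-\eta s B_{k+1}\tilde\nabla_{A_k}\mathcal{L}=-\eta B_{k+1}(B_{k+1}^\top B_{k+1})^{-1}B_{k+1}^\top\nabla_{W_{k+\frac12}}\mathcal{L}=-\eta\,Proj_{c(B_{k+1})}\nabla_{W_{k+\frac12}}\mathcal{L}.
\end{equation*}
\end{itemize}
Summing the two half-steps gives Eq.~(\ref{w_update_lora_alt}) exactly. Because the two steps are sequential, no product of increments appears.

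\textbf{Step 2: magnitude of each term.} Write $\nabla_W\mathcal{L}=g\,x^\top$, where $g=\partial\mathcal{L}/\partial(Wx)$ has $\Theta(1)$ entries, $x$ has $\Theta(1)$ entries, and $x^\top x=\Theta(n)$. Orthogonal projections have operator norm $1$. Applying the $B$-half-step increment to $x$ gives $-\eta\, g\,(Proj_{r(A_k)}x)^\top x$. The inner product $(Proj_{r(A_k)}x)^\top x$ equals $\|Proj_{r(A_k)}x\|^2$. This is $\Theta(n)$ under the assumption that $BAx$ is $\mathcal{O}(n)$, i.e.\ that $x$ has a nonvanishing component in the $r$-dimensional row space. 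After the standard normalization of $\nabla_W\mathcal{L}$ by $1/n$, the resulting feature change is $\Theta(1)$ per entry for $\eta=\mathcal{O}(1)$. The same argument applies to the column-space projection. Hence both increments are $\mathcal{O}(1)$, and learning is stable in both factors.

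The key point is that the preconditioners $(A_kA_k^\top)^{-1}$ and $(B_{k+1}^\top B_{k+1})^{-1}$ cancel the width-dependent scales of $A$ and $B$. This cancellation is what allows a single $\eta=\mathcal{O}(1)$ for both factors.

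\textbf{Step 3: the joint-update cross term.} If $B$ and $A$ are updated simultaneously from $(B_k,A_k)$, the new product is
\begin{equation*}
(B_k-\eta s\tilde\nabla_B)(A_k-\eta s\tilde\nabla_A).
\end{equation*}
Expanding it yields the two projected terms, evaluated at $W_k$ rather than $W_{k+\frac12}$, plus the cross term
\begin{equation*}
\eta^2 s^2\,\tilde\nabla_B\tilde\nabla_A=\eta^2(B_k^\top B_k)^{-1}B_k^\top(\nabla_{W_k}\mathcal{L})(\nabla_{W_k}\mathcal{L})A_k^\top(A_kA_k^\top)^{-1}
\end{equation*}
up to the ordering and notation of the statement. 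To size it, substitute $\nabla_W\mathcal{L}=gx^\top$. The middle product $gx^\top gx^\top$ contains the scalar $x^\top g$, which is $\Theta(n)$ for $\Theta(1)$ entries. The outer factors $(B^\top B)^{-1}B^\top$ and $A^\top(AA^\top)^{-1}$ are pseudo-inverses, and they contribute inverse width scales matching those in Step 2. The resulting count shows the cross term is $\Theta(1)$, the same order as the first-order terms, even though it carries $\eta^2$.

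\textbf{Main obstacle.} I expect Step 3's exponent bookkeeping to be the hardest part. One must fix consistent width scalings for $A$, $B$, $x$, and $g$ (e.g.\ $A$ entries $\Theta(n^{-1/2})$ and $B$ entries $\Theta(1)$) and verify that the $\Theta(n)$ gain from the inner product is not offset by the pseudo-inverse factors. I would fix these scalings up front, following the $\gamma$-operator convention, and check each factor's exponent explicitly. I would treat $r$ as $\mathcal{O}(1)$ and assume generic alignment, so that no inner product vanishes.
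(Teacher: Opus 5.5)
Your Step~1 is exactly the paper's proof. The appendix substitutes the least-squares solutions of Theorem~\ref{thm_scale_gradient} into the two half-steps, obtains $-\eta(\nabla_{W_k}\mathcal{L})Proj_{r(A_k)}$ and $-\eta\,Proj_{c(B_{k+1})}(\nabla_{W_{k+\frac12}}\mathcal{L})$, and adds them. The powers of $s$ cancel exactly, so nothing has to be absorbed into $\eta$. The paper stops there: it gives no argument for the $\mathcal{O}(1)$ claims or for the cross term. Your Steps~2--3 therefore go beyond it, but Step~3 does not work as written.

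\textbf{The cross-term identity is false.} ``Up to ordering'' is not a harmless caveat. Use your Step~1 conventions: $\Delta B=-\eta\tilde\nabla_{B_k}\mathcal{L}$, $\Delta A=-\eta\tilde\nabla_{A_k}\mathcal{L}$, weight change $s\,\Delta B\,\Delta A$. (Your Step~3 quietly switches to $-\eta s\tilde\nabla$, which only changes constants.) Then the second-order term of the joint update is
\[
\frac{\eta^2}{s}\,(\nabla_{W_k}\mathcal{L})\,A_k^{+}B_k^{+}\,(\nabla_{W_k}\mathcal{L}),\qquad A_k^{+}=A_k^\top(A_kA_k^\top)^{-1},\quad B_k^{+}=(B_k^\top B_k)^{-1}B_k^\top ,
\]
which is an $m\times n$ matrix. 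The statement's expression $B_k^{+}(\nabla_{W_k}\mathcal{L})(\nabla_{W_k}\mathcal{L})A_k^{+}$ equals $s^2\,\tilde\nabla_{A_k}\mathcal{L}\,\tilde\nabla_{B_k}\mathcal{L}$, i.e. the same two factors in the wrong order. It is an $r\times r$ matrix and is only defined when $m=n$. The scalar $x^\top g$ that drives your count is an artifact of this misordering: it pairs the input with the output-side gradient, and it does not occur in the true term.

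\textbf{What the true term requires.} With $\nabla_{W_k}\mathcal{L}=gx^\top$, the true term applied to $x$ is $\frac{\eta^2}{s}\,g\,(x^\top A_k^{+}B_k^{+}g)\,\|x\|^2$. The factor to control is $x^\top A_k^{+}B_k^{+}g$. It is of order $\|Proj_{r(A_k)}x\|\,\|Proj_{c(B_k)}g\|/(\sigma_A\sigma_B)$, where $\sigma_A,\sigma_B$ are the typical singular values of $A_k,B_k$ (assuming both are well conditioned). Now impose your Step~2 scalings: $\|Proj_{r(A_k)}x\|^2=\Theta(n)$, $g$ of entry size $1/n$ lying essentially in the column space of $B_k$, and $m\asymp n$. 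Then:
\begin{itemize}
\item the two projected terms have entry size $\eta$;
\item the cross term has entry size $\eta^2/(\sigma_A\sigma_B)$;
\item $B_kA_kx$ has entry size $\sigma_A\sigma_B$.
\end{itemize}
So the cross term matches the first-order terms precisely when $\sigma_A\sigma_B=\Theta(1)$, i.e. when the features $B_kA_kx$ are $\Theta(1)$. That is a hypothesis you must state and use; it is not supplied by the pseudo-inverses ``matching'' Step~2. Your own example scalings ($A$ entries $\Theta(n^{-1/2})$, $B$ entries $\Theta(1)$) show this: they give $\sigma_A\sigma_B\asymp\sqrt{n}$, hence a cross term of order $\eta^2 n^{-1/2}$, not $\Theta(1)$.

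\textbf{Step~2.} The $1/n$ size of $g$ must be an explicit hypothesis from the outset, not a later ``normalization''. With $\Theta(1)$ entries and $\|Proj_{r(A_k)}x\|^2=\Theta(n)$, your formula gives a $\Theta(\eta n)$ feature change. The $A$ half-step also needs $Proj_{c(B_{k+1})}g$ to be comparable to $g$. That is an additional alignment assumption, not ``the same argument''.
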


In Theorem \ref{thm_efficient_learning}, ours achieve stable feature learning. Moreover, as the joint update would introduce the cross term with an unignorable magnitude (especially $\eta$ is $\mathcal{O}(1)$ instead of $\mathcal{O}(1/n$)), simultaneous update with scaled gradient descent breaks the clean interpretation of projecting the full gradient onto low-rank subspaces and degrade the performance as our experiment studies show later.

\begin{theorem} [Convergence rate]\label{thm_convergence_analysis}
    Assume for any $i\in [P]$ the matrix $C_i = D_i X$ satisfies the rank $r$-RIP with constant $\delta_r$ (Assumption \ref{assumption_rip}) and $0\leq \eta \leq \frac{1}{1+\delta_r+\frac{1}{P}}$, then LA-LoRA  without momentum solves the over-parameterized problem leads to
    \begin{align}
\mathcal{L}_c(\boldsymbol{B}_{k+1},\boldsymbol{A}_{k+1}) &\leq (1- \eta_c)^2 \mathcal{L}_c(\boldsymbol{B}_k,\boldsymbol{A}_k),\\
        \left\|\sum_i^P B_k^i A_k^i - X_\star \right\|_F^2 &\leq \frac{1+\delta_r}{1-\delta_r} (1-\eta_c)^{2k}\left\|\sum_i^P B_0^i A_0^i - X_\star \right\|_F^2 ,
    \end{align}
where $\eta_c =  2P(1-\delta_r)\left(\eta - \frac{\eta^2(1+\delta_r+\frac{1}{P})}{2}\right)$.
\end{theorem}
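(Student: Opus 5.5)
The plan is to follow the standard RIP-based analysis of alternating least squares for matrix sensing, in the spirit of scaled gradient descent (ScaledGD) and RoLoRA-type convergence proofs. I take the setting of Assumption~\ref{assumption_rip} to be: client $i$ has the linear measurement operator $C_i = D_i X$; the local loss is $\mathcal{L}_i(B^i,A^i)=\tfrac12\|\mathcal{C}_i(B^iA^i)-y_i\|^2$; and the global objective $\mathcal{L}_c$ couples the clients through $\sum_i B^iA^i - X_\star$. The update analyzed is the projected (scaled) gradient of Theorem~\ref{thm_scale_gradient}, applied first to $B$ and then to $A$. The argument has three steps.

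\textbf{Step 1: the half-step in $B$.} Because $A_k$ is fixed, the update of $B$ is a preconditioned gradient step on a quadratic in $B$. Using Theorem~\ref{thm_scale_gradient}, the change in the product is $\Delta_B = -\eta\,\nabla_W\mathcal{L}\,\mathrm{Proj}_{r(A_k)}$, which matches the first projection term in \eqref{w_update_lora_alt} of Theorem~\ref{thm_efficient_learning}. I would expand $\mathcal{L}_c(B_{k+1},A_k)$ exactly as a quadratic in $\eta$ and handle the two terms separately.
\begin{itemize}
\item \textbf{Linear term.} Apply the RIP lower bound $(1-\delta_r)\|M\|_F^2\le\|\mathcal{C}_i(M)\|^2$ to the rank-$\le r$ matrix $(\sum_i B^iA^i - X_\star)\mathrm{Proj}_{r(A_k)}$, summed over the $P$ clients. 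This yields a decrease of $2P(1-\delta_r)\eta\,\mathcal{L}_c$.
\item \textbf{Quadratic term.} Bound it by the RIP upper constant $(1+\delta_r)$. The extra $1/P$ comes from cross-client aggregation: each client's update enters the average with weight $1/P$, and Cauchy--Schwarz across clients adds $\|\cdot\|^2/P$.
\end{itemize}
Together these give $\mathcal{L}_c(B_{k+1},A_k)\le(1-\eta_c)\mathcal{L}_c(B_k,A_k)$, with $\eta_c$ exactly as stated.

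\textbf{Step 2: the half-step in $A$.} The $A$-update with $B_{k+1}$ fixed is symmetric to Step 1. It uses $\mathrm{Proj}_{c(B_{k+1})}$ and the same two RIP inequalities, and it contracts $\mathcal{L}_c$ by another factor $(1-\eta_c)$. Multiplying the two half-steps gives the one-step bound $(1-\eta_c)^2$. The step-size condition $\eta\le 1/(1+\delta_r+1/P)$ guarantees $0\le\eta_c$. One also needs $\eta_c\le1$ so the factor is a genuine contraction; I would check this via $\eta-\eta^2c/2\le 1/(2c)$ with $c=1+\delta_r+1/P$ and $\delta_r<1$.

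\textbf{Step 3: from loss to parameter error.} Unroll the one-step bound over $k$ iterations. Then sandwich with the RIP inequalities: bound $\mathcal{L}_c(B_0,A_0)$ above by $(1+\delta_r)\|\cdot\|_F^2$ and $\mathcal{L}_c(B_k,A_k)$ below by $(1-\delta_r)\|\cdot\|_F^2$. This gives the stated $\frac{1+\delta_r}{1-\delta_r}(1-\eta_c)^{2k}$ bound. The second inequality carries $(1-\eta_c)^{2k}$ rather than $(1-\eta_c)^{k}$, which may reflect a loss measured in square-root form or a normalization of $\mathcal{L}_c$; I would fix the definition of $\mathcal{L}_c$ so that the exponents are consistent.

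\textbf{Main obstacle.} The hardest step is the linear term in Step 1. The decrease must be proportional to the full loss $\mathcal{L}_c$, not merely to its component projected onto the row space of $A_k$. In the over-parameterized regime this requires that $\mathrm{Proj}_{r(A_k)}$ captures all of the residual's row space, for example because the summed client factors span the range of $X_\star$ and the residual. I would first try to use over-parameterization ($P r\ge\mathrm{rank}(X_\star)$) together with the full-rank assumption of Theorem~\ref{thm_scale_gradient}, arguing that the aggregated projections act as the identity on the residual. Failing that, I would state this spanning property as an explicit invariant and verify that it is preserved under the alternating updates.
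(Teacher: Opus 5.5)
Your outline is the paper's own argument, with the two half-steps done in the opposite order, which is immaterial:
\begin{itemize}
\item a descent inequality for each half-step: expand the square, so the term linear in $\eta$ is the sum over $i$ of scaled block-gradient norms and the quadratic term is bounded via RIP;
\item a gradient-dominance inequality that turns this into a factor $(1-\eta_c)$;
\item the product of the two half-steps;
\item an RIP sandwich.
\end{itemize}

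Two ingredients are misidentified. First, $i$ indexes the activation patterns $D_i$, not clients. Moreover, $\mathcal{L}_c=\frac12\|\sum_i C_i(B^iA^i-X_\star)\|_F^2$ is a plain sum with no $1/P$ weights, so Cauchy--Schwarz over the $P$ blocks costs a factor $P$, not $1/P$. The paper gets its $1/P$ only from Assumption~\ref{assumption_CC}, a bound on $\|C_i^\top C_j\|_2$ for $i\neq j$. That assumption is not in the theorem's hypotheses but is used in the proof, and you would need it too.

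Second, and decisively, the step you flag as the main obstacle is exactly the paper's Lemma~\ref{lemma_grad_value}: $\|\nabla_{B^i}\mathcal{L}_c\|^2_{P^\star_{A^i}}\ge 2(1-\delta_r)\mathcal{L}_c$ for every $i$. The paper does not prove this; it imports it from Lemma~6 of \cite{liu2025efficient}. It cannot be derived from the stated assumptions, because it is false.

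It fails in two independent ways.

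\textbf{(i) Double counting.} Your decrease $2P(1-\delta_r)\eta\,\mathcal{L}_c$ lower-bounds each of the $P$ block terms by the \emph{whole} loss. Take the $C_i$ to be isometries with mutually orthogonal ranges, so $\delta_r=0$, $C_i^\top C_j=0$, and both assumptions hold. Then $C_i^\top\sum_j C_j(B^jA^j-X_\star)=E_i:=B^iA^i-X_\star$ and $\mathcal{L}_c=\frac12\sum_i\|E_i\|_F^2$. The linear term is therefore $\eta\sum_i\|E_i\,\mathrm{Proj}_{r(A^i)}\|_F^2\le 2\eta\,\mathcal{L}_c$, never $2P\eta\,\mathcal{L}_c$. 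This is also why your check of $\eta_c\le 1$ fails. Maximizing $\eta-\eta^2c/2$ with $c=1+\delta_r+1/P$ only gives $\eta_c\le P(1-\delta_r)/c$. That exceeds $1$, e.g.\ for $P=2$, $\delta_r<1/6$, $\eta=1/c$, and then the half-step bound has a negative right-hand side.

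\textbf{(ii) Spanning fails even for a single block.} Let $P=1$, $C_1=I$ (so $\delta_r=0$), $\eta=1/2$ (so $\eta_c=1/2$), $X_\star=e_1e_1^\top$, $B_0=e_2$, $A_0=e_2^\top$. The scaled updates keep $B_k=\beta_ke_2$ and $A_k=\alpha_ke_2^\top$. Hence $\mathcal{L}_c(B_k,A_k)=\frac12(\alpha_k^2\beta_k^2+1)\ge\frac12$ for all $k$, contradicting the claimed $(1-\eta_c)^{2k}\to 0$. Decoupled copies with small $\eta$ give the same failure for any $P$. This example has full-rank factors and satisfies $Pr\ge\mathrm{rank}(X_\star)$, so neither property can rescue the step: each block has its own rank-$r$ projector, and these do not combine into the identity on the residual.

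Hence the statement is not provable as written. Your fallback of imposing the spanning property as an invariant is an additional hypothesis, not something you can verify:
\begin{itemize}
\item it fails at generic initializations;
\item it requires separate rank preservation;
\item for a general $C_i$, the $B$-step does not even preserve $\mathrm{col}(X_\star)\subseteq\mathrm{col}(B^i)$ exactly, since $C_i^\top C_i$ moves column spaces.
\end{itemize}
Under such an alignment hypothesis you could at best obtain a local rate in which $\eta_c$ loses the factor $P$.

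Finally, the exponent in Step 3 is not an issue: a per-iteration factor $(1-\eta_c)^2$ unrolls to $(1-\eta_c)^{2k}$. The real problem in Step 3 is that per-block RIP of the $C_i$ does not sandwich $\|\sum_iC_i(B^iA^i-X_\star)\|_F^2$ against $\|\sum_iB^iA^i-X_\star\|_F^2$.
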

\noindent\textbf{Explanation of Theorem~\ref{thm_convergence_analysis}.}
If each $C_i=D_iX$ satisfies the rank-$r$ RIP with constant $\delta_r$ and the step size obeys $0\le \eta \le (1+\delta_r+\tfrac{1}{P})^{-1}$, then momentum-free LA-LoRA is contractive: the objective decreases geometrically as
$
\mathcal{L}_c(\mathbf{B}_{k+1},\mathbf{A}_{k+1}) \le (1-\eta_c)^2 \mathcal{L}_c(\mathbf{B}_k,\mathbf{A}_k),
$
with $\eta_c =  2P(1-\delta_r)\left(\eta - \frac{\eta^2(1+\delta_r+\frac{1}{P})}{2}\right)$. Consequently, the reconstruction $\widehat X_k=\sum_{i=1}^P B_k^iA_k^i$ converges linearly to $X_\star$ in Frobenius norm:
$
\|\widehat X_k-X_\star\|_F^2 \le \frac{1+\delta_r}{1-\delta_r}(1-\eta_c)^{2k}\|\widehat X_0-X_\star\|_F^2,
$
where $\frac{1+\delta_r}{1-\delta_r}$ reflects the RIP conditioning (smaller $\delta_r$ gives a tighter bound).

\section{Experiments}
\label{sec6}
We evaluate LA-LoRA on both vision and language tasks to assess its effectiveness and privacy preservation. All experiments are performed on NVIDIA A6000 GPUs.
\subsection{Experimental setups}

\textbf{Datasets}. For \textit{image classification}, we use CIFAR-100 and Tiny-ImageNet. For \textit{language understanding}, we evaluate on four GLUE benchmarks: SST-2, QNLI, QQP, and MNLI.  

\textbf{Models}. In vision tasks, we employ Swin Transformer backbones (Swin-T and Swin-B), initialized from ImageNet-22K pre-trained weights, which are well-suited to federated environments due to their strong generalization. For language tasks, we use RoBERTa-Base as the backbone model.  

\textbf{Baselines}. We compare LA-LoRA with three SOTA approaches: \textbf{DP-LoRA}, a direct application of LoRA under DPFL constraints. \textbf{FFA-LoRA}, which freezes $A$ while updating $B$. \textbf{RoLoRA}, which alternates the upload of $A$ and $B$ in communication rounds.  

\textbf{Hyperparameter settings}. \textit{Image classification}:  
Federated setup with $N=8$ clients (sampling rate $q=0.5$) and default non-iid Dirichlet $\beta=0.1$. Training runs for $T=100$ rounds, each selected client performing $K=20$ with batch size $\mathcal{B}=16$. We use LoRA fine-tuning with rank $r=16$, scaling factor $\alpha=16$, updating both adapter and classification head. Optimization uses SGD with learning rate decay $\lambda=0.99$. The learning rate $\eta$ is selected from $\{1e-2,\ 2e-2,\ 1e-1,\ 2e-1\}$. The privacy budget is fixed to $\epsilon \in \{3, 2, 1\}$, with $\delta=1e-5$ and noise smoothing $\sigma_s=0.01$. \textit{Language understanding}:  
Federated setup with $N=20$, $q=0.2$, Dirichlet $\beta=0.8$. Clients run $K=20$ local steps for $T=100$ rounds ($\mathcal{B}=16$). We use LoRA with $r=\alpha=8$, freezing the classification head. AdamW optimizer with $\eta \in \{1e-4,2e-4,3e-4,4e-4,1e-3,2e-3,4e-3\}$. The privacy budget is also set to $\epsilon \in \{3, 2, 1\}$, with $\delta=1e-5$ for SST-2 and QNLI, $\delta=1e-6$ for QQP and MNLI, and smoothing $\sigma_s=0.001$. We run 3 trials for vision tasks and 15 trials for language tasks, reporting the mean test accuracy across trials. Full details see Appendix~\ref{app:LVM_setup}, \ref{app:LLM_setup}.

\subsection{Results} 
\textbf{Fine-tuning for image classification.} Table~\ref{result_swin} and Figure~\ref{Figure_LVM_epsilon_1} show that LA-LoRA consistently outperforms DP-LoRA, FFA-LoRA, and RoLoRA on CIFAR-100 and Tiny-ImageNet using both Swin-T and Swin-B, under privacy budgets $\epsilon \in \{3, 2, 1\}$. For Swin-T at $\epsilon = 3$, LA-LoRA achieves $60.07\%$ (CIFAR-100) and $60.97\%$ (Tiny-ImageNet), exceeding RoLoRA by $\textbf{4.88\%}$ and $\textbf{10.10\%}$. Even at $\epsilon=1$, LA-LoRA remains the top performer with $56.68\%$ and $60.01\%$ on the two datasets.

A similar trend holds for Swin-B. At $\epsilon=1$, LA-LoRA reaches $74.56\%$ (CIFAR-100) and $60.68\%$ (Tiny-ImageNet), outperforming RoLoRA by $\textbf{6.68\%}$ and $\textbf{16.83\%}$, respectively, further confirming its consistent superiority over baselines. Further experimental details are provided in Appendix~\ref{app:LVM_results}.

\begin{figure*}[ht]
        \centering
	\begin{minipage}[b]{0.245\textwidth}
		\subcaptionbox{CIFAR-100, Swin-T \label{fig:4a}}{\includegraphics[width=\textwidth]{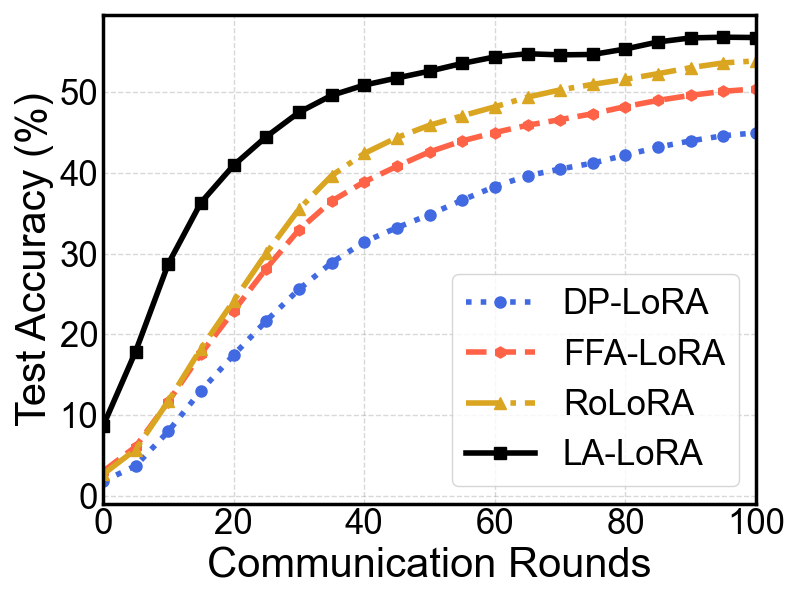}}
	\end{minipage}
	\begin{minipage}[b]{0.245\textwidth}
		\subcaptionbox{ImageNet, Swin-T\label{fig:4b}}{\includegraphics[width=\textwidth]{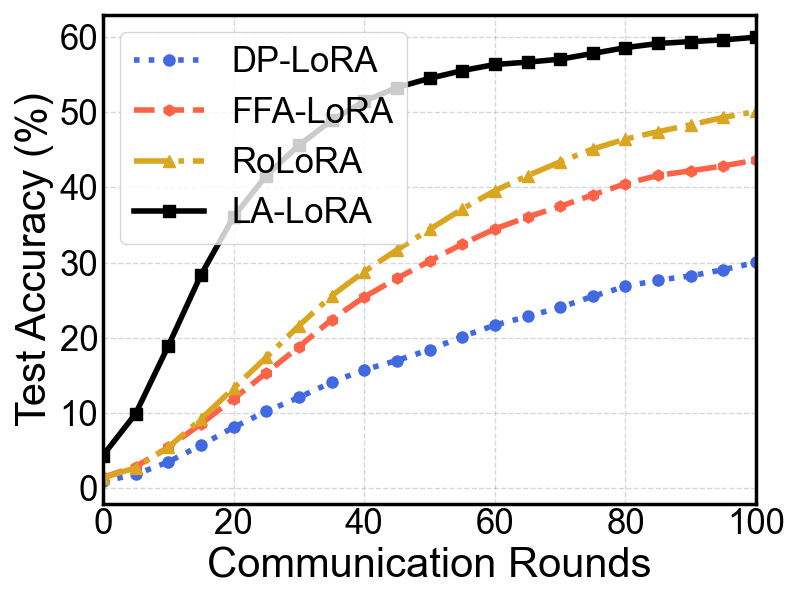}}
	\end{minipage}
	\begin{minipage}[b]{0.245\textwidth}
		\subcaptionbox{CIFAR-100, Swin-B \label{fig:4c}}{\includegraphics[width=\textwidth]{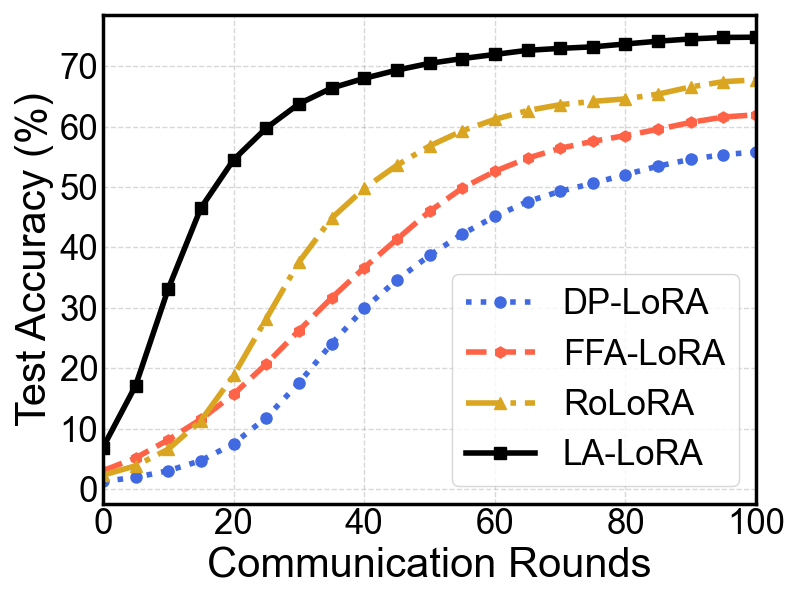}}
	\end{minipage}
	\begin{minipage}[b]{0.245\textwidth}
		\subcaptionbox{ImageNet, Swin-B \label{fig:4d}}{\includegraphics[width=\textwidth]{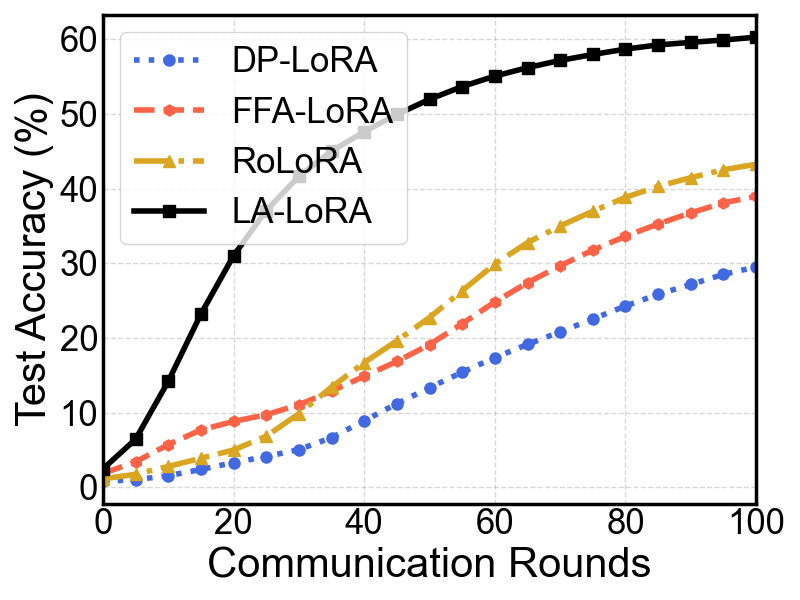}}
	\end{minipage}
    \vspace{-2mm}
	\caption{Test accuracy of Swin-T and Swin-B on CIFAR-100 and Tiny-ImageNet with $\epsilon = 1$.}\label{Figure_LVM_epsilon_1}
\end{figure*}

\begin{table*}[ht]
	\caption{Test accuracy of Swin-T and Swin-B on CIFAR-100 and Tiny-ImageNet for different $\epsilon$.}

	\label{result_swin}
	\centering
        \small
        \renewcommand{\arraystretch}{0.9}
	\begin{tabular}{cl|cccc}
		\toprule
		\multirow{2}{*}{\shortstack{\textbf{Privacy} \\ \textbf{Budget}}} & 
		\multirow{2}{*}{\textbf{Method}} & 
		\multicolumn{2}{c}{\textbf{Swin-T Model (\%)}} & \multicolumn{2}{c}{\textbf{Swin-B Model (\%)}}\\
		\cmidrule(lr){3-4} \cmidrule(lr){5-6} 
		& & \textbf{CIFAR-100} & \textbf{Tiny-ImageNet} & \textbf{CIFAR-100} & \textbf{Tiny-ImageNet} \\
		\midrule
		\multirow{4}{*}{$\epsilon = 3$}  
		& DP-LoRA         &   $45.40_{\pm0.40}$  & $32.27_{\pm0.91}$     &   $56.52_{\pm0.51}$  & $30.64_{\pm0.30}$      \\
		& FFA-LoRA        &   $52.09_{\pm0.36}$  & $44.62_{\pm0.55}$ &   $62.10_{\pm0.39}$  & $39.84_{\pm0.19}$   \\
		& RoLoRA          &   $55.19_{\pm0.42}$  & $50.87_{\pm0.56}$ &   $67.96_{\pm0.64}$  & $44.18_{\pm0.66}$ \\
		&\textbf{LA-LoRA} & $\mathbf{60.07}_{\pm0.41}$ & $\mathbf{60.97}_{\pm0.44}$ & $\mathbf{75.29}_{\pm0.35}$ & $\mathbf{61.97}_{\pm0.56}$ \\
		\midrule
		\multirow{4}{*}{$\epsilon = 2$}  
		& DP-LoRA         &   $44.82_{\pm0.57}$  & $32.14_{\pm1.39}$ &   $56.31_{\pm0.28}$  & $30.31_{\pm0.55}$         \\
		& FFA-LoRA        &   $52.05_{\pm0.43}$  & $44.31_{\pm0.44}$ &   $62.02_{\pm0.33}$  & $39.54_{\pm0.49}$  \\
		& RoLoRA          &   $55.02_{\pm0.35}$  & $50.56_{\pm0.57}$ &   $67.93_{\pm0.40}$  & $43.97_{\pm0.38}$\\
		&\textbf{LA-LoRA} & $\mathbf{59.52}_{\pm0.53}$ & $\mathbf{60.63}_{\pm0.49}$ & $\mathbf{74.93}_{\pm0.32}$ & $\mathbf{61.03}_{\pm0.67}$\\
		\midrule
		\multirow{4}{*}{$\epsilon = 1$}  
		& DP-LoRA         &   $45.58_{\pm0.47}$  & $31.00_{\pm0.47}$  &   $55.98_{\pm0.56}$  & $30.20_{\pm0.46}$       \\
		& FFA-LoRA        &   $50.75_{\pm0.54}$  & $44.38_{\pm0.38}$  &   $61.94_{\pm0.37}$  & $39.33_{\pm0.48}$   \\
		& RoLoRA          &   $54.88_{\pm0.66}$  & $50.78_{\pm0.45}$  &   $67.88_{\pm0.32}$  & $43.85_{\pm0.60}$ \\
		&\textbf{LA-LoRA} & $\mathbf{56.68}_{\pm0.60}$ & $\mathbf{60.01}_{\pm0.51}$ & $\mathbf{74.56}_{\pm0.52}$ & $\mathbf{60.68}_{\pm0.55}$\\    
		\bottomrule
	\end{tabular}
\end{table*}

\textbf{Fine-tuning for language understanding}. Table~\ref{result_LLM} summarizes the results, where LA-LoRA consistently outperforms all baselines across datasets and privacy levels. At $\epsilon = 3$, LA-LoRA achieves $93.12\%$ on SST-2 and $89.83\%$ on QNLI. At $\epsilon = 1$, it maintains $85.34\%$ on QQP and $82.35\%$ on MNLI, surpassing the best baseline RoLoRA in all cases. More results in Appendix~\ref{app:LLM_results}.

\begin{table*}[ht]
    \caption{Test accuracy (\%) of RoBERTa-Base on SST-2, QNLI, QQP and MNLI under different $\epsilon$.}
    \vspace{-2mm}
    \label{result_LLM}
    \centering
    \small
    \renewcommand{\arraystretch}{0.9}
    \begin{tabular}{cl|cccc}
    \toprule
    \textbf{Privacy Budget} & 
    \textbf{Method} & 
    \textbf{SST-2} & 
    \textbf{QNLI} & 
    \textbf{QQP} & 
    \textbf{MNLI} \\
    \midrule
    \multirow{4}{*}{$\epsilon = 3$}  
    & DP-LoRA         &   $92.36_{\pm0.75}$  & $86.31_{\pm0.32}$ & $84.56_{\pm0.83}$ & $80.98_{\pm0.44}$         \\
    & FFA-LoRA        &   $92.32_{\pm0.49}$  & $87.20_{\pm0.37}$ & $85.12_{\pm0.34}$ & $81.71_{\pm0.69}$      \\
    & RoLoRA          &   $92.70_{\pm0.52}$  & $88.23_{\pm0.49}$ & $85.35_{\pm0.42}$ & $82.12_{\pm0.44}$             \\
    &\textbf{LA-LoRA} & $\mathbf{93.12}_{\pm0.67}$ & $\mathbf{89.83}_{\pm0.41}$ & $\mathbf{85.83}_{\pm0.49}$ & $\mathbf{82.99}_{\pm0.42}$ \\
    \midrule
    \multirow{4}{*}{$\epsilon = 2$}  
    & DP-LoRA         &   $92.20_{\pm0.64}$  & $86.03_{\pm0.57}$ & $84.26_{\pm0.57}$ & $80.62_{\pm0.35}$   \\
    & FFA-LoRA        &   $92.39_{\pm0.51}$  & $87.30_{\pm0.60}$ & $84.73_{\pm0.47}$ & $81.94_{\pm0.58}$      \\
    & RoLoRA          &   $92.55_{\pm0.40}$  & $87.08_{\pm0.48}$ & $85.02_{\pm0.53}$ & $82.01_{\pm0.44}$                \\
    &\textbf{LA-LoRA} & $\mathbf{93.00}_{\pm0.70}$ & $\mathbf{89.18}_{\pm0.51}$ & $\mathbf{85.64}_{\pm0.58}$ & $\mathbf{82.87}_{\pm0.52}$   \\     
    \midrule
    \multirow{4}{*}{$\epsilon = 1$}
    & DP-LoRA         &   $90.71_{\pm0.55}$  & $84.07_{\pm0.55}$ & $83.48_{\pm0.38}$ & $79.87_{\pm0.79}$          \\
    & FFA-LoRA        &   $91.06_{\pm0.53}$  & $85.08_{\pm0.53}$ & $84.30_{\pm0.57}$ & $81.14_{\pm0.59}$         \\
    & RoLoRA          &   $92.32_{\pm0.41}$  & $86.25_{\pm0.46}$ & $84.49_{\pm0.57}$ & $81.54_{\pm0.50}$            \\
    &\textbf{LA-LoRA} & $\mathbf{92.66}_{\pm0.47}$ & $\mathbf{88.73}_{\pm0.42}$  &  $\mathbf{85.34}_{\pm0.35}$ &  $\mathbf{82.35}_{\pm0.46}$\\   
    \bottomrule
    \end{tabular}
\end{table*}
\subsection{Ablation Study}

To better understand the individual contributions of the local alternating update strategy and the low-pass smoothing filter, we fixed $\epsilon = 3$ for ablation studies. 

\textbf{Effect of local alternating updates.} We evaluate  DP-LoRA with LA-LoRA(-filter), which uses alternating updates without the filter. Across both language and vision tasks, LA-LoRA(-filter) consistently improves performance over DP-LoRA. For example, in Table~\ref{result_combined_slim}, accuracy on Tiny-ImageNet with Swin-B improves from 30.64\% to 53.07\%, indicating substantial gains in deep vision models. Similarly, in Table~\ref{result_combined_slim}, accuracy on QNLI improves from 86.31\% to 88.92\%. These results demonstrate that alternating updates improve model utility under DP, achieving a better trade-off.

\textbf{Effect of low-pass smoothing filter.} We compare DP-LoRA vs. DP-LoRA(+filter) and LA-LoRA(-filter) vs. LA-LoRA. In both domains, the filter consistently delivers additional performance gains. For example, on Tiny-ImageNet (Swin-B, Table~\ref{result_combined_slim}), DP-LoRA(+filter) increases the accuracy from 30.64\% to 49.85\% over DP-LoRA, while applying the filter to LA-LoRA(-filter) further improves the accuracy from 53.07\% to 61.97\%. The filter facilitates smoother updates that bias optimization toward flatter global solutions, improving both stability and generalization under DP.

Appendix presents supplementary experiments, including analysis of the smoothing hyperparameter $\sigma_s$~\ref{app:smoothing_parameter}, computational and memory cost~\ref{app:LVM_overhead}, and other ablation studies~\ref{app:Ablation}.

\begin{table*}[ht]
	\caption{Impact of local alternating updates and low-pass smoothing filter on federated LoRA performance (\%) across GLUE (RoBERTa-Base) and image classification (Swin-B) benchmarks.}
    \vspace{-2mm}
	\label{result_combined_slim}
    \small
    \renewcommand{\arraystretch}{0.9}
    \setlength{\tabcolsep}{3pt}
	\centering
	\begin{tabular}{l|cccc|cc}
		\toprule
		\multirow{2}{*}{\textbf{Method}} & 
		\multicolumn{4}{c|}{\textbf{GLUE tasks (RoBERTa-Base)}} & 
		\multicolumn{2}{c}{\textbf{Image Classification (Swin-B)}} \\
		\cmidrule(lr){2-5} \cmidrule(lr){6-7}
		& \textbf{SST-2} & \textbf{QNLI} & \textbf{QQP} & \textbf{MNLI} 
		& \textbf{CIFAR-100} & \textbf{Tiny-ImageNet} \\
		\midrule
		DP-LoRA & $92.36_{\pm0.75}$ & $86.31_{\pm0.32}$ & $84.56_{\pm0.83}$ & $80.98_{\pm0.44}$ 
		        & $56.52_{\pm0.51}$ & $30.64_{\pm0.30}$ \\
		DP-LoRA(+filter) & $92.52_{\pm0.55}$ & $87.06_{\pm0.66}$ & $84.79_{\pm0.42}$ & $81.43_{\pm0.57}$ 
		                  & $69.08_{\pm0.52}$ & $49.85_{\pm0.55}$ \\
		LA-LoRA(-filter) & $92.74_{\pm0.67}$ & $88.92_{\pm0.50}$ & $84.98_{\pm0.51}$ & $82.40_{\pm0.53}$ 
		                  & $70.38_{\pm0.48}$ & $53.07_{\pm0.60}$ \\
		\textbf{LA-LoRA} & $\mathbf{93.12}_{\pm0.67}$ & $\mathbf{89.83}_{\pm0.41}$ & $\mathbf{85.83}_{\pm0.49}$ & $\mathbf{82.99}_{\pm0.42}$ 
		                & $\mathbf{75.29}_{\pm0.35}$ & $\mathbf{61.97}_{\pm0.56}$ \\
		\bottomrule
	\end{tabular}
\end{table*}

\textbf{Effect of different local alternating update strategies.} 
We vary the local alternating strategies between the two LoRA factors while keeping the total local steps fixed. Table~\ref{tab:alt_strategies} shows nearly identical results across strategies: the best-worst gap is always below $0.7\%$ (typically within $0.3\%$). Default \emph{1-step $B$ / 1-step $A$} strategy attains the best accuracy in most cases.
These results indicate that LA-LoRA is insensitive to the precise local alternating strategy.

\begin{table}[h]
\small
    \caption{Effect of different local alternating strategies on CIFAR-100 and Tiny-ImageNet (Swin-B) under $\epsilon=3$ and different Dirichlet distributions. Strategy ``$k$-step $B$ / $k$-step $A$'' denotes performing $k$ local gradient steps on $B$ followed by $k$ steps on $A$ in each local round.}
    \vspace{-2mm}
    \label{tab:alt_strategies}
    \setlength{\tabcolsep}{3pt}
    \centering
    \renewcommand{\arraystretch}{0.9}
    \begin{tabular}{l|cc|cc}
    		\toprule
		\multirow{2}{*}{\textbf{Strategy}} & 
		\multicolumn{2}{c|}{\textbf{Dirichlet $\beta=0.6$}} & 
		\multicolumn{2}{c}{\textbf{Dirichlet $\beta=0.1$}} \\
		\cmidrule(lr){2-3} \cmidrule(lr){4-5}
		& \textbf{CIFAR-100} & \textbf{Tiny-ImageNet} & \textbf{CIFAR-100} & \textbf{Tiny-ImageNet}\\
        \midrule
       \emph{1-step $B$ / 1-step $A$} (default) & $\mathbf{89.62}_{\pm0.37}$& $\mathbf{80.77}_{\pm0.50}$ & $75.29_{\pm0.35}$ & $\mathbf{61.97}_{\pm0.56}$\\
        \emph{2-step $B$ / 2-step $A$}           & $89.29_{\pm0.47}$ & $80.58_{\pm0.39}$ & $75.15_{\pm0.38}$ & $61.36_{\pm0.42}$\\
        \emph{5-step $B$ / 5-step $A$}           & $89.31_{\pm0.42}$ & $80.72_{\pm0.40}$& $\mathbf{75.47}_{\pm0.46}$ & $61.78_{\pm0.51}$\\
        \emph{5-step $A$ / 5-step $B$}           & $89.45_{\pm0.26}$ & $80.70_{\pm0.42}$ & $75.34_{\pm0.44}$ & $61.79_{\pm0.58}$\\
        \bottomrule
    \end{tabular}
\end{table}

\vspace{-3mm}
\section{Discussion and Conclusion}
\vspace{-2mm}
In this work, we propose LA-LoRA, a privacy-preserving framework for differentially private federated adaptation. By alternating local updates and applying a low-pass smoothing filter, LA-LoRA addresses three key challenges in DPFL: gradient coupling, noise amplification, and sharp aggregation bias. Experiments on vision and language tasks show consistent improvements in accuracy, stability, and privacy efficiency. Future work will explore extending LA-LoRA to more heterogeneous client distributions and scaling to larger and more complex foundation models.

\newpage
\section{Ethics Statement}
This work adheres to the ICLR Code of Ethics. In this study, no human subjects or animal experimentation was involved. All datasets used, including \textbf{CIFAR-100, Tiny-ImageNet, SST-2, QNLI, QQP, MNLI}, were sourced in compliance with relevant usage guidelines, ensuring no violation of privacy. We have taken care to avoid any biases or discriminatory outcomes in our research process. No personally identifiable information was used, and no experiments were conducted that could raise privacy or security concerns. We are committed to maintaining transparency and integrity throughout the research process.

\section{Reproducibility Statement}
We have made every effort to ensure that the results presented in this paper are reproducible. All code and datasets have been made publicly available in an anonymous repository (\repolink) to facilitate replication and verification. The experimental setup, including training steps, model configurations, and hardware details, is described in detail in Section~\ref{sec6}, Appendix~\ref{app:LVM_setup}, and \ref{app:LLM_setup}. 

We believe these measures will enable other researchers to reproduce our work and further advance the field.

\newpage
\subsubsection*{Acknowledgments}
This work was supported by the National Key R\&D Program of China (No.2022ZD0116312), the
Postdoctoral Fellowship Program of CPSF under Grant Number GZB20230348, the Young Elite
Scientists Sponsorship Program by CAST (2023QNRC001) and Tencent Rhino-Bird Focused Research Program.

\bibliography{iclr2026_conference}

@inproceedings{mironov2017renyi,
  title={R{\'e}nyi differential privacy},
  author={Mironov, Ilya},
  booktitle={Proc. IEEE computer security foundations symposium (CSF)},
  pages={263-275},
  year={2017}}

@article{hochreiter1997flat,
  title={Flat minima},
  author={Hochreiter, Sepp and Schmidhuber, J{\"u}rgen},
  journal={Neural computation},
  volume={9},
  number={1},
  pages={1--42},
  year={1997},
  publisher={MIT Press One Rogers Street, Cambridge, MA 02142-1209, USA journals-info~…}
}

@article{kaddour2022flat,
  title={When do flat minima optimizers work?},
  author={Kaddour, Jean and Liu, Linqing and Silva, Ricardo and Kusner, Matt J},
  journal={Advances in Neural Information Processing Systems},
  volume={35},
  pages={16577--16595},
  year={2022}
}

@article{zhang2024doppler,
  title={Doppler: Differentially private optimizers with low-pass filter for privacy noise reduction},
  author={Zhang, Xinwei and Bu, Zhiqi and Hong, Mingyi and Razaviyayn, Meisam},
  journal={Advances in neural information processing systems},
  volume={37},
  pages={41826--41851},
  year={2024}
}

@inproceedings{noble2022differentially,
  title={Differentially private federated learning on heterogeneous data},
  author={Noble, Maxence and Bellet, Aur{\'e}lien and Dieuleveut, Aymeric},
  booktitle={International conference on artificial intelligence and statistics},
  pages={10110--10145},
  year={2022},
  organization={PMLR}
}

@article{dwork2014algorithmic,
  title={The algorithmic foundations of differential privacy},
  author={Dwork, Cynthia and Roth, Aaron and others},
  journal={Foundations and trends{\textregistered} in theoretical computer science},
  volume={9},
  number={3--4},
  pages={211--407},
  year={2014},
  publisher={Now Publishers, Inc.}
}

@inproceedings{wang2019subsampled,
  title={Subsampled r{\'e}nyi differential privacy and analytical moments accountant},
  author={Wang, Yu-Xiang and Balle, Borja and Kasiviswanathan, Shiva Prasad},
  booktitle={The 22nd international conference on artificial intelligence and statistics},
  pages={1226--1235},
  year={2019},
  organization={PMLR}
}

@article{achiam2023gpt,
  title={Gpt-4 technical report},
  author={Achiam, Josh and Adler, Steven and Agarwal, Sandhini and Ahmad, Lama and Akkaya, Ilge and Aleman, Florencia Leoni and Almeida, Diogo and Altenschmidt, Janko and Altman, Sam and Anadkat, Shyamal and others},
  journal={arXiv preprint arXiv:2303.08774},
  year={2023}
}

@inproceedings{kenton2019bert,
  title={Bert: Pre-training of deep bidirectional transformers for language understanding},
  author={Kenton, Jacob Devlin Ming-Wei Chang and Toutanova, Lee Kristina and others},
  booktitle={Proceedings of naacL-HLT},
  volume={1},
  number={2},
  year={2019},
  organization={Minneapolis, Minnesota}
}

@article{dosovitskiy2020image,
	title={An image is worth 16x16 words: Transformers for image recognition at scale},
	author={Dosovitskiy, Alexey and Beyer, Lucas and Kolesnikov, Alexander and Weissenborn, Dirk and Zhai, Xiaohua and Unterthiner, Thomas and Dehghani, Mostafa and Minderer, Matthias and Heigold, Georg and Gelly, Sylvain and others},
	journal={arXiv preprint arXiv:2010.11929},
	year={2020}
}

@inproceedings{liu2024fedbcgd,
  title={Fedbcgd: Communication-efficient accelerated block coordinate gradient descent for federated learning},
  author={Liu, Junkang and Shang, Fanhua and Liu, Yuanyuan and Liu, Hongying and Li, Yuangang and Gong, YunXiang},
  booktitle={Proceedings of the 32nd ACM International Conference on Multimedia},
  pages={2955--2963},
  year={2024}
}

@inproceedings{liuimproving,
  title={Improving Generalization in Federated Learning with Highly Heterogeneous Data via Momentum-Based Stochastic Controlled Weight Averaging},
  author={Liu, Junkang and Liu, Yuanyuan and Shang, Fanhua and Liu, Hongying and Liu, Jin and Feng, Wei},
  booktitle={Forty-second International Conference on Machine Learning},
  year={2024}
}

@article{hu2021lora,
  title={Lora: Low-rank adaptation of large language models},
  author={Hu, Edward J and Shen, Yelong and Wallis, Phillip and Allen-Zhu, Zeyuan and Li, Yuanzhi and Wang, Shean and Wang, Lu and Chen, Weizhu},
  journal={arXiv preprint arXiv:2106.09685},
  year={2021}
}

@inproceedings{zhang2024towards,
  title={Towards building the federatedgpt: Federated instruction tuning},
  author={Zhang, Jianyi and Vahidian, Saeed and Kuo, Martin and Li, Chunyuan and Zhang, Ruiyi and Yu, Tong and Wang, Guoyin and Chen, Yiran},
  booktitle={ICASSP 2024-2024 IEEE International Conference on Acoustics, Speech and Signal Processing (ICASSP)},
  pages={6915--6919},
  year={2024},
  organization={IEEE}
}

@inproceedings{liu2024beyond,
  title={Beyond traditional threats: A persistent backdoor attack on federated learning},
  author={Liu, Tao and Zhang, Yuhang and Feng, Zhu and Yang, Zhiqin and Xu, Chen and Man, Dapeng and Yang, Wu},
  booktitle={Proceedings of the AAAI Conference on Artificial Intelligence},
  volume={38},
  number={19},
  pages={21359--21367},
  year={2024}
}

@inproceedings{han2024fedsecurity,
  title={Fedsecurity: A benchmark for attacks and defenses in federated learning and federated llms},
  author={Han, Shanshan and Buyukates, Baturalp and Hu, Zijian and Jin, Han and Jin, Weizhao and Sun, Lichao and Wang, Xiaoyang and Wu, Wenxuan and Xie, Chulin and Yao, Yuhang and others},
  booktitle={Proceedings of the 30th ACM SIGKDD Conference on Knowledge Discovery and Data Mining},
  pages={5070--5081},
  year={2024}
}

@inproceedings{
fan2025badpfl,
title={Bad-{PFL}: Exploiting Backdoor Attacks against Personalized Federated Learning},
author={Mingyuan Fan and Zhanyi Hu and Fuyi Wang and Cen Chen},
booktitle={The Thirteenth International Conference on Learning Representations},
year={2025},
url={https://openreview.net/forum?id=79nO2DPjVX}
}

@inproceedings{
ye2025emerging,
title={Emerging Safety Attack and Defense in Federated Instruction Tuning of Large Language Models},
author={Rui Ye and Jingyi Chai and Xiangrui Liu and Yaodong Yang and Yanfeng Wang and Siheng Chen},
booktitle={The Thirteenth International Conference on Learning Representations},
year={2025},
url={https://openreview.net/forum?id=sYNWqQYJhz}
}

@inproceedings{dwork2006differential,
  title={Differential privacy},
  author={Dwork, Cynthia},
  booktitle={International colloquium on automata, languages, and programming},
  pages={1--12},
  year={2006},
  organization={Springer}
}

@article{liu2025differentially,
  title={Differentially private low-rank adaptation of large language model using federated learning},
  author={Liu, Xiao-Yang and Zhu, Rongyi and Zha, Daochen and Gao, Jiechao and Zhong, Shan and White, Matt and Qiu, Meikang},
  journal={ACM Transactions on Management Information Systems},
  volume={16},
  number={2},
  pages={1--24},
  year={2025},
  publisher={ACM New York, NY}
}

@article{liu2019roberta,
  title={RoBERTa: A Robustly Optimized BERT Pretraining Approach},
  author={Liu, Yinhan and Ott, Myle and Goyal, Naman and Du, Jingfei and Joshi, Mandar and Chen, Danqi and Levy, Omer and Lewis, Mike and Zettlemoyer, Luke and Stoyanov, Veselin},
  journal={arXiv preprint arXiv:1907.11692},
  year={2019}
}

@inproceedings{houlsby2019parameter,
  title={Parameter-efficient transfer learning for NLP},
  author={Houlsby, Neil and Giurgiu, Andrei and Jastrzebski, Stanislaw and Morrone, Bruna and De Laroussilhe, Quentin and Gesmundo, Andrea and Attariyan, Mona and Gelly, Sylvain},
  booktitle={International Conference on Machine Learning},
  pages={2790--2799},
  year={2019},
  organization={PMLR}
}

@article{li2021prefix,
  title={Prefix-tuning: Optimizing continuous prompts for generation},
  author={Li, Xiang Lisa and Liang, Percy},
  journal={arXiv preprint arXiv:2101.00190},
  year={2021}
}

@article{lester2021power,
  title={The power of scale for parameter-efficient prompt tuning},
  author={Lester, Brian and Al-Rfou, Rami and Constant, Noah},
  journal={arXiv preprint arXiv:2104.08691},
  year={2021}
}

@article{tian2024hydralora,
  title={HydraLoRA: An Asymmetric LoRA Architecture for Efficient Fine-Tuning},
  author={Tian, Chunlin and Shi, Zhan and Guo, Zhijiang and Li, Li and Xu, Chengzhong},
  journal={arXiv preprint arXiv:2404.19245},
  year={2024}
}

@article{shin2023fedsplitx,
  title={FedSplitX: Federated Split Learning for Computationally-Constrained Heterogeneous Clients},
  author={Shin, Jiyun and Ahn, Jinhyun and Kang, Honggu and Kang, Joonhyuk},
  journal={arXiv preprint arXiv:2310.14579},
  year={2023}
}

@article{zaken2021bitfit,
  title={Bitfit: Simple parameter-efficient fine-tuning for transformer-based masked language-models},
  author={Zaken, Elad Ben and Ravfogel, Shauli and Goldberg, Yoav},
  journal={arXiv preprint arXiv:2106.10199},
  year={2021}
}

@inproceedings{cai2023efficient,
  title={Efficient federated learning for modern nlp},
  author={Cai, Dongqi and Wu, Yaozong and Wang, Shangguang and Lin, Felix Xiaozhu and Xu, Mengwei},
  booktitle={Proceedings of the 29th Annual International Conference on Mobile Computing and Networking},
  pages={1--16},
  year={2023}
}

@article{ghiasvand2024communication,
  title={Communication-Efficient and Tensorized Federated Fine-Tuning of Large Language Models},
  author={Ghiasvand, Sajjad and Yang, Yifan and Xue, Zhiyu and Alizadeh, Mahnoosh and Zhang, Zheng and Pedarsani, Ramtin},
  journal={arXiv preprint arXiv:2410.13097},
  year={2024}
}

@inproceedings{zhao2023fedprompt,
  title={Fedprompt: Communication-efficient and privacy-preserving prompt tuning in federated learning},
  author={Zhao, Haodong and Du, Wei and Li, Fangqi and Li, Peixuan and Liu, Gongshen},
  booktitle={ICASSP 2023-2023 IEEE International Conference on Acoustics, Speech and Signal Processing (ICASSP)},
  pages={1--5},
  year={2023},
  organization={IEEE}
}

@article{qiu2023text,
  title={Text-driven prompt generation for vision-language models in federated learning},
  author={Qiu, Chen and Li, Xingyu and Mummadi, Chaithanya Kumar and Ganesh, Madan Ravi and Li, Zhenzhen and Peng, Lu and Lin, Wan-Yi},
  journal={arXiv preprint arXiv:2310.06123},
  year={2023}
}

@article{yu2023bridging,
  title={Bridging the gap between foundation models and heterogeneous federated learning},
  author={Yu, Sixing and Mu{\~n}oz, J Pablo and Jannesari, Ali},
  journal={arXiv preprint arXiv:2310.00247},
  year={2023}
}

@inproceedings{mcmahan2017communication,
  title={Communication-efficient learning of deep networks from decentralized data},
  author={McMahan, Brendan and Moore, Eider and Ramage, Daniel and Hampson, Seth and y Arcas, Blaise Aguera},
  booktitle={Artificial intelligence and statistics},
  pages={1273--1282},
  year={2017},
  organization={PMLR}
}

@article{chen2024robust,
  title={Robust Federated Finetuning of Foundation Models via Alternating Minimization of LoRA},
  author={Chen, Shuangyi and Ju, Yue and Dalal, Hardik and Zhu, Zhongwen and Khisti, Ashish},
  journal={arXiv preprint arXiv:2409.02346},
  year={2024}
}

@article{guo2024selective,
  title={Selective Aggregation for Low-Rank Adaptation in Federated Learning},
  author={Guo, Pengxin and Zeng, Shuang and Wang, Yanran and Fan, Huijie and Wang, Feifei and Qu, Liangqiong},
  journal={arXiv preprint arXiv:2410.01463},
  year={2024}
}

@inproceedings{cho2024heterogeneous,
  title={Heterogeneous lora for federated fine-tuning of on-device foundation models},
  author={Cho, Yae Jee and Liu, Luyang and Xu, Zheng and Fahrezi, Aldi and Joshi, Gauri},
  booktitle={Proceedings of the 2024 Conference on Empirical Methods in Natural Language Processing},
  pages={12903--12913},
  year={2024}
}

@article{wang2024flora,
  title={Flora: Federated fine-tuning large language models with heterogeneous low-rank adaptations},
  author={Wang, Ziyao and Shen, Zheyu and He, Yexiao and Sun, Guoheng and Wang, Hongyi and Lyu, Lingjuan and Li, Ang},
  journal={arXiv preprint arXiv:2409.05976},
  year={2024}
}

@article{qi2024fdlora,
  title={FDLoRA: Personalized Federated Learning of Large Language Model via Dual LoRA Tuning},
  author={Qi, Jiaxing and Luan, Zhongzhi and Huang, Shaohan and Fung, Carol and Yang, Hailong and Qian, Depei},
  journal={arXiv preprint arXiv:2406.07925},
  year={2024}
}

@article{sun2024improving,
  title={Improving lora in privacy-preserving federated learning},
  author={Sun, Youbang and Li, Zitao and Li, Yaliang and Ding, Bolin},
  journal={arXiv preprint arXiv:2403.12313},
  year={2024}
}

@techreport{krizhevsky2009learning,
  title={Learning multiple layers of features from tiny images},
  author={Krizhevsky, Alex},
  year={2009},
  institution={University of Toronto},
  note={Technical Report}
}

@misc{tinyimagenet,
  title={Tiny ImageNet Visual Recognition Challenge},
  author={Le, Ya and Yang, Xuan},
  howpublished={\url{http://cs231n.stanford.edu/tiny-imagenet-200/}},
  year={2015},
  note={Accessed: 2025-08-05}
}

@inproceedings{liu2021swin,
  title={Swin Transformer: Hierarchical Vision Transformer using Shifted Windows},
  author={Liu, Ze and Lin, Yutong and Cao, Yue and Hu, Han and Wei, Yixuan and Zhang, Zheng and Lin, Stephen and Guo, Baining},
  booktitle={Proceedings of the IEEE/CVF International Conference on Computer Vision (ICCV)},
  pages={10012--10022},
  year={2021}
}

@inproceedings{wang2018glue,
  title={GLUE: A multi-task benchmark and analysis platform for natural language understanding},
  author={Wang, Alex and Singh, Amanpreet and Michael, Julian and Hill, Felix and Levy, Omer and Bowman, Samuel R},
  booktitle={Proceedings of the 2018 EMNLP Workshop BlackboxNLP: Analyzing and Interpreting Neural Networks for NLP},
  pages={353--355},
  year={2018}
}

@inproceedings{sagun2017empirical,
  title={Empirical analysis of the Hessian of over-parametrized neural networks},
  author={Sagun, Levent and Evci, Utku and Guney, V. Ugur and Dauphin, Yann and Bottou, L{\'e}on},
  booktitle={International Conference on Learning Representations (ICLR)},
  year={2018}
}

@inproceedings{grosse2016kronecker,
  title={A Kronecker-factored approximate Fisher matrix for convolution layers},
  author={Grosse, Roger and Martens, James},
  booktitle={International Conference on Machine Learning (ICML)},
  pages={573--582},
  year={2016}
}

@article{zhang2024riemannian,
  title={Riemannian preconditioned lora for fine-tuning foundation models},
  author={Zhang, Fangzhao and Pilanci, Mert},
  journal={arXiv preprint arXiv:2402.02347},
  year={2024}
}

@inproceedings{pilanci2020neural,
  title={Neural networks are convex regularizers: Exact polynomial-time convex optimization formulations for two-layer networks},
  author={Pilanci, Mert and Ergen, Tolga},
  booktitle={International Conference on Machine Learning},
  pages={7695--7705},
  year={2020},
  organization={PMLR}
}

@article{recht2010guaranteed,
  title={Guaranteed minimum-rank solutions of linear matrix equations via nuclear norm minimization},
  author={Recht, Benjamin and Fazel, Maryam and Parrilo, Pablo A},
  journal={SIAM review},
  volume={52},
  number={3},
  pages={471--501},
  year={2010},
  publisher={SIAM}
}

@article{liu2025efficient,
  title={Efficient Over-parameterized Matrix Sensing from Noisy Measurements via Alternating Preconditioned Gradient Descent},
  author={Liu, Zhiyu and Han, Zhi and Tang, Yandong and Zhang, Hai and Tang, Shaojie and Wang, Yao},
  journal={arXiv preprint arXiv:2502.00463},
  year={2025}
}

@inproceedings{xie2025chat,
  title={Chat-driven text generation and interaction for person retrieval},
  author={Xie, Zequn and Wang, Chuxin and Wang, Yeqiang and Cai, Sihang and Wang, Shulei and Jin, Tao},
  booktitle={Proceedings of the 2025 Conference on Empirical Methods in Natural Language Processing},
  pages={5259--5270},
  year={2025}
}

@article{xie2026hvd,
  title={HVD: Human Vision-Driven Video Representation Learning for Text-Video Retrieval},
  author={Xie, Zequn and Liu, Xin and Zhang, Boyun and Lin, Yuxiao and Cai, Sihang and Jin, Tao},
  journal={arXiv preprint arXiv:2601.16155},
  year={2026}
}

@article{xie2026conquer,
  title={CONQUER: Context-Aware Representation with Query Enhancement for Text-Based Person Search},
  author={Xie, Zequn},
  journal={arXiv preprint arXiv:2601.18625},
  year={2026}
}

@article{xie2026delving,
  title={Delving deeper: Hierarchical visual perception for robust video-text retrieval},
  author={Xie, Zequn and Zhang, Boyun and Lin, Yuxiao and Jin, Tao},
  journal={arXiv preprint arXiv:2601.12768},
  year={2026}
}

@article{ouyang2024learn,
  title={Learn from global correlations: Enhancing evolutionary algorithm via spectral gnn},
  author={Ouyang, Kaichen and Ke, Zong and Fu, Shengwei and Liu, Lingjie and Zhao, Puning and Hu, Dayu},
  journal={arXiv preprint arXiv:2412.17629},
  year={2024}
}

@article{ke2025early,
  title={Early warning of cryptocurrency reversal risks via multi-source data},
  author={Ke, Zong and Cao, Yuqing and Chen, Zhenrui and Yin, Yuchen and He, Shouchao and Cheng, Yu},
  journal={Finance Research Letters},
  pages={107890},
  year={2025},
  publisher={Elsevier}
}

@article{qiu2025convex,
  title={Convex optimization of Markov decision processes based on Z transform: A theoretical framework for two-space decomposition and linear programming reconstruction},
  author={Qiu, Shiqing and Wang, Haoyu and Zhang, Yuxin and Ke, Zong and Li, Zichao},
  journal={Mathematics},
  volume={13},
  number={11},
  pages={1765},
  year={2025},
  publisher={MDPI}
}

@article{zhou2023fastpillars,
 title={FastPillars: A Deployment-friendly Pillar-based 3D Detector},
 author={Zhou, Sifan and Tian, Zhi and Chu, Xiangxiang and Zhang, Xinyu and Zhang, Bo and Lu, Xiaobo and Feng, Chengjian and Jie, Zequn and Chiang, Patrick Yin and Ma, Lin},
 journal={arXiv preprint arXiv:2302.02367},
 year={2023}
}

@inproceedings{pillarhist,
  title={Pillarhist: A quantization-aware pillar feature encoder based on height-aware histogram},
  author={Zhou, Sifan and Yuan, Zhihang and Yang, Dawei and Hu, Xing and Qian, Jian and Zhao, Ziyu},
  booktitle={Proceedings of the Computer Vision and Pattern Recognition Conference},
  pages={27336--27345},
  year={2025}
}

@article{zhou2024information,
  title={Information Entropy Guided Height-aware Histogram for Quantization-friendly Pillar Feature Encoder},
  author={Zhou, Sifan and Yuan, Zhihang and Yang, Dawei and Zhao, Ziyu and Hu, Xing and Shi, Yuguang and Lu, Xiaobo and Wu, Qiang},
  journal={arXiv preprint arXiv:2405.18734},
  year={2024}
}

@article{
zhou2024lidarptq,
title={Li{DAR}-{PTQ}: Post-Training Quantization for Point Cloud 3D Object Detection},
author={Sifan Zhou and Liang Li and Xinyu Zhang and Bo Zhang and Shipeng Bai and Miao Sun and Ziyu Zhao and Xiaobo Lu and Xiangxiang Chu},
booktitle={The Twelfth International Conference on Learning Representations},
year={2024},
}

@inproceedings{anonymous2025comptrack,
title={CompTrack: Information Bottleneck\nobreakdash-Guided Low-Rank Dynamic Token Compression for Point Cloud Tracking}, 
author={Sifan Zhou.},
booktitle={The Fortieth AAAI Conference on Artificial Intelligence}, year={2025}, 
url={https://openreview.net/forum?id=nXExYROmVe}
}

@inproceedings{lu2025focustrack,
  title={FocusTrack: One-Stage Focus-and-Suppress Framework for 3D Point Cloud
Object Tracking},
  author = {Zhou, Sifan and Nie, Jiahao and Zhao, Ziyu and Cao, Yichao and Lu, Xiaobo},
  year={2025},
isbn = {9798400720352},
publisher = {Association for Computing Machinery},
address = {New York, NY, USA},
url = {https://doi.org/10.1145/3746027.3754781},
doi = {10.1145/3746027.3754781},
booktitle={Proceedings of the 33rd ACM International Conference on Multimedia},
pages = {7366–7375},
numpages = {10},
location = {Dublin, Ireland},
}

@inproceedings{gsq,
    title = "{GSQ}-Tuning: Group-Shared Exponents Integer in Fully Quantized Training for {LLM}s On-Device Fine-tuning",
    author = "Zhou, Sifan  and
      Wang, Shuo  and
      Yuan, Zhihang  and
      Shi, Mingjia  and
      Shang, Yuzhang  and
      Yang, Dawei",
    booktitle = "Findings of the Association for Computational Linguistics: ACL 2025",
    month = jul,
    year = "2025",
    address = "Vienna, Austria",
    publisher = "Association for Computational Linguistics",
    pages = "22971--22988",
    ISBN = "979-8-89176-256-5",
}

@inproceedings{wang2025point4bit,
title={Point4Bit: Post Training 4-bit Quantization for Point Cloud 3D Detection},
author={Jianyu Wang and Yu Wang and Shengjie Zhao and Sifan Zhou},
booktitle={The Thirty-ninth Annual Conference on Neural Information Processing Systems},
year={2025},
}

@inproceedings{wu2025cross,
  title={A cross-modal densely guided knowledge distillation based on modality rebalancing strategy for enhanced unimodal emotion recognition},
  author={Wu, Shuang and Liang, Heng and Zhang, Yong and Chen, Yanlin and Jia, Ziyu},
  booktitle={Proceedings of the Thirty-Fourth International Joint Conference on Artificial Intelligence, IJCAI 2025, Montreal, Canada, August 16--22, 2025},
  pages={4236--4244},
  year={2025}
}

@article{qi2025overpp,
  title        = {Over++: Generative Video Compositing for Layer Interaction Effects},
  author       = {Qi, Luchao and Wu, Jiaye and Choi, Jun Myeong and Phillips, Cary and Sengupta, Roni and Goldman, Dan B.},
  year         = {2025},
  eprint       = {2512.19661},
  archivePrefix= {arXiv},
  primaryClass = {cs.CV},
  doi          = {10.48550/arXiv.2512.19661},
  url          = {https://arxiv.org/abs/2512.19661}
}

@article{zhao2026nonintrusive,
  title        = {Non-Intrusive Graph-Based Bot Detection for E-Commerce Using Inductive Graph Neural Networks},
  author       = {Zhao, Sichen and Xue, Zhiming and Qi, Yalun and Zeng, Xianling and Yu, Zihan},
  year         = {2026},
  eprint       = {2601.22579},
  archivePrefix= {arXiv},
  primaryClass = {cs.LG},
  doi          = {10.48550/arXiv.2601.22579},
  url          = {https://arxiv.org/abs/2601.22579}
}

@inproceedings{dai2025seper,
  title     = {SePer: Measure Retrieval Utility Through The Lens Of Semantic Perplexity Reduction},
  author    = {Dai, Lu and Xu, Yijie and Ye, Jinhui and Liu, Hao and Xiong, Hui},
  booktitle = {International Conference on Learning Representations (ICLR)},
  year      = {2025}
}

@inproceedings{liang2025tta,
  title={Tta-feddg: Leveraging test-time adaptation to address federated domain generalization},
  author={Liang, Haoyuan and Zhang, Xinyu and Cao, Shilei and Li, Guowen and Zheng, Juepeng},
  booktitle={Proceedings of the AAAI Conference on Artificial Intelligence},
  volume={39},
  number={18},
  pages={18658--18666},
  year={2025}
}

@inproceedings{liangspfl,
  title={SPFL: Sequential Updates with Parallel Aggregation for Enhanced Federated Learning Under Category and Domain Shifts},
  author={Liang, Haoyuan and Cao, Shilei and Li, Guowen and Ye, Zhiyu and Fu, Haohuan and Zheng, Juepeng},
  booktitle={The Thirty-ninth Annual Conference on Neural Information Processing Systems}
}

@inproceedings{bian2025lora,
  title={LoRA-FAIR: Federated LoRA fine-tuning with aggregation and initialization refinement},
  author={Bian, Jieming and Wang, Lei and Zhang, Letian and Xu, Jie},
  booktitle={Proceedings of the IEEE/CVF International Conference on Computer Vision},
  pages={3737--3746},
  year={2025}
}

@article{bian2025fedalt,
  title={Fedalt: Federated fine-tuning through adaptive local training with rest-of-world lora},
  author={Bian, Jieming and Wang, Lei and Zhang, Letian and Xu, Jie},
  journal={arXiv preprint arXiv:2503.11880},
  year={2025}
}

@article{wang2025adaptive,
  title={Adaptive LoRA Experts Allocation and Selection for Federated Fine-Tuning},
  author={Wang, Lei and Bian, Jieming and Zhang, Letian and Xu, Jie},
  journal={arXiv preprint arXiv:2509.15087},
  year={2025}
}

@article{liu2025fedadamw,
  title={FedAdamW: A Communication-Efficient Optimizer with Convergence and Generalization Guarantees for Federated Large Models},
  author={Liu, Junkang and Shang, Fanhua and Zhu, Kewen and Liu, Hongying and Liu, Yuanyuan and Liu, Jin},
  journal={arXiv preprint arXiv:2510.27486},
  year={2025}
}

@inproceedings{liu2025consistency,
  title={Consistency of local and global flatness for federated learning},
  author={Liu, Junkang and Shang, Fanhua and Tian, Yuxuan and Liu, Hongying and Liu, Yuanyuan},
  booktitle={Proceedings of the 33rd ACM International Conference on Multimedia},
  pages={3875--3883},
  year={2025}
}

@article{liu2025fedmuon,
  title={FedMuon: Accelerating Federated Learning with Matrix Orthogonalization},
  author={Liu, Junkang and Shang, Fanhua and Zhou, Junchao and Liu, Hongying and Liu, Yuanyuan and Liu, Jin},
  journal={arXiv preprint arXiv:2510.27403},
  year={2025}
}

@article{liu2025dp,
  title={DP-FedPGN: Finding Global Flat Minima for Differentially Private Federated Learning via Penalizing Gradient Norm},
  author={Liu, Junkang and Tian, Yuxuan and Shang, Fanhua and Liu, Yuanyuan and Liu, Hongying and Zhou, Junchao and Ding, Daorui},
  journal={arXiv preprint arXiv:2510.27504},
  year={2025}
}

@article{sunmola2025surgical,
  title={Surgical Gaussian Surfels: Highly Accurate Real-time Surgical Scene Rendering},
  author={Sunmola, Idris O and Zhao, Zhenjun and Schmidgall, Samuel and Wang, Yumeng and Scheikl, Paul Maria and Krieger, Axel},
  journal={arXiv preprint arXiv:2503.04079},
  year={2025}
}

@article{bellavia2024image,
  title={Image Matching Filtering and Refinement by Planes and Beyond},
  author={Bellavia, Fabio and Zhao, Zhenjun and Morelli, Luca and Remondino, Fabio},
  journal={arXiv preprint arXiv:2411.09484},
  year={2024}
}

@article{yan2025turboreg,
  title={Turboreg: Turboclique for robust and efficient point cloud registration},
  author={Yan, Shaocheng and Shi, Pengcheng and Zhao, Zhenjun and Wang, Kaixin and Cao, Kuang and Wu, Ji and Li, Jiayuan},
  journal={arXiv preprint arXiv:2507.01439},
  year={2025}
}

@inproceedings{edstedt2024dedode,
  title={Dedode v2: Analyzing and improving the dedode keypoint detector},
  author={Edstedt, Johan and B{\"o}kman, Georg and Zhao, Zhenjun},
  booktitle={Proceedings of the IEEE/CVF Conference on Computer Vision and Pattern Recognition},
  pages={4245--4253},
  year={2024}
}

@inproceedings{Feng2022SSR,
author = {Feng, Chen and Tzimiropoulos, Georgios and Patras, Ioannis},
title = {{SSR: An Efficient and Robust Framework for Learning with Unknown Label Noise}},
booktitle = {33rd British Machine Vision Conference (BMVC)},
year = {2022},
month = {11},
url = {https://bmvc2022.mpi-inf.mpg.de/372/}
}

@InProceedings{Feng2023MaskCon,
author = {Feng, Chen and Patras, Ioannis},
title = {{MaskCon: Masked Contrastive Learning for Coarse-Labelled Dataset}},
booktitle = {Proceedings of the IEEE/CVF Conference on Computer Vision and Pattern Recognition (CVPR)},
month = {6},
year = {2023},
doi = {10.1109/CVPR52729.2023.01907}
}

@ARTICLE{Feng2024NoiseBox,
author = {Feng, Chen and Tzimiropoulos, Georgios and Patras, Ioannis},
journal = {IEEE Transactions on Circuits and Systems for Video Technology},
title = {{NoiseBox: Towards More Efficient and Effective Learning with Noisy Labels}},
year = {2024},
doi = {10.1109/TCSVT.2024.3426994},
ISSN = {1558-2205},
month = {7}
}

@inproceedings{Feng2024CLIPCleaner,
author = {Feng, Chen and Tzimiropoulos, Georgios and Patras, Ioannis},
title = {{CLIPCleaner: Cleaning Noisy Labels with CLIP}},
year = {2024},
doi = {10.1145/3664647.3680664},
month = {10},
booktitle = {The 32nd ACM International Conference on Multimedia (ACM MM)}
}

@inproceedings{Feng2025OpenSet,
author = {Feng, Chen and Sebe, Nicu and Tzimiropoulos, Georgios and Rodrigues, Miguel R. D. and Patras, Ioannis},
title = {Unveiling Open-set Noise: Theoretical Insights into Label Noise},
month = {10},
doi = {10.1145/3746027.3755040},
booktitle = {The 33rd ACM International Conference on Multimedia (ACM MM)},
year = {2025}
}

@InProceedings{Sun2024LAFS,
author = {Sun, Zhonglin and Feng, Chen and Patras, Ioannis and Tzimiropoulos, Georgios},
title = {{LAFS: Landmark-based Facial Self-supervised Learning for Face Recognition}},
booktitle = {Proceedings of the IEEE/CVF Conference on Computer Vision and Pattern Recognition (CVPR)},
month = {6},
year = {2024},
doi = {10.1109/CVPR52733.2024.00162}
}

@inproceedings{Feng2026IdealNoise,
title={Deconstructing the Failure of Ideal Noise Correction: A Three-Pillar Diagnosis},
author={Feng, Chen and Zhi, Zhuo and Huang, Zhao and Ge, Jiawei and Xiao, Ling and Sebe, Nicu and Tzimiropoulos, Georgios and Patras, Ioannis},
booktitle={Proceedings of the IEEE/CVF Conference on Computer Vision and Pattern Recognition (CVPR)},
year={2026},
month={6}
}

@inproceedings{Feng2025PROSAC,
title = {{PROSAC}: Provably Safe Certification for Machine Learning Models under Adversarial Attacks},
author = {Feng, Chen and Liu, Ziquan and Zhi, Zhuo and Bogunovic, Ilija and Gerner-Beuerle, Carsten and Rodrigues, Miguel R. D.},
booktitle = {The 39th Annual AAAI Conference on Artificial Intelligence (AAAI) [Oral]},
month = {2},
doi = {10.1609/aaai.v39i3.32300},
year = {2025}
}

@inproceedings{Feng2026NoisyValid,
title = {Noisy but Valid: Robust Statistical Evaluation of {LLM}s with Imperfect Judges},
author = {Feng, Chen and Shen, Minghe and Balashankar, Ananth and Gerner-Beuerle, Carsten and Rodrigues, Miguel R. D.},
booktitle = {The Fourteenth International Conference on Learning Representations (ICLR)},
month = {4},
year = {2026},
url = {https://openreview.net/forum?id=hEhxreaLdU}
}

@ARTICLE{10605121,
  author={Meng, Lei and Qi, Zhuang and Wu, Lei and Du, Xiaoyu and Li, Zhaochuan and Cui, Lizhen and Meng, Xiangxu},
  journal={IEEE Transactions on Neural Networks and Learning Systems}, 
  title={Improving Global Generalization and Local Personalization for Federated Learning}, 
  year={2025},
  volume={36},
  number={1},
  pages={76-87},
  doi={10.1109/TNNLS.2024.3417452}}

@article{qi2026federated,
  title={Federated learning in oncology: bridging artificial intelligence innovation and privacy protection},
  author={Qi, Xin and Xu, Tao and Dang, Chengrun and Qi, Zhuang and Meng, Lei and Yu, Han},
  journal={Information Fusion},
  pages={104154},
  year={2026},
  publisher={Elsevier}
}

@inproceedings{qi2023cross,
  title={Cross-silo prototypical calibration for federated learning with non-iid data},
  author={Qi, Zhuang and Meng, Lei and Chen, Zitan and Hu, Han and Lin, Hui and Meng, Xiangxu},
  booktitle={Proceedings of the 31st ACM international conference on multimedia},
  pages={3099--3107},
  year={2023}
}

@article{qi2025federated,
  title={Federated Learning for Science: A Survey on the Path to a Trustworthy Collaboration Ecosystem},
  author={Qi, Xin and Li, Meixuan and Zhou, Sijin and Feng, Wei and Qi, Zhuang},
  journal={Authorea Preprints},
  year={2025},
  publisher={Authorea}
}
\bibliographystyle{iclr2026_conference}

\newpage
\appendix

\section*{Appendix for Submission \#5382}
\noindent\textbf{LIST OF APPENDIX}

\noindent \hyperref[app:LVM]{A: Additional Details for Image Classification}

\begin{enumerate}[label=A.\arabic*, leftmargin=3em]
  \item \hyperref[app:LVM_datasets]{Datasets and models}
  \item \hyperref[app:LVM_setup]{Experimental setup}
  \item \hyperref[app:LVM_results]{Additional experimental results}
  \item \hyperref[app:LVM_overhead]{Computational and memory overhead}
  \item \hyperref[app:LVM_rank]{Impact of different ranks on performance}
  \item \hyperref[app:central_exp]{Centralized Experiments}
  \item \hyperref[app:non_dp_exp]{Non-Private Federated Experiments}
\end{enumerate}

\noindent \hyperref[app:LLM]{B: Additional Details  for Language Understanding}

\begin{enumerate}[label=B.\arabic*, leftmargin=3em]
  \item \hyperref[app:LLM_datasets]{Datasets and models}
  \item \hyperref[app:LLM_setup]{Experimental setup}
  \item \hyperref[app:LLM_results]{Additional experimental results}
  \item \hyperref[app:LLM_Llama]{Additional Experiments on Llama-2-7B}
  \item \hyperref[app:LLM_more_heterogeneity]{Language model results under data heterogeneity $\beta = 0.3$}
\end{enumerate}

\noindent \hyperref[app:LLM]{C: Additional Ablation Studies}

\begin{enumerate}[label=C.\arabic*, leftmargin=3em]
  \item \hyperref[app:Abl_Swin_T]{Ablation Results for Swin-T on CIFAR-100 and Tiny-ImageNet}
  \item \hyperref[app:filter_ablation]{Ablation on Smoothing Strategies}
  \item \hyperref[app:ablation_three_challenges]{Ablation on the three challenges}
  \item \hyperref[app:Abl_K]{Effect of local steps K}
\end{enumerate}

\noindent \hyperref[app:filter]{D: More results for Gaussian low-pass smoothing filter}

\begin{enumerate}[label=D.\arabic*, leftmargin=3em]
  \item \hyperref[app:smoothing_parameter]{Smoothing parameter $\sigma_s$}
  \item \hyperref[app:kernel_width]{Effect of different kernel widths}
\end{enumerate}

\noindent \hyperref[app:Theoretical]{E: Detailed Theoretical Analysis}

\begin{enumerate}[label=E.\arabic*, leftmargin=3em]
  \item \hyperref[app:privacy]{Privacy guarantee}
  \item \hyperref[app:Closed-form]{Closed-form projected gradients}
  \item \hyperref[app:Stable-feature-learning]{Stable feature learning}
  \item \hyperref[appendix_convergence]{Convergence Analysis}
  \item \hyperref[app:com_ffa_ro]{Comparison with FFA-LoRA and RoLoRA}
\end{enumerate}

\noindent \hyperref[app:table_notations]{F: Table Of Notations}

\noindent \hyperref[LLM_usage]{G: LLM Usage}

\newpage

\section{Additional Details for Image Classification}
\label{app:LVM}
Section~\ref{sec6} outlines the main experimental configurations and results. For completeness, we summarize only the additional settings not previously described and further results. 
FedBCGD \cite{liu2024fedbcgd} proposes an accelerated block coordinate gradient descent framework for FL.
FedSWA \cite{liuimproving} improves generalization under highly heterogeneous data by stochastic weight averaging.
FedAdamW \cite{liu2025fedadamw} introduces a communication-efficient AdamW-style optimizer tailored for federated large models.
FedNSAM \cite{liu2025consistency} studies the consistency relationship between local and global flatness in FL.
FedMuon \cite{liu2025fedmuon} accelerates federated optimization via matrix orthogonalization.
DP-FedPGN \cite{liu2025dp} develops a penalizes gradient norms to encourage globally flatter minima in DP-FL ~\cite{xie2025chat,xie2026hvd,xie2026conquer,xie2026delving, ouyang2024learn,ke2025early,qiu2025convex, 
zhou2023fastpillars, pillarhist, zhou2024information, zhou2024lidarptq,anonymous2025comptrack,lu2025focustrack,gsq,wang2025point4bit,wu2025cross,qi2025overpp,zhao2026nonintrusive,dai2025seper,liang2025tta,liangspfl, bian2025lora,bian2025fedalt,wang2025adaptive,bellavia2024image,sunmola2025surgical,yan2025turboreg,edstedt2024dedode,10605121,qi2026federated,qi2023cross,qi2025federated}.\cite{Feng2025OpenSet,Sun2024LAFS, Feng2026IdealNoise,Feng2025PROSAC,Feng2026NoisyValid}

\subsection{Datasets and models}
\label{app:LVM_datasets}

\begin{itemize}
    \item CIFAR-100~\cite{krizhevsky2009learning} comprises 100 categories organized into 20 broader groups, containing 50,000 training images and 10,000 test images. Each image is a 32×32 pixel RGB color image. Every category includes 600 images, with 500 for training and 100 for testing. Tiny-ImageNet~\cite{tinyimagenet} is a subset of the ImageNet dataset containing 200 object categories, totaling around 100,000 images. Each image is a 64×64 pixel RGB color image. Every category consists of 500 training images, 50 validation images, and 50 test images.
    \item Swin-T~\cite{liu2021swin} is the smallest variant of the Swin Transformer, using a 4×4 patch embedding with a 96-dimensional embedding size. It has stage depths of [2, 2, 6, 2], about 28M parameters. Swin-B~\cite{liu2021swin} is a larger variant with a 4×4 patch embedding and a 128-dimensional embedding size. It has stage depths of [2, 2, 18, 2], about 88M parameters.
\end{itemize}

\subsection{Experimental setup}
\label{app:LVM_setup}
In image classification fine-tuning tasks, we apply LoRA with rank $r=\alpha=16$, keeping the classification head trainable so that it can adapt to the target dataset’s label space and feature distribution, and thus remain compatible with the LoRA-updated representations during local training. 

We fix $\delta = 1e-5$ and use an RDP accountant. For each target privacy budget $\epsilon \in \{3,2,1\}$, we grid-search the Rényi order $\lambda$ under the subsampled Gaussian mechanism with per-sample $\ell_2$ clipping, and choose the noise scale $\sigma$ that satisfies the $(\epsilon,\delta)$ constraint after converting from RDP. The resulting $\sigma$ are:
\begin{itemize}
  \item \textbf{CIFAR-100}: $\sigma \in \{0.195,\ 0.29,\ 0.56\}$,
  \item \textbf{Tiny-ImageNet}: $\sigma \in \{0.098,\ 0.146,\ 0.283\}$.
\end{itemize}

These correspond to privacy budgets of $\epsilon \in \{3,\ 2,\ 1\}$ respectively. Gradient clipping is performed on a per-layer basis using a \textit{median clipping} strategy, where each layer’s clipping threshold $C$ is set to the median of its gradient norm distribution, enabling balanced sensitivity control and stable training under differential privacy constraints.

\subsection{Additional experimental results}
\label{app:LVM_results}
Table~\ref{result_swin} in Section~6 presents the final performance comparison between our LA-LoRA and three SOTA baselines across different $\epsilon$ values. In this section, we present the complete convergence curves corresponding to these experiments. Figure~\ref{Figure_LVM_epsilon_2} presents the convergence curves for the Swin-T and Swin-B models under $\epsilon{=}2$. In both cases, LA-LoRA consistently outperforms the three SOTA baselines. The improvement is particularly pronounced on Tiny-ImageNet with Swin-B, where LA-LoRA surpasses RoLoRA’s $43.97\%$ by $17.06\%$.
Figure~\ref{Figure_LVM_epsilon_3} illustrates the convergence of test accuracy for CIFAR-100 and Tiny-ImageNet using Swin-T and Swin-B under $\epsilon = 3$. LA-LoRA consistently achieves higher accuracy and faster convergence than the three baselines.

\begin{figure}[ht]
        \centering
	\begin{minipage}[b]{0.23\textwidth}
		\subcaptionbox{CIFAR-100, Swin-T \label{fig:5a}}{\includegraphics[width=\textwidth]{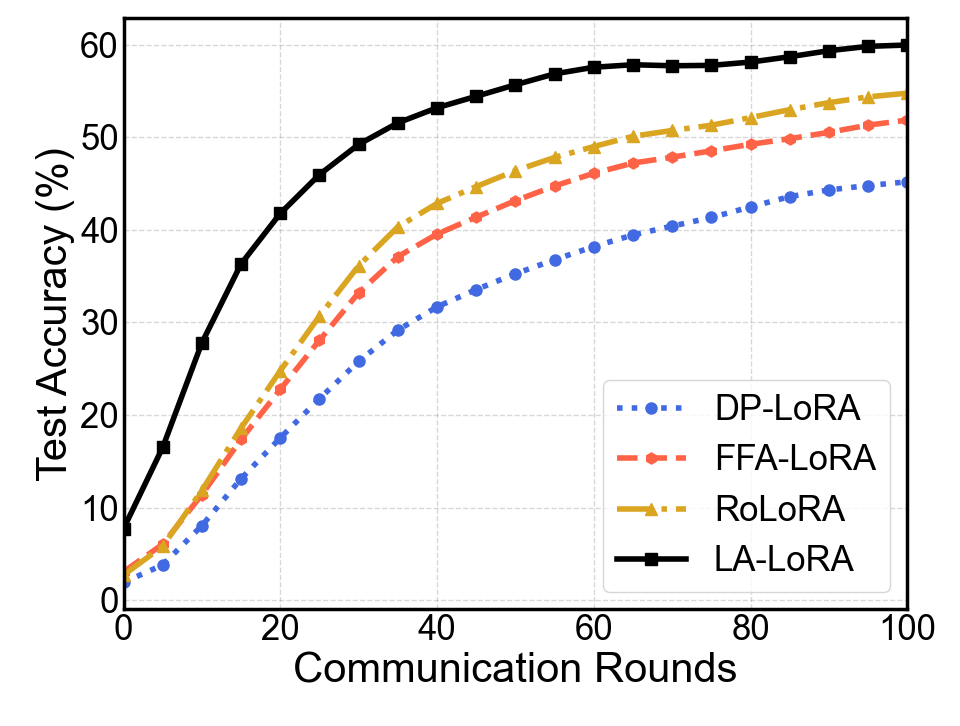}}
	\end{minipage}
	\begin{minipage}[b]{0.23\textwidth}
		\subcaptionbox{ImageNet, Swin-T \label{fig:5b}}{\includegraphics[width=\textwidth]{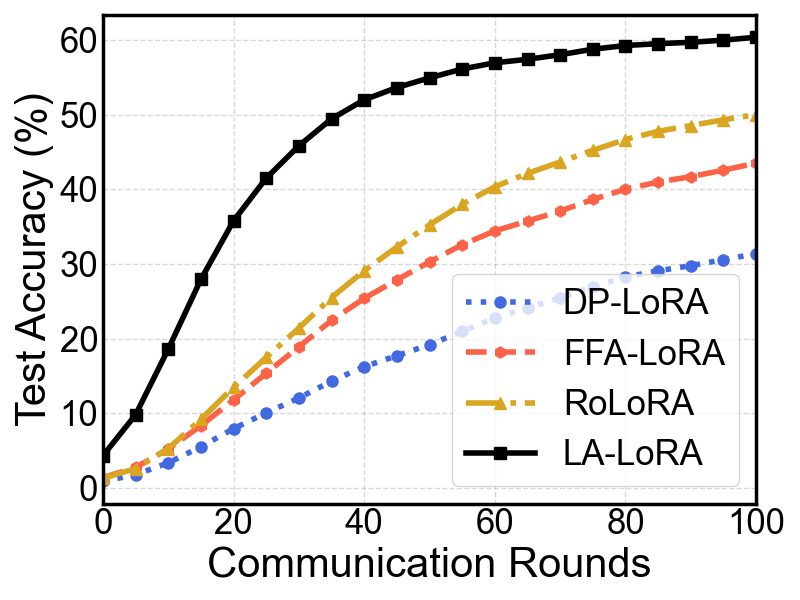}}
	\end{minipage}
    \begin{minipage}[b]{0.23\textwidth}
		\subcaptionbox{CIFAR-100, Swin-B \label{fig:6c}}{\includegraphics[width=\textwidth]{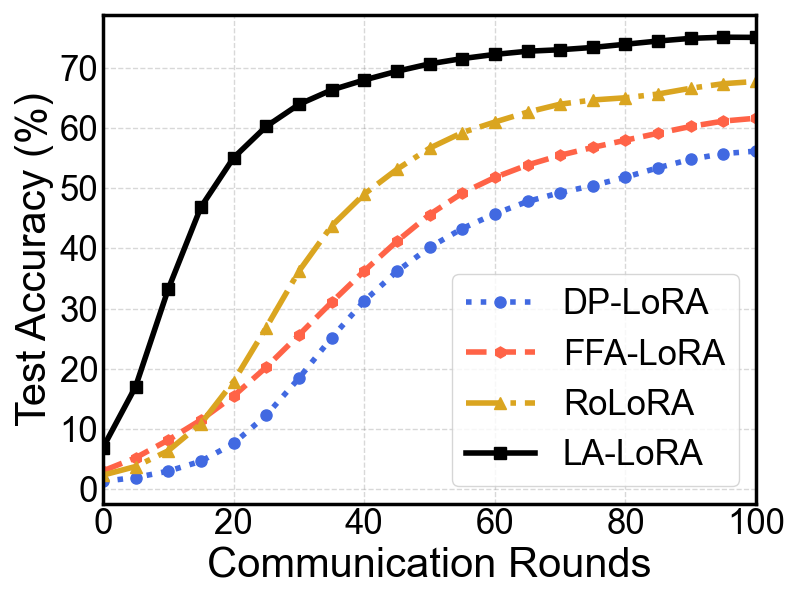}}
	\end{minipage}
	\begin{minipage}[b]{0.23\textwidth}
		\subcaptionbox{ImageNet, Swin-B \label{fig:6d}}{\includegraphics[width=\textwidth]{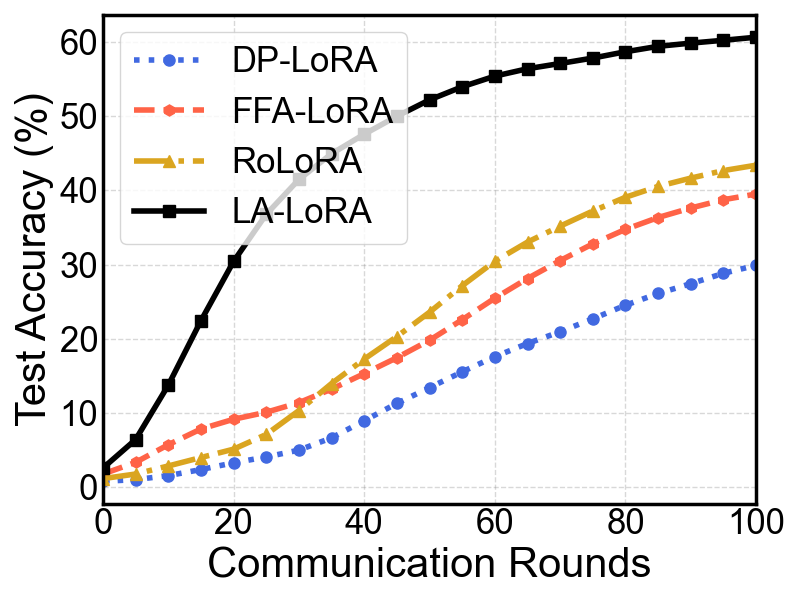}}
	\end{minipage}
	\caption{Test accuracy of Swin-T and Swin-B on CIFAR-100 and Tiny-ImageNet with $\epsilon = 2$.}\label{Figure_LVM_epsilon_2}
\end{figure}

\begin{figure*}[h]
        \centering
	\begin{minipage}[b]{0.23\textwidth}
		\subcaptionbox{ CIFAR-100, Swin-T \label{fig:4a}}{\includegraphics[width=\textwidth]{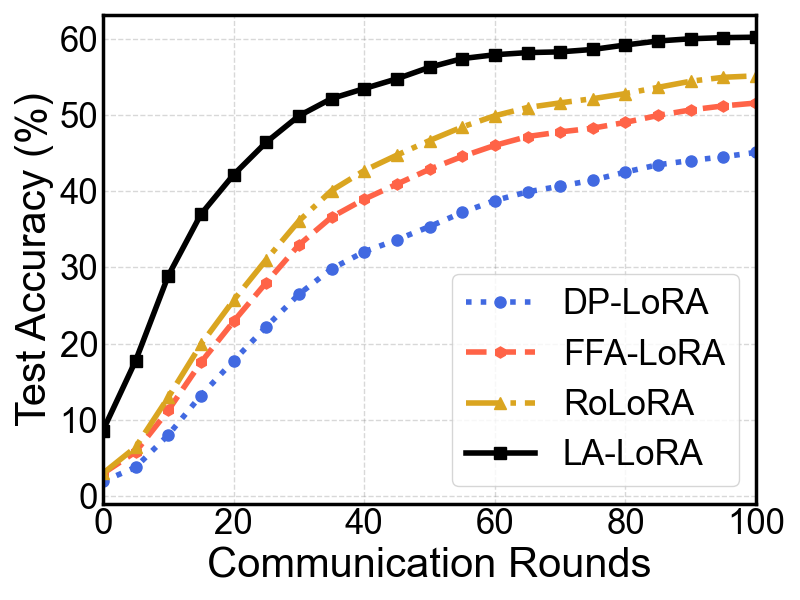}}
	\end{minipage}
	\begin{minipage}[b]{0.23\textwidth}
		\subcaptionbox{ImageNet, Swin-T \label{fig:4b}}{\includegraphics[width=\textwidth]{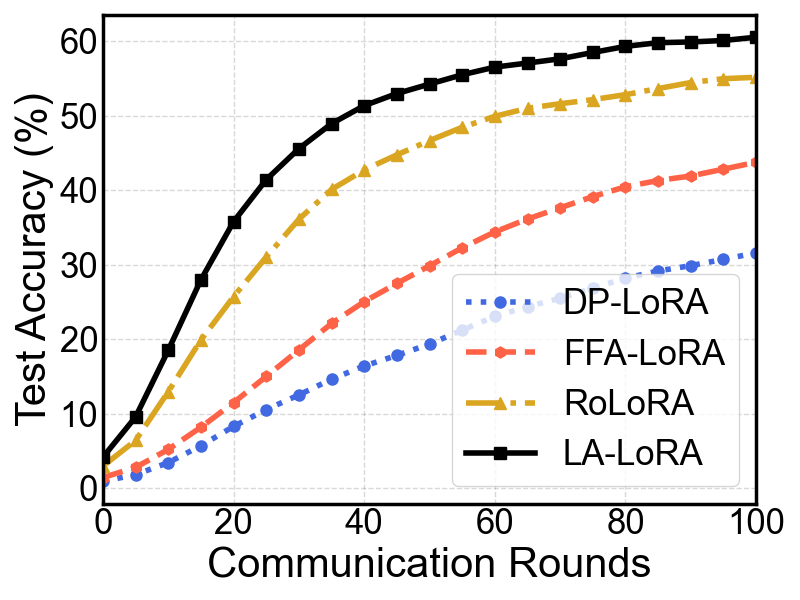}}
	\end{minipage}
	\begin{minipage}[b]{0.23\textwidth}
		\subcaptionbox{ CIFAR-100, Swin-B \label{fig:4c}}{\includegraphics[width=\textwidth]{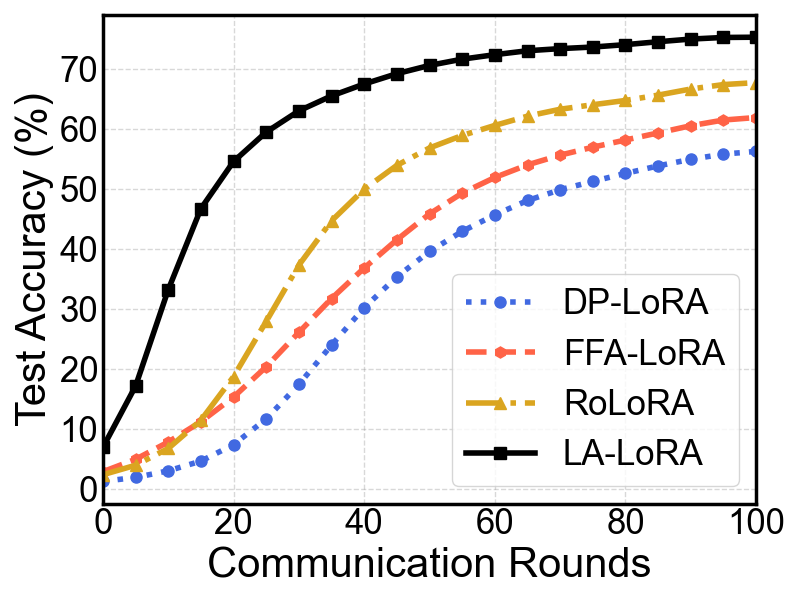}}
	\end{minipage}
	\begin{minipage}[b]{0.23\textwidth}
		\subcaptionbox{ImageNet, Swin-B \label{fig:4d}}{\includegraphics[width=\textwidth]{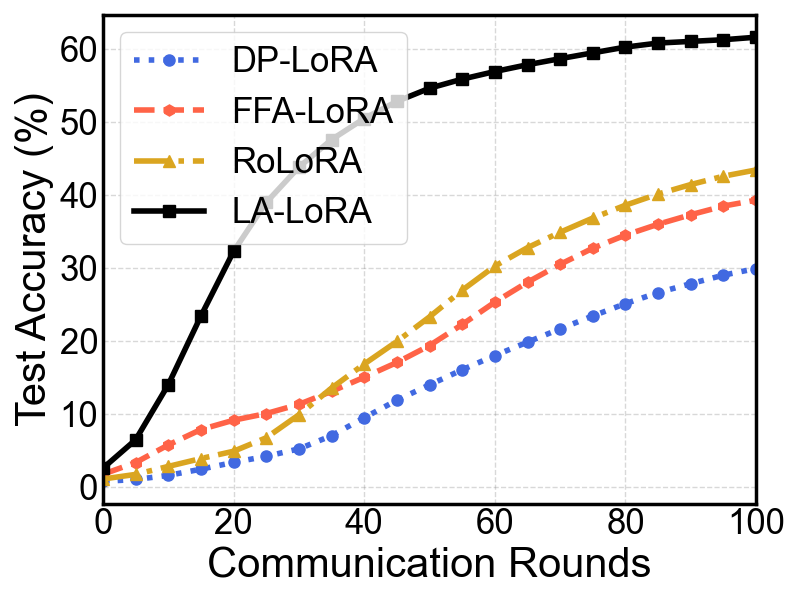}}
	\end{minipage}
	\caption{Test accuracy of Swin-T and Swin-B on CIFAR-100 and Tiny-ImageNet with $\epsilon = 3$.}\label{Figure_LVM_epsilon_3}
    \vspace{-2mm}
\end{figure*}

\subsection{Computational and memory overhead}
\label{app:LVM_overhead}

Table~\ref{result_time} reports the per-round computation cost, memory cost and test accuracy of Swin-B at $\epsilon = 1$. Compared with standard DP-LoRA, LA-LoRA reduces the per-round time from $30.35$\,s to $17.44$\,s on CIFAR-100 and from $28.02$\,s to $17.23$\,s on Tiny-ImageNet, and halves the memory cost from $3524$ MB to $1762$ MB thanks to the low-rank parameterization. 

Among LoRA-based DP baselines, LA-LoRA has a per-round time that is comparable to the fastest method: it differs by at most about $1$ s, while consistently achieving the best accuracy. Overall, LA-LoRA offers substantial accuracy gains with negligible additional computational overhead compared to other low-rank DP methods.

\begin{table*}[h]
\centering
\small
\setlength{\tabcolsep}{3pt}
\caption{Per-round computation cost, memory cost and performance comparison of Swin-B at $\epsilon=1$.}
\label{result_time}
\begin{tabular}{l|cc|cc|cc}
\toprule
\multirow{2}{*}{\textbf{Method}} & \multicolumn{2}{c|}{\textbf{Time Cost (s)}} & \multicolumn{2}{c|}{\textbf{Memory Cost (MB)}} & \multicolumn{2}{c}{\textbf{Test Accuracy (\%)}} \\
 & CIFAR-100 & Tiny-ImageNet & CIFAR-100 & Tiny-ImageNet & CIFAR-100 & Tiny-ImageNet \\
\midrule
DP-LoRA      & $30.35$ & $28.02$ & $3524$ & $3524$ & $55.98$ & $30.20$ \\
DP-LoRA(+filter) & $30.72$ & $28.51$ & $3524$ & $3524$ & $67.95$ & $48.09$ \\
FFA-LoRA     & $17.85$ & $16.54$ & $1762$ & $1762$ & $61.94$ & $39.33$ \\
RoLoRA       & $16.64$ & $16.32$ & $1762$ & $1762$ & $67.88$ & $43.85$ \\
LA-LoRA(-filter) & $17.30$ & $17.16$ & $1762$ & $1762$ & $69.87$ & $52.72$ \\
\textbf{LA-LoRA} & $17.44$ & $17.23$ & $1762$ & $1762$ & $74.56$ & $60.68$ \\
\bottomrule
\end{tabular}
\end{table*}

\subsection{Impact of different ranks on performance} 
\label{app:LVM_rank}

\begin{table}[t]
\caption{Impact of rank of Swin-B on CIFAR-100 and Tiny-ImageNet at $\epsilon=1$, averaged over 5 runs.}
\label{result_rank}
\small
\centering
\begin{tabular}{cl|cc}
\toprule
\textbf{Rank} & \textbf{Method} & \textbf{CIFAR-100} & \textbf{Tiny-ImageNet} \\
\midrule
\multirow{4}{*}{$r=8$}
& DP-LoRA          & $54.30_{\pm0.52}$ & $29.57_{\pm0.53}$ \\
& FFA-LoRA         & $59.78_{\pm0.44}$ & $36.58_{\pm0.35}$ \\
& RoLoRA           & $65.74_{\pm0.61}$ & $40.89_{\pm0.56}$ \\
& \textbf{LA-LoRA} & $\mathbf{72.88}_{\pm0.57}$ & $\mathbf{59.03}_{\pm0.45}$ \\
\midrule
\multirow{4}{*}{$r=16$}
& DP-LoRA          & $55.98_{\pm0.56}$ & $30.20_{\pm0.46}$ \\
& FFA-LoRA         & $61.94_{\pm0.37}$ & $39.33_{\pm0.48}$ \\
& RoLoRA           & $67.88_{\pm0.32}$ & $43.85_{\pm0.60}$ \\
& \textbf{LA-LoRA} & $\mathbf{74.56}_{\pm0.52}$ & $\mathbf{60.68}_{\pm0.55}$ \\
\midrule
\multirow{4}{*}{$r=32$}
& DP-LoRA          & $56.33_{\pm0.28}$ & $31.57_{\pm0.55}$ \\
& FFA-LoRA         & $62.64_{\pm0.33}$ & $44.45_{\pm0.37}$ \\
& RoLoRA           & $67.97_{\pm0.40}$ & $44.36_{\pm0.39}$ \\
& \textbf{LA-LoRA} & $\mathbf{75.34}_{\pm0.51}$ & $\mathbf{62.99}_{\pm0.42}$ \\
\bottomrule
\end{tabular}
\end{table}
Table~\ref{result_rank} summarizes the test accuracy of four methods on the Swin-B model for CIFAR-100 and Tiny-ImageNet under a privacy budget of $\epsilon {=} 1$ with varying rank values ($r {=}8, 16, 32$). LA-LoRA consistently achieves the highest accuracy in all configurations, attaining $75.34\%$ on CIFAR-100 and $62.99\%$ on Tiny-ImageNet when $r {=} 32$. In contrast, DP-LoRA exhibits the lowest accuracy and the greatest sensitivity to rank variation, indicating that its performance is more constrained by representational capacity under strict privacy constraints. FFA-LoRA and RoLoRA demonstrate intermediate performance, benefiting steadily from increased rank. Furthermore, Tiny-ImageNet is complex, resulting in slower convergence within the limited number of communication rounds. Consequently, its absolute accuracy is substantially lower than that of CIFAR-100, while the relative ranking of the methods remains unchanged.

These observations indicate that increasing the rank can enhance representational capacity and mitigate the adverse effects of differential privacy noise, with LA-LoRA exploiting this advantage most effectively. However, a larger rank also introduces potential drawbacks, including increased communication cost, heavier local computation, and a higher risk of overfitting under limited communication rounds. Balancing these trade-offs, we adopt $r = 16$ as the baseline setting in our experiments to achieve a favorable compromise between performance and efficiency.

\subsection{Centralized Experiments}\label{app:central_exp}

To verify how the challenges discussed in Section~\ref{sec:3} manifest in the centralized setting, we conduct centralized DP experiments using the same model architecture, optimizer, clipping norm, and noise multiplier as in our federated setup. 

As summarized in Table~\ref{tab:central_vs_fed}, LA-LoRA(-filter) achieves higher test accuracy and higher ``Grad.\ Cos.\ (late)'' than DP-LoRA in the centralized setting. This confirms that the DP–LoRA issues we identify are not specific to federated training. Moreover, the gain of LA-LoRA(-filter) over DP-LoRA increases from $1.18\%$ in centralized DP training to $11.22\%$ in federated DP training, and the gradient cosine gap widens from $0.032$ to $0.108$, indicating that federated optimization exacerbates the gradient coupling inherent in centralized settings.

\begin{table}[h]
\small
  \centering
    \caption{Centralized and federated DP training for Swin-T on CIFAR-100 with $\epsilon=3$. ``Grad.\ Cos.\ (late)'' denotes the average cosine similarity between $\nabla_A\mathcal{L}$ and $\nabla_B\mathcal{L}$ over the last 10\% of training steps. Federated DP-LoRA has lower test accuracy and gradient cosine than centralized DP-LoRA, while LA-LoRA(-filter) improves both settings.}
  \begin{tabular}{llcccc}
   \toprule
    Setting & Method & Test Acc.\ (\%) & $\Delta$Acc & Grad.\ Cos.\ (late) & $\Delta$Cos\\
    \midrule
    Centralized & DP-LoRA                & $76.11_{\pm 0.38}$ & - & 0.681 & -\\
                & LA-LoRA(-filter)   & $77.29_{\pm 0.43}$ & $\uparrow$ 1.18 & 0.713 & $\uparrow$ 0.032\\
    Federated   & DP-LoRA           & $45.40_{\pm0.40}$ & - & 0.337 &-\\
                & LA-LoRA(-filter)   & $56.62_{\pm 0.54}$ & $\uparrow$ 11.22 & 0.445 & $\uparrow$ 0.108\\
    \bottomrule
  \end{tabular}
  \label{tab:central_vs_fed}
\end{table}

\subsection{Non-Private Federated Experiments}\label{app:non_dp_exp}

To assess whether the observed gradient behavior persists in the absence of DP, we run federated experiments \emph{without} DP. Table~\ref{tab:non_dp} shows that, even without gradient clipping and $\sigma=0$, LA-LoRA(-filter) achieves higher test accuracy and higher ``Grad.\ Cos.\ (late)'' than Fed-LoRA, indicating that gradient alignment is beneficial even in the non-private setting.

\begin{table}[h]
\small
  \centering
    \caption{Non-private federated training for Swin-T on CIFAR-100. ``Grad.\ Cos.\ (late)'' denotes the average cosine similarity between $\nabla_A\mathcal{L}$ and $\nabla_B\mathcal{L}$ over the last 10\% of training steps.}
  \begin{tabular}{llcccc}
   \toprule
    & Method & Test Acc.\ (\%) & $\Delta$Acc & Grad.\ Cos.\ (late) & $\Delta$Cos\\
    \midrule
    \multirow{2}{*}{Non-private}
    &Fed-LoRA           & $90.56_{\pm 0.22}$ & - & 0.694 & -\\
    &LA-LoRA(-filter)   & $91.25_{\pm 0.15}$ & $\uparrow$0.69  & 0.783 & $\uparrow$ 0.089\\
    \bottomrule
  \end{tabular}
  \label{tab:non_dp}
\end{table}

Table~\ref{tab:non_dp_swin_b} shows the performance comparison on Swin-B under a non-private federated architecture. DP-LoRA is the same as Fed-LoRA. On the CIFAR-100, LA-LoRA achieves a test accuracy of $84.21\%$, about $1\%$ higher than DP-LoRA ($83.21\%$) and RoLoRA ($83.25\%$).

\begin{table*}[h]
	\caption{Non-private federated training for Swin-B on CIFAR-100 and Tiny-ImageNet.}
	\label{result_dodp_LVM}
	\centering
        \small
	\begin{tabular}{cl|cc}
		\toprule
          & Method &\textbf{CIFAR-100} &
        \textbf{Tiny-ImageNet} \\
		\midrule
		\multirow{4}{*}{Non-private}  
		& DP-LoRA         &   $83.21$  & $81.37$\\
		& FFA-LoRA        &   $81.65$  & $80.76$  \\
		& RoLoRA          &   $83.25$  & $81.03$ \\
		&\textbf{LA-LoRA} & $\mathbf{84.21}$ & $\mathbf{82.58}$  \\
		\bottomrule
	\end{tabular}
    \label{tab:non_dp_swin_b}
\end{table*}

\section{Additional Details  for Language Understanding}
\label{app:LLM}

\subsection{Datasets and models}
\label{app:LLM_datasets}

\begin{itemize}
\item We evaluate our approach on four representative tasks from the GLUE benchmark~\cite{wang2018glue}: SST-2, QNLI, QQP, and MNLI. SST-2 is a binary sentiment classification dataset derived from the Stanford Sentiment Treebank, containing about 67K training sentences.  
QNLI is a binary natural language inference dataset converted from the Stanford Question Answering Dataset (SQuAD), with approximately 105K training sentence–question pairs.  
QQP is a binary paraphrase identification dataset from Quora, comprising around 364K training question pairs.  
MNLI is a three-way natural language inference dataset with multi-genre coverage, containing roughly 393K training sentence pairs.

\item RoBERTa-Base~\cite{liu2019roberta} is a transformer-based language model optimized for robust pretraining. It adopts the BERT architecture with 12 transformer encoder layers, 12 self-attention heads per layer, and a hidden size of 768, totaling approximately 125M parameters. Compared to the original BERT, RoBERTa removes the next-sentence prediction objective, uses larger batch sizes, trains on more data, and applies dynamic masking, resulting in improved performance across a variety of NLP benchmarks.

\end{itemize}

\subsection{Experimental setup}
\label{app:LLM_setup}

For language understanding tasks, we freeze the classification head to preserve the well-trained label mapping of the language models, reduce the susceptibility of its relatively small parameter space to DP noise, and ensure stable and efficient adaptation via LoRA updates. We apply LoRA with $r = \alpha = 8$ to match the language models, using a maximum sequence length of $l_{\text{seq}} = 128$. 

For privacy parameters, we set $\delta = 1e-5$ for SST-2 and QNLI, and $\delta = 1e-6$ for QQP and MNLI to account for their larger dataset sizes. The noise multipliers corresponding to privacy budgets $\epsilon \in \{3, 2, 1\}$ are: 
\begin{itemize}
    \item \textbf{SST-2}: $\sigma \in \{0.36,\ 0.53,\ 1.0\}$,
    \item \textbf{QNLI}: $\sigma \in \{0.23,\ 0.34,\ 0.67\}$,
    \item  \textbf{QQP}: $\sigma \in \{0.073,\ 0.11,\ 0.21\}$,
    \item \textbf{MNLI}: $\sigma \in \{0.067,\ 0.10,\ 0.195\}$.
\end{itemize}
Gradient clipping follows the same strategy as in the image classification setup.

\subsection{Additional experimental results}
\label{app:LLM_results}

\begin{figure*}[h]
        \centering
	\begin{minipage}[b]{0.23\textwidth}
		\centering
		\subcaptionbox{ SST2, $\epsilon=1$ }{\includegraphics[width=\textwidth]{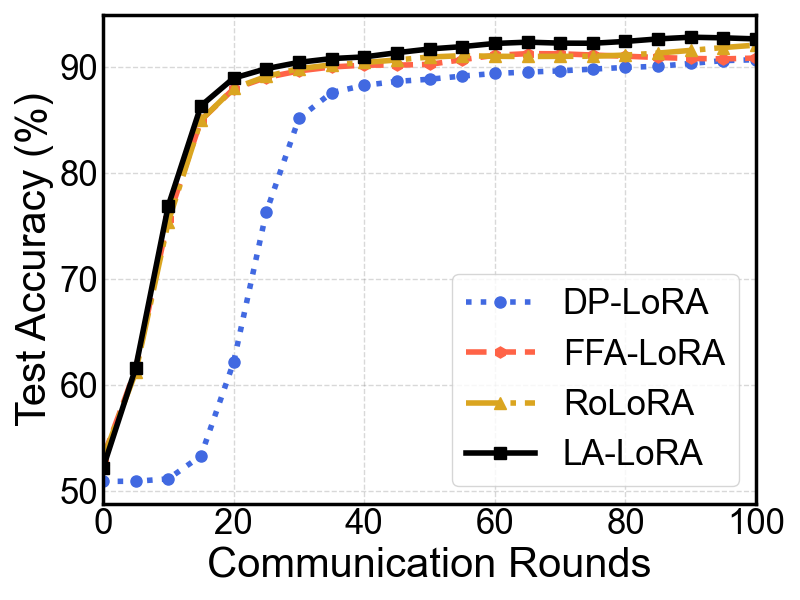}}
	\end{minipage}
	\begin{minipage}[b]{0.23\textwidth}
		\centering
		\subcaptionbox{ QNLI, $\epsilon=1$ }{\includegraphics[width=\textwidth]{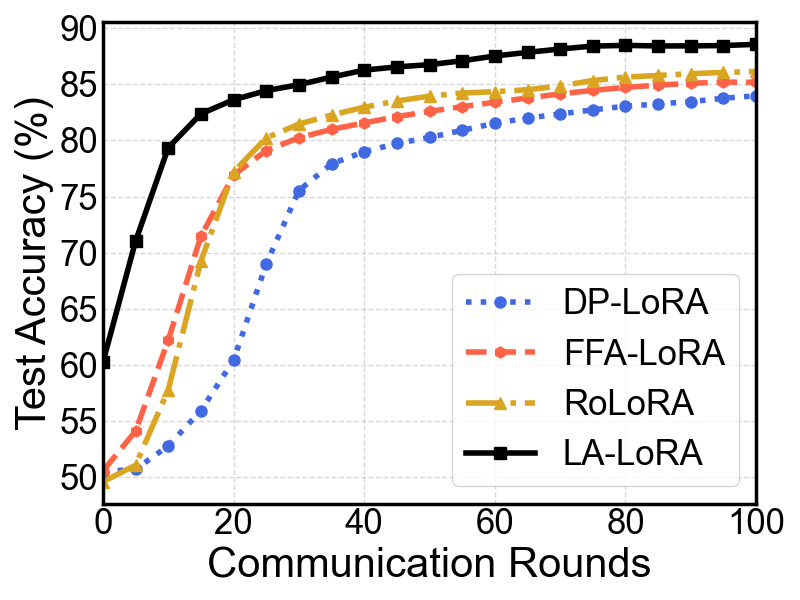}}
	\end{minipage}
	\begin{minipage}[b]{0.23\textwidth}
		\centering
		\subcaptionbox{ QQP, $\epsilon=1$ }{\includegraphics[width=\textwidth]{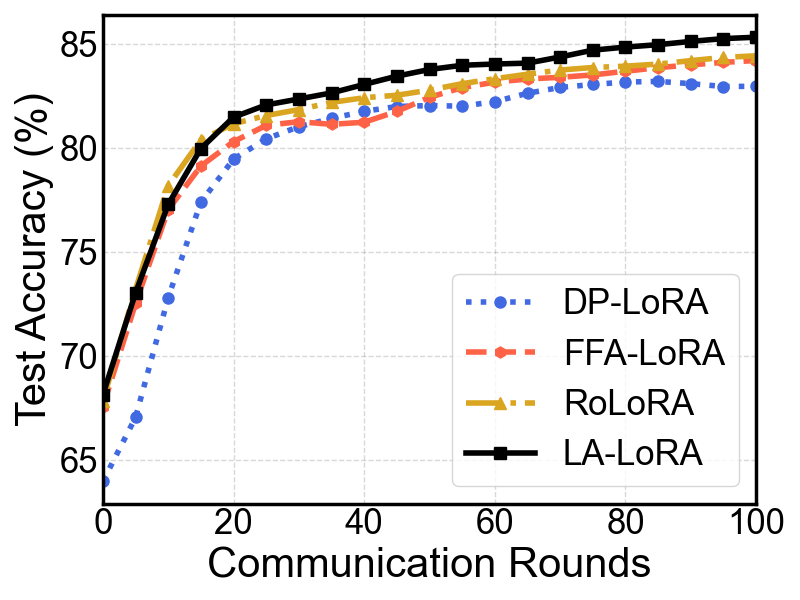}}
	\end{minipage}    
    	\begin{minipage}[b]{0.23\textwidth}
		\centering
		\subcaptionbox{ MNLI, $\epsilon=1$ }{\includegraphics[width=\textwidth]{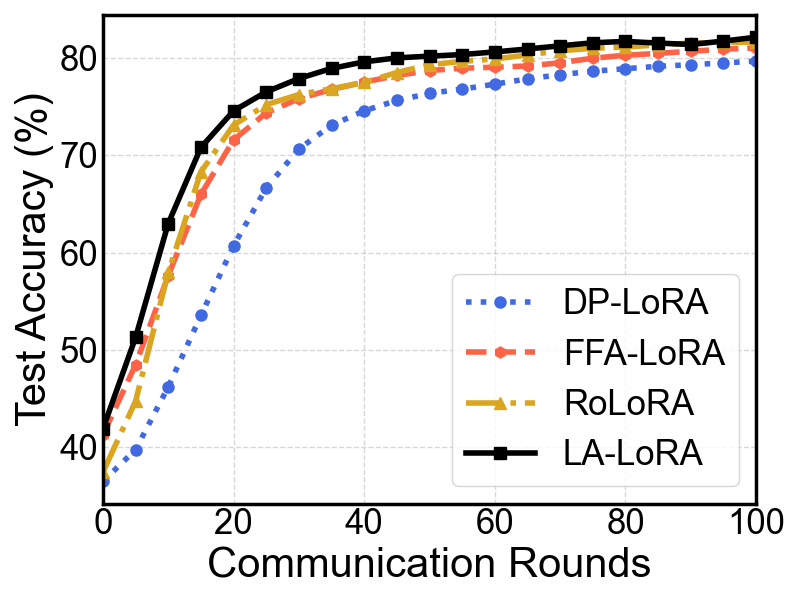}}
	\end{minipage}
	\caption{Test accuracy of RoBERTa-Base on SST-2, QNLI, QQP, and MNLI with $\epsilon = 1$.}
	\label{fig:SST2_QNLI_1}
\end{figure*}

\begin{figure*}[h]
        \centering
	\begin{minipage}[b]{0.23\textwidth}
		\centering
		\subcaptionbox{ SST2, $\epsilon=2$ }{\includegraphics[width=\textwidth]{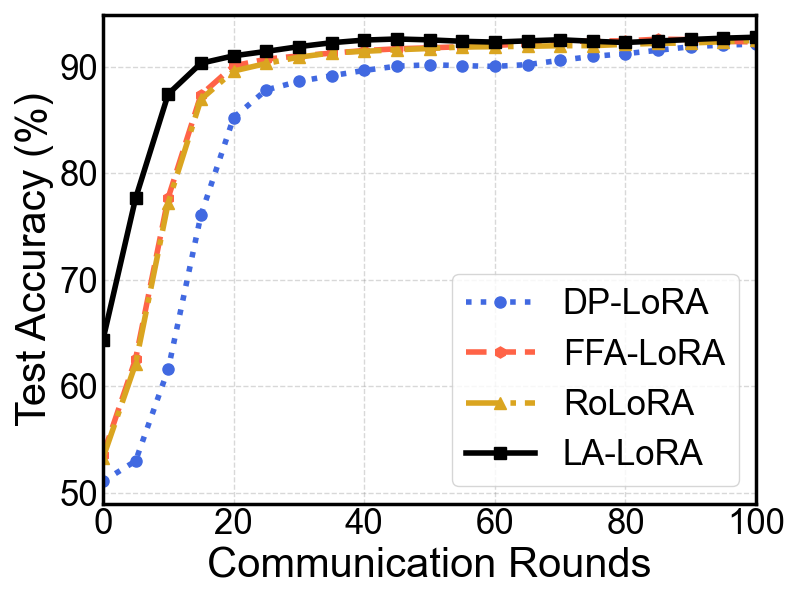}}
	\end{minipage}
	\begin{minipage}[b]{0.23\textwidth}
		\centering
		\subcaptionbox{ QNLI, $\epsilon=2$ }{\includegraphics[width=\textwidth]{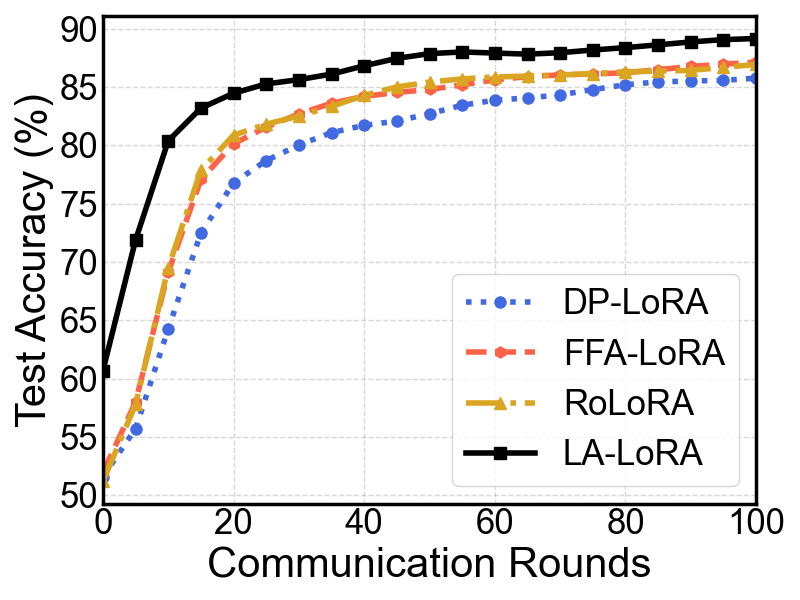}}
	\end{minipage}
	\begin{minipage}[b]{0.23\textwidth}
		\centering
		\subcaptionbox{ QQP, $\epsilon=2$ }{\includegraphics[width=\textwidth]{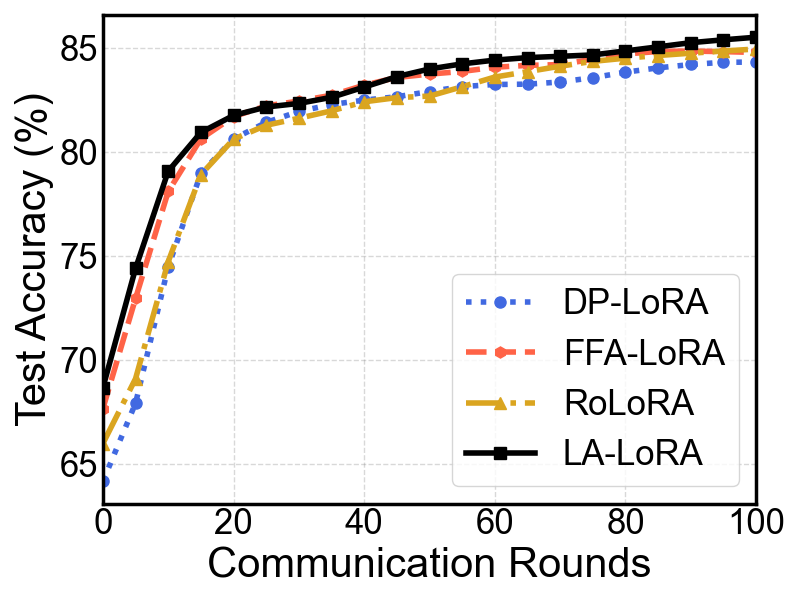}}
	\end{minipage}    
    	\begin{minipage}[b]{0.23\textwidth}
		\centering
		\subcaptionbox{ MNLI, $\epsilon=2$ }{\includegraphics[width=\textwidth]{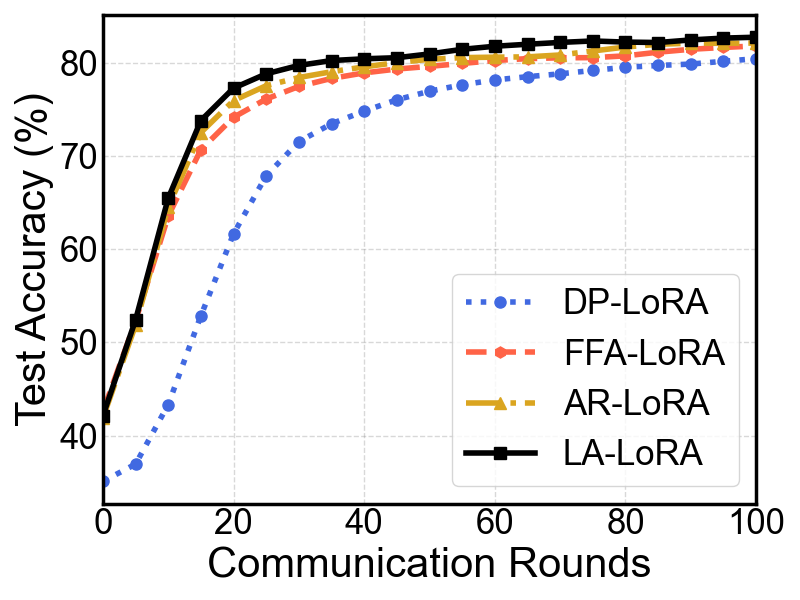}}
	\end{minipage}
	\caption{Test accuracy of RoBERTa-Base on SST-2, QNLI, QQP, and MNLI with $\epsilon = 2$.}
	\label{fig:SST2_QNLI_2}
\end{figure*}

\begin{figure*}[h]
        \centering
	\begin{minipage}[b]{0.23\textwidth}
		\centering
		\subcaptionbox{ SST2, $\epsilon=3$ }{\includegraphics[width=\textwidth]{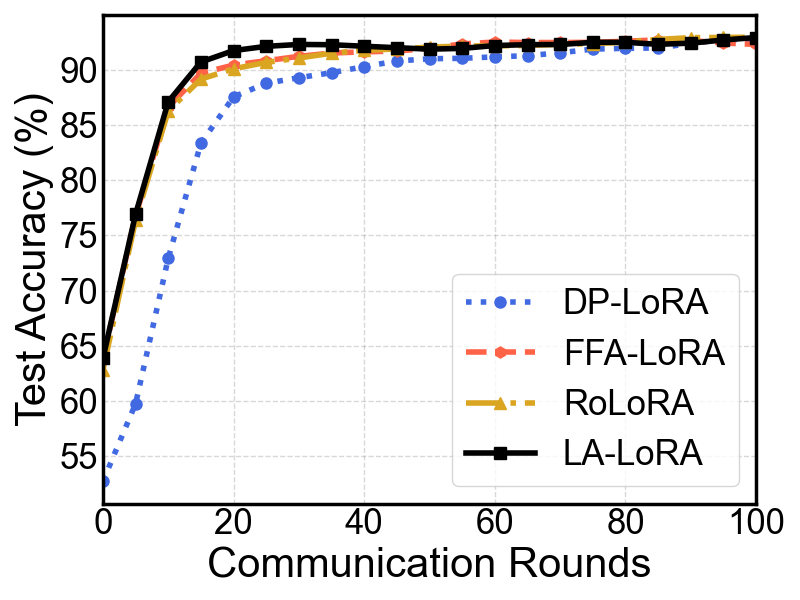}}
	\end{minipage}
	\begin{minipage}[b]{0.23\textwidth}
		\centering
		\subcaptionbox{ QNLI, $\epsilon=3$ }{\includegraphics[width=\textwidth]{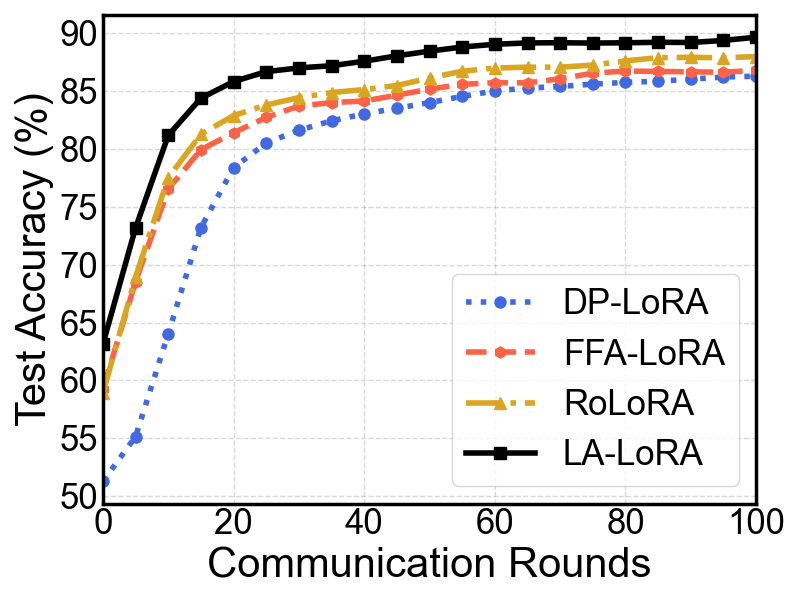}}
	\end{minipage}
	\begin{minipage}[b]{0.23\textwidth}
		\centering
		\subcaptionbox{ QQP, $\epsilon=3$ }{\includegraphics[width=\textwidth]{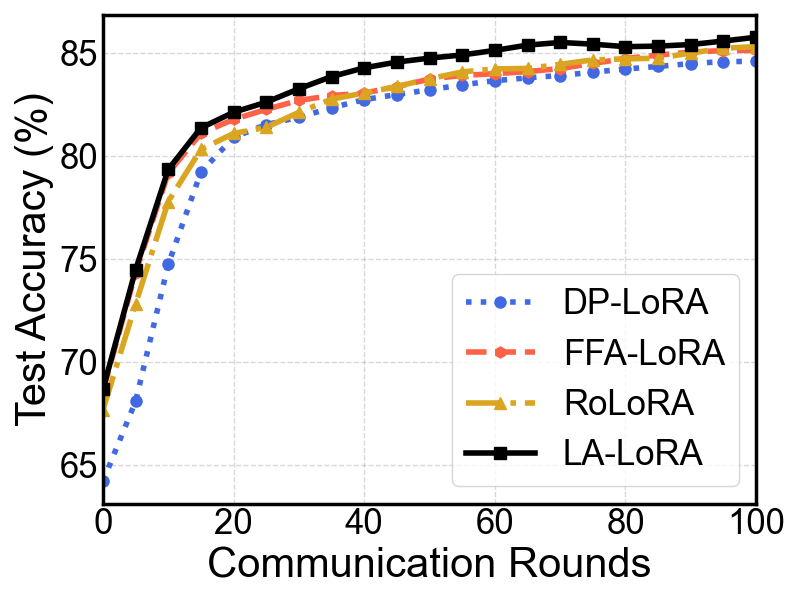}}
	\end{minipage}    
    	\begin{minipage}[b]{0.23\textwidth}
		\centering
		\subcaptionbox{ MNLI, $\epsilon=3$ }{\includegraphics[width=\textwidth]{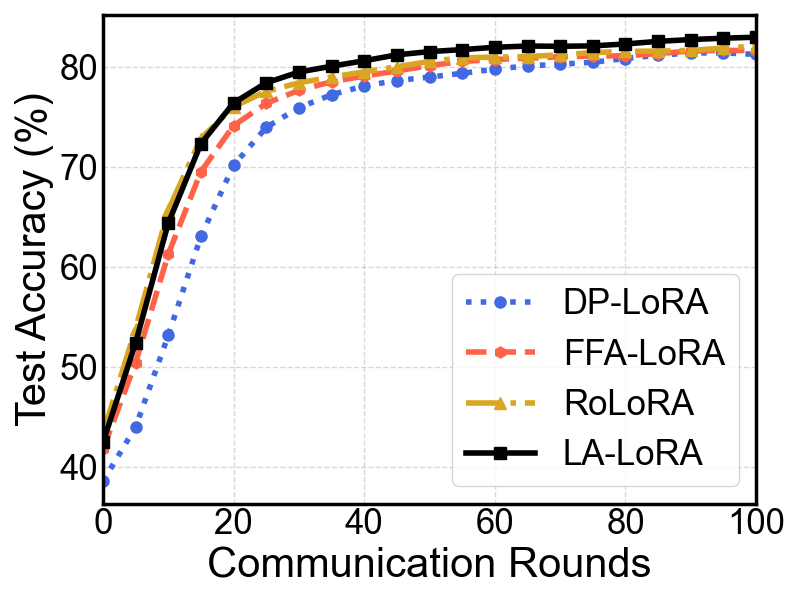}}
	\end{minipage}
	\caption{Test accuracy of RoBERTa-Base on SST-2, QNLI, QQP, and MNLI with $\epsilon = 3$.}
	\label{fig:SST2_QNLI_3}
\end{figure*}

Figure~\ref{fig:SST2_QNLI_1}, Figure~\ref{fig:SST2_QNLI_2}, and Figure~\ref{fig:SST2_QNLI_3} present the convergence curves of LA-LoRA and three SOTA baselines (DP-LoRA, FFA-LoRA, RoLoRA) on SST-2, QNLI, QQP, and MNLI using RoBERTa-Base under different privacy budgets ($\epsilon \in \{1, 2, 3\}$). Table~\ref{result_LLM} summarizes the corresponding final test accuracies. Across all settings, LA-LoRA consistently achieves the highest accuracy while maintaining fast convergence.

When $\epsilon = 1$ (Figure~\ref{fig:SST2_QNLI_1}), where the privacy is the strongest, the performance gap between LA-LoRA and the baselines is the most pronounced. For instance, on QNLI, LA-LoRA reaches $88.73\%$ compared with $86.25\%$ for RoLoRA and $85.08\%$ for FFA-LoRA, showing improvements of $2.48\%$ and $3.65\%$, respectively. Similar gains are observed for QQP ($1.15\%$ over RoLoRA, $1.34\%$ over FFA-LoRA) and MNLI ($0.84\%$ over RoLoRA, $1.34\%$ over FFA-LoRA).

At $\epsilon = 2$ (Figure~\ref{fig:SST2_QNLI_2}), the accuracy gap narrows, but LA-LoRA still leads in all datasets. For example, on QNLI, LA-LoRA obtains $89.18\%$, exceeding RoLoRA ($87.08\%$) and FFA-LoRA ($87.30\%$) by roughly $2.10\%$ and $1.88\%$.

At $\epsilon = 3$ (Figure~\ref{fig:SST2_QNLI_3}), where privacy constraints are weakest, all methods perform more closely, yet LA-LoRA maintains the highest accuracy across all datasets.

\subsection{Additional Experiments on Llama-2-7B}
\label{app:LLM_Llama}

We evaluate our method on the Llama-2-7B model on a subset of the GLUE benchmark (MNLI, QNLI, QQP, SST-2). We follow exactly the same fine-tuning hyperparameters and training pipeline as in our main experiments, changing only the underlying backbone model.

Table~\ref{tab:llama2_glue} shows that, compared to baseline DP-LoRA, FFA-LoRA, RoLoRA, and LA-LoRA yield consistent gains in the four GLUE tasks. 
Among them, LA-LoRA achieves the best average performance, improving the DP-LoRA baseline by $1.31\%$. 
Moreover, LA-LoRA further outperforms the strongest baseline variant, RoLoRA, by $0.7\%$. 
The improvements are noticeable on QQP and MNLI, indicating that our approach may also transfer to a larger, more recent LLM backbone.

\begin{table}[h]
\small
\centering
\caption{Test accuracy of Llama-2-7B on SST-2, QNLI, QQP and MNLI with $\epsilon=1$.}
\label{tab:llama2_glue}
\begin{tabular}{llccccc}
\toprule
Model & Method & SST-2 & QNLI & QQP & MNLI & Avg \\
\midrule
\multirow{4}{*}{Llama-2-7B} 
&DP-LoRA  &  $91.56$ & $88.22$ &  $85.56$ &  $86.86$  &  $88.05$ \\
&FFA-LoRA &  $92.53$ &  $89.23$  &  $85.56$ &  $86.98$  &  $88.58$ \\
&RoLoRA &  $92.12$ & $89.34$ &  $85.98$  &  $87.21$ &  $88.66$ \\
&LA-LoRA &  $93.36$ &  $89.78$  &  $86.75$  &  $87.56$  &   $89.36$\\
\bottomrule
\end{tabular}
\end{table}

\subsection{Language model results under data heterogeneity $\beta = 0.3$ }
\label{app:LLM_more_heterogeneity}

We report additional results of language tasks under Dirichlet $\beta = 0.3$. Table~\ref{tab:lm_hetero_0.3} summarizes the
performance of all baselines and LA-LoRA variants on GLUE tasks.

\begin{table*}[h]
    \centering
    \small
    \caption{Test accuracy(\%) of RoBERTa-Base on SST-2, QNLI, QQP and MNLI,  Dirichlet $\beta = 0.3$.}
    \label{tab:lm_hetero_0.3}
    \setlength{\tabcolsep}{4pt}
    \begin{tabular}{ll|ccccc}
        \toprule
         \textbf{Privacy} &
        \textbf{Method} &
        \textbf{SST-2} &
        \textbf{QNLI} &
        \textbf{QQP} &
        \textbf{MNLI} &
        \textbf{Avg.} \\
        \midrule
        \multirow{4}{*}{Non-private}
        &DP-LoRA         & 92.07& 86.25 & 84.02 & 81.22 & 85.89 \\
        &FFA-LoRA         & 92.56 & 87.53 & 85.36& 81.54 & 86.75 \\
        &RoLoRA           & 93.65 & 88.65 & 85.52 & 82.22 & 87.51 \\
        &\textbf{LA-LoRA}& \textbf{93.94} & \textbf{89.96} & \textbf{86.41} & \textbf{83.32} & \textbf{88.41} \\
        \midrule
        \multirow{4}{*}{$\epsilon = 1$}
        &DP-LoRA         & 90.13 & 83.79 & 83.28 & 79.80 & 84.25 \\
        &FFA-LoRA         & 90.62 & 84.63 & 84.10 & 80.98 & 85.08 \\
        &RoLoRA           & 91.74 & 85.86 & 84.21 & 81.35 & 85.78 \\
        &\textbf{LA-LoRA}& \textbf{92.11} & \textbf{87.13} & \textbf{85.04} & \textbf{82.27} & \textbf{86.64} \\
        \bottomrule
    \end{tabular}
\end{table*}

When $\epsilon = 1$, LA-LoRA achieves the best average score on GLUE, improving over 
DP-LoRA by \textbf{$2.39\%$} and over the best alternating baseline RoLoRA by \textbf{$0.86\%$}. In the non-private setting, LA-LoRA improves average accuracy by \textbf{$2.52\%$} over DP-LoRA and by \textbf{$0.90\%$} over the best alternating baseline, showing that LA-LoRA is beneficial even without DP noise.

\section{Additional Ablation Studies}
\label{app:Ablation}

\subsection{Ablation Results for Swin-T on CIFAR-100 and Tiny-ImageNet}
\label{app:Abl_Swin_T}

Table~\ref{result_TTT} reports the ablation results for Swin-T on CIFAR-100 and Tiny-ImageNet, examining the individual contributions of the local alternating update strategy and the low-pass smoothing filter. Consistent with the findings in the main text for Swin-B (Table~\ref{result_combined_slim}), applying local alternating updates (DP-LoRA $\rightarrow$ LA-LoRA(-filter)) substantially improves accuracy over the baseline DP-LoRA, with gains of 11.22\% on CIFAR-100 and 19.91\% on Tiny-ImageNet. Furthermore, incorporating the low-pass smoothing filter (LA-LoRA(-filter) $\rightarrow$ LA-LoRA) delivers additional performance boosts, reaching 60.07\% and 60.97\% on CIFAR-100 and Tiny-ImageNet, respectively. These results confirm that both components contribute positively and complementarily to performance, with the combination yielding the highest accuracy across datasets.

\begin{table}[h]
	\caption{Effect of local alternating updates and low-pass smoothing filter for Swin-T.}
	\label{result_TTT}
    \small
	\centering
	\begin{tabular}{l|cc}
		\toprule 
		\multirow{1}{*}{\textbf{Method}} 
		& \textbf{CIFAR-100} & \textbf{Tiny-ImageNet}  \\
		\midrule 
		DP-LoRA         &   $45.40_{\pm0.40}$  & $32.27_{\pm0.91}$       \\
		DP-LoRA(+filter)        &   $55.75_{\pm0.63}$  & $50.69_{\pm0.67}$      \\
		LA-LoRA(-filter)          &   $56.62_{\pm0.54}$  & $52.18_{\pm0.37}$             \\
		\textbf{LA-LoRA} & $\mathbf{60.07}_{\pm0.41}$ & $\mathbf{60.97}_{\pm0.44}$  \\ 
		\bottomrule
	\end{tabular}
\end{table}

\subsection{Ablation on Smoothing Strategies}\label{app:filter_ablation}

Recent work has proposed Doppler~\citep{zhang2024doppler} as a generic low-pass filtering module for differentially private optimizers, post-processing privatized gradients in standard (non-federated) DP training to improve their signal-to-noise ratio. 

To place our simple Gaussian filter in context, we instantiate Doppler in our DPFL setting with LoRA and treat it as a baseline. Specifically, for both DP-LoRA and LA-LoRA we compare three variants under exactly the same DPFL hyperparameters: no smoothing, Doppler, and our Gaussian filter, all applied on top of the same privatized local LoRA updates. All other optimizer, model, and privacy parameters are kept fixed. For Doppler, we evaluate every configuration reported in Table~2 of~\citet{zhang2024doppler} and find $b_\tau=\{1,1\}/11$ and $a_\tau=-9/11$ to perform best in our setting.


Table~\ref{tab:filter_doppler} reports the results. On GLUE with RoBERTa-Base, both low-pass filters bring small but consistent gains over DP-LoRA: LA-LoRA improves QQP and MNLI from $84.56\%/80.98\%$ to $85.83\%/82.99\%$ (+$1.27\%$ / +$2.01\%$), slightly outperforming Doppler $85.45\%/82.13\%$. On the more challenging vision benchmarks, the effect is substantially larger. For Swin-B, our Gaussian low-pass filter raises DP-LoRA accuracy from $56.52\%$ to $69.08\%$ on CIFAR-100 (+$12.56\%$) and from $30.64\%$ to $49.85\%$ on Tiny-ImageNet (+$19.21\%$), whereas Doppler reaches $66.12\%$ (+$9.60\%$) and $48.53\%$ (+$17.89\%$), respectively. Combining the filter with local alternating updates, LA-LoRA further improves CIFAR-100 and Tiny-ImageNet to $75.29\%$ and $61.97\%$, outperforming the corresponding LA-LoRA(+Doppler) variant by $2.41\%$ and $4.61\%$.

\begin{table*}[h]
	\caption{Impact of different low-pass filters on federated LoRA performance (\%) across GLUE and image classification benchmarks,  $\epsilon=3$.}
    \vspace{-2mm}
	\label{tab:filter_doppler}
    \small
    \setlength{\tabcolsep}{3pt}
	\centering
	\begin{tabular}{l|cc|cc}
		\toprule
		\multirow{2}{*}{\textbf{Method}} & 
		\multicolumn{2}{c|}{\textbf{GLUE tasks (RoBERTa-Base)}} & 
		\multicolumn{2}{c}{\textbf{Image Classification (Swin-B)}} \\
		\cmidrule(lr){2-3} \cmidrule(lr){4-5}
		& \textbf{QQP} & \textbf{MNLI} 
		& \textbf{CIFAR-100} & \textbf{Tiny-ImageNet} \\
		\midrule
		DP-LoRA  &$84.56_{\pm0.83}$ & $80.98_{\pm0.44}$ 
		        & $56.52_{\pm0.51}$ & $30.64_{\pm0.30}$ \\
		DP-LoRA(+filter) & $84.79_{\pm0.42}$ & $81.43_{\pm0.57}$ 
		                  & $69.08_{\pm0.52}$ & $49.85_{\pm0.55}$ \\
        DP-LoRA(+Doppler)  & $84.82_{\pm0.49}$ & $81.04_{\pm0.57}$ 
		                  & $66.12_{\pm0.71}$ & $48.53_{\pm0.57}$ \\
        LA-LoRA(-filter)& $84.98_{\pm0.51}$ & $82.40_{\pm0.53}$ 
		                  & $70.38_{\pm0.48}$ & $53.07_{\pm0.60}$ \\
		\textbf{LA-LoRA}  &$\mathbf{85.83}_{\pm0.49}$ & $\mathbf{82.99}_{\pm0.42}$ 
		                & $\mathbf{75.29}_{\pm0.35}$ & $\mathbf{61.97}_{\pm0.56}$ \\
        LA-LoRA(+Doppler) & $85.45_{\pm0.50}$ & $82.13_{\pm0.58}$ 
		                  & $72.88_{\pm0.56}$ & $57.36_{\pm0.66}$ \\
		\bottomrule
	\end{tabular}
\end{table*}

Overall, our ablations indicate that both Doppler and our Gaussian filter improve DPFL with LoRA by smoothing privatized updates, but in different ways. Doppler uses a recursive filter that depends on past outputs and tuned coefficients, whereas our Gaussian filter is a short window weighted average with fixed weights. 
In our setting, the privatized low rank updates exhibit substantial high frequency noise across local steps and clients. This simple Gaussian kernel helps suppress these high frequency fluctuations and, on our benchmarks, yields larger gains than Doppler, especially on the more challenging vision tasks.

\subsection{Ablation on the three challenges}
\label{app:ablation_three_challenges}

In this subsection, we empirically decompose how the two components of LA-LoRA (\emph{locally alternating updates} and the \emph{low-pass filter}) relate to the three challenges identified in Sec.~\ref{sec:3}: gradient coupling, noise amplification, and loss sharpness. We report the Maximum Hessian eigenvalue $\lambda_{\max}(H)$ restricted to LoRA parameters.

\begin{table*}[h]
	\caption{Ablation of LA-LoRA components and the three challenges
(noise amplification, gradient coupling, and loss sharpness) on GLUE ($\beta=0.3$) and image classification ($\beta=0.1$), $\epsilon = 1$. We report test accuracy (\%) and the maximum Hessian eigenvalue $\lambda_{\max}(H)$ on LoRA parameters.}
    \vspace{-1mm}
	\label{tab:filter_doppler}
    \small
    \setlength{\tabcolsep}{3pt}
	\centering
	\begin{tabular}{l|cc|cc}
		\toprule
		\multirow{2}{*}{\textbf{Method}} & 
		\multicolumn{2}{c|}{\textbf{GLUE tasks (RoBERTa-Base)}} & 
		\multicolumn{2}{c}{\textbf{Image Classification (Swin-B)}} \\
		\cmidrule(lr){2-3} \cmidrule(lr){4-5}
		& \textbf{QQP}  &$\lambda_{\max}(H)$ 
		& \textbf{CIFAR-100}  & $\lambda_{\max}(H)$ \\
		\midrule
		DP-LoRA  &$84.02$  & $43.74\phantom{_{(\downarrow00.00)}}$
		        & $55.98$  & $101.62\phantom{_{(\downarrow00.00)}}$\\
		DP-LoRA(+filter) & $85.63$  & $41.36_{(\downarrow2.38)}$
		                  & $67.95$ & $80.33_{(\downarrow21.29)}$\\
    
        LA-LoRA(-filter)& $85.95$  & $40.82_{(\downarrow2.92)}$
		                  & $69.87$ & $64.77_{(\downarrow36.85)}$\\
		\textbf{LA-LoRA}  &$\mathbf{86.41}$  & $40.22_{(\downarrow3.52)}$
		                & $\mathbf{74.56}$  & $55.76_{(\downarrow45.86)}$\\
		\bottomrule
	\end{tabular}
\end{table*}

Comparing \textbf{DP-LoRA} with \textbf{DP-LoRA(+filter)} isolates the effect of the low-pass filter. The accuracy gains ($55.98\%$ to $67.95\%$ on CIFAR-100) and the drop in ($\lambda_{\max}(H)$ from $101.62$ to $80.33$) show that suppressing noise amplification already improves utility and smooths the loss landscape.

Comparing \textbf{DP-LoRA} with \textbf{LA-LoRA(-filter)} instead isolates the effect of locally alternating $B$ and $A$ at the same noise level. The larger improvement in both accuracy ($55.98\%$ to $69.87\%$, $\lambda_{\max}(H)$ from $101.62$ to $64.77$), indicating that mitigating gradient coupling is particularly important for deep vision backbones and contributes strongly to reducing loss sharpness.

LA-LoRA combines both components.
Relative to all ablated variants, it achieves the highest accuracy on both QQP and CIFAR-100 and the smallest Maximum Hessian eigenvalue
(e.g., $\lambda_{\max}(H)$ from $101.62$ to $55.76$),
showing that jointly addressing noise amplification and gradient coupling drives the model towards significantly flatter minima.

\subsection{Effect of local steps K}
\label{app:Abl_K}

We study how the local step number $K$ affects LA-LoRA under a fixed clipping norm, noise multiplier, and number of communication rounds. In this setting, increasing $K$ makes each client perform more noisy local updates, so by standard DP composition the overall privacy loss $\epsilon$ grows with $K$. As shown in Table~\ref{tab:k_sensitivity}, larger $K$ generally leads to higher test accuracy on both CIFAR-100 and Tiny-ImageNet. At the same time, larger $K$ also incurs higher client computation and a larger privacy budget. To balance model quality, privacy, and training cost, we therefore set $K = 20$ as the default choice in all main experiments.

\begin{table}[h]
\small
\centering
\caption{LA-LoRA test accuracy (\%) with different local steps $K$ (Swin-B).}
\label{tab:k_sensitivity}
\begin{tabular}{c|c|cccc}
\toprule
 $K$& $\sigma$& 10 & 20 & 30 & 50\\
\midrule
CIFAR-100(\%) & $\sigma=0.56$ & $72.13$ & $74.56$ &$75.21$ & $75.26$ \\
Tiny-ImageNet(\%) & $\sigma=0.283$ & $58.54$ & $60.68$ & $61.24$ & $61.32$ \\
\bottomrule
\end{tabular}
\end{table}

\section{More results for Gaussian low-pass smoothing filter}
\label{app:filter}
\subsection{Smoothing parameter $\sigma_s$} 
\label{app:smoothing_parameter}
Table~\ref{result_sigma_s} presents the performance of federated LoRA under different smoothing parameters $\sigma_s$ on GLUE (RoBERTa-Base) language understanding tasks and Swin-B image classification benchmarks ($\epsilon = 1$). For language tasks, relatively small values of $\sigma_s$ (e.g., $\sigma_s{=}0.001$) yield consistently strong and stable performance across datasets, suggesting that mild smoothing effectively suppresses local update noise while preserving task-relevant information. In contrast, for image classification tasks, moderately larger values (e.g., $\sigma_s{=}0.01$) tend to produce more robust results, indicating that stronger smoothing can better mitigate the impact of data heterogeneity and noisy updates in vision settings.  

\begin{table*}[h]
	\centering
    \small
    	\caption{Impact of the smoothing parameter $\sigma_s$ on federated LoRA performance (\%) across GLUE (RoBERTa-Base) language tasks and image classification (Swin-B) benchmarks ($\epsilon=1$).}
	\label{result_sigma_s}
    \setlength{\tabcolsep}{3pt}
	\begin{tabular}{l|cccc|cc}
		\toprule
		\multirow{2}{*}{\shortstack{\textbf{Smoothing} \\ \textbf{parameter}$\sigma_s$}} & 
		\multicolumn{4}{c|}{\textbf{GLUE tasks (RoBERTa-Base)}} & 
		\multicolumn{2}{c}{\textbf{Image Classification (Swin-B)}} \\
		\cmidrule(lr){2-5} \cmidrule(lr){6-7}
		& \textbf{SST-2} & \textbf{QNLI} & \textbf{QQP} & \textbf{MNLI} 
		& \textbf{CIFAR-100} & \textbf{Tiny-ImageNet} \\
		\midrule
        		$\sigma_s=0.000$ & $92.51_{\pm0.45}$ & $87.95_{\pm0.56}$ & $84.72_{\pm0.49}$ & $81.83_{\pm0.55}$  & $70.34_{\pm0.32}$ & $54.05_{\pm0.52}$  \\
		$\sigma_s=0.001$ & $\mathbf{92.66}_{\pm0.47}$ & $88.73_{\pm0.42}$  &  $\mathbf{85.34}_{\pm0.35}$ &  $82.35_{\pm0.46}$ 
		        & $71.80_{\pm0.26}$ & $56.14_{\pm0.45}$ \\
		$\sigma_s=0.005$ & $92.60_{\pm0.44}$ & $\mathbf{88.75}_{\pm0.53}$ & $85.26_{\pm0.42}$ & $\mathbf{82.42}_{\pm0.55}$ 
		                  & $72.67_{\pm0.49}$ & $58.75_{\pm0.38}$ \\
		$\sigma_s=0.010$ & $92.62_{\pm0.47}$ & $87.97_{\pm0.42}$  &  $85.03_{\pm0.48}$ &  $82.03_{\pm0.32}$ 
		                  & $\mathbf{74.56}_{\pm0.52}$ & $60.68_{\pm0.55}$  \\
		$\sigma_s=0.050$ & $92.50_{\pm0.49}$ & $88.02_{\pm0.54}$ & $84.96_{\pm0.36}$ & $82.03_{\pm0.57}$ 
		                & $74.33_{\pm0.40}$ & $\mathbf{61.27}_{\pm0.56}$ \\
		\bottomrule
	\end{tabular}
\end{table*}

The results further indicate that the optimal range of $\sigma_s$ is task-dependent. Values between $0.001$ and $0.005$ are generally favorable for language tasks, whereas values between $0.01$ and $0.05$ are more suitable for vision tasks, offering a better balance between stability and accuracy in each domain.

\subsection{Effect of different kernel widths}
\label{app:kernel_width}

As described in Section~\ref{sec:lowpass}, LA-LoRA employs a simple 1D low-pass filter to smooth the LoRA gradients before aggregation. In the main experiments, we use a 5-tap binomial Gaussian kernel as the default choice. To assess the robustness of LA-LoRA, we conduct a sensitivity study on different kernel widths. We consider three 1D binomial Gaussian kernels with different sizes:
\[
G_s^{(3)} = \tfrac{1}{4}[1,2,1], \quad
G_s^{(5)} = \tfrac{1}{16}[1,4,6,4,1], \quad
G_s^{(7)} = \tfrac{1}{64}[1,6,15,20,15,6,1].
\]
These correspond to 3-, 5-, and 7-tap binomial filters, respectively, obtained from the binomial coefficients of $(1+1)^2$, $(1+1)^4$, and $(1+1)^6$ and normalized to sum to 1. Intuitively, larger kernels apply stronger smoothing, whereas smaller kernels apply milder smoothing.

\begin{table*}[h]
\centering
\small
\caption{Sensitivity of LA-LoRA to the choice of 1D binomial kernel size. Results are reported for Swin-B on CIFAR-100 and Tiny-ImageNet at $\epsilon = 1$.}
\label{tab:kernel_sensitivity}
\begin{tabular}{l|c|c|cc}
\toprule
\textbf{Kernel} & \textbf{Size} & \textbf{Coefficients} & \textbf{CIFAR-100} & \textbf{Tiny-ImageNet} \\
\midrule
$G_s^{(3)}$ & 3 & $\tfrac{1}{4}[1, 2, 1]$ & $73.59_{\pm0.60}$ & $59.92_{\pm0.51}$ \\
$G_s^{(5)}$ & 5 & $\tfrac{1}{16}[1, 4, 6, 4, 1]$ & $\mathbf{74.56}_{\pm0.52}$ & $60.68_{\pm0.55}$ \\
$G_S^{(7)}$ & 7 & $\tfrac{1}{64}[1, 6, 15, 20, 15, 6, 1]$ & $73.88_{\pm0.46}$ & $\mathbf{60.74}_{\pm0.63}$ \\
\bottomrule
\end{tabular}
\end{table*}

{\color{blue}
Table~\ref{tab:kernel_sensitivity} summarizes the results for Swin-B on CIFAR-100 and Tiny-ImageNet at $\epsilon = 1$. The performance of LA-LoRA remains stable across these configurations: the test accuracy varies within at most $0.97\%$. All kernel choices provide a large gain over the DP-LoRA baseline (Table~\ref{result_swin}). Among the three variants, the 5-tap kernel $G_s^{(5)}$ achieves the best overall trade-off between accuracy and efficiency, and thus is used as the default in all main experiments.
}

\section{Detailed Theoretical Analysis}
\label{app:Theoretical}

\subsection{Privacy guarantee}
\label{app:privacy}

We report standard $(\epsilon,\delta)$-DP guarantees using the Rényi DP (RDP) accountant.
Below we present the RDP definition, composition, and the conversion from RDP to $(\epsilon,\delta)$-DP. Further implementation details are available in our code and~\cite{noble2022differentially}.

\begin{definition}
[Rényi DP] For any $\lambda \in (1, \infty )$ and privacy parameter $\rho > 0$, a randomized mechanism $\mathcal{M}: \mathcal{X}^n \rightarrow \mathcal{Y}$ is said to be $(\lambda, \rho)$-RDP, if for any two neighboring datasets $\mathcal{D}$ and $\mathcal{D}^\prime$,
\vspace{-2mm}
\begin{equation}
\!\!\!D_{\lambda}\left[\mathcal{M}(\mathcal{D}) \| \mathcal{M}\left(\mathcal{D}^{\prime}\right)\right]\!:=\!\frac{1}{\lambda\!-\!1} \log \mathbb{E}_{W \sim \mathcal{M}(\mathcal{D}^\prime)} \left[\left(\frac{p_{\mathcal{M}(\mathcal{D})}(W)}{p_{\mathcal{M}(\mathcal{D}^{\prime})}(W)}\right)^{\lambda}\!\right] \!\leq \!\rho.
\end{equation}
\end{definition}

\paragraph{Composition (additivity) in RDP.}
RDP composes additively: if mechanisms $\mathcal{M}_1,\ldots,\mathcal{M}_S$ are applied on the same dataset (possibly adaptively), then for any fixed order $\lambda>1$,
\vspace{-1mm}
\begin{equation}\label{eq:rdp-compose}
    \rho_{\mathrm{total}}(\lambda)\;=\;\sum_{s=1}^S \rho_s(\lambda).
\end{equation}

\paragraph{Conversion from RDP to $(\epsilon,\delta)$-DP.}
If a mechanism is $(\lambda,\rho(\lambda))$-RDP for all $\lambda>1$, then for any $\delta\in(0,1)$ it satisfies
\vspace{-1mm}
\begin{equation}\label{eq:rdp-to-dp}
    (\epsilon,\delta)\text{-DP with }\quad
    \epsilon(\delta)\;=\;\min_{\lambda>1}\;\bigg\{\,\rho(\lambda)\;+\;\frac{\log(1/\delta)}{\lambda-1}\,\bigg\}.
\end{equation}

\paragraph{Post-processing.}
If $\mathcal{M}$ is $(\epsilon,\delta)$-DP (or $(\lambda,\rho)$-RDP) and $f$ is any (possibly randomized) mapping independent of the private data, then $f\!\circ\!\mathcal{M}$ enjoys the same privacy parameters~\citep{dwork2014algorithmic}. In LA-LoRA, the Gaussian low-pass filter is such an $f$ applied to the noisy updates, which justifies the post-processing argument following Theorem~\ref{thm:privacy}.

\subsection{Closed-form projected gradients} 
\label{app:Closed-form}
We present a projection-based view to explain why the proposed local alternating update improves optimization stability compared to standard LoRA.

As discussed in Section~3, simultaneous updates of $A$ and $B$ suffer from gradient coupling (Eq.~\ref{lora_gradeint}), amplified noise (Eq.~\ref{noise_term}), and sharper aggregated solutions. 
In contrast, LA-LoRA alternately updates $A$ and $B$ (Eq.~\ref{LA_update}), effectively decomposing the optimization into two sequential low-rank projections.

Let $A_k \in \mathbb{R}^{r \times n}$ and $B_k \in \mathbb{R}^{m \times r}$ denote the low-rank factors at step $k$, and $s=\alpha/r$ the LoRA scaling factor. The update to $B$ is based on solving a least-squares problem that projects the full gradient onto the column space of $A_k$:
\begin{align} \label{nabla_B_k}
    \min_{\tilde{\nabla}_{B_k}\mathcal{L}} \|s(\tilde{\nabla}_{B_k} \mathcal{L})A_k -\nabla_{W_k} \mathcal{L}\|^2_{F}.
\end{align}
Here, $||\cdot||_F^2$ refers to the squared Frobenius norm. $\tilde{\nabla}_A \mathcal{L}$ and $\tilde{\nabla}_B \mathcal{L}$ are corresponding approximated gradients. Once the optimal direction is obtained, the update then proceeds as:
\begin{align} \label{aslora_update_b}
    \begin{split}
        &B_{k+1} \!\leftarrow\! B_k\! -\! \eta \tilde{\nabla}_{B_k}\mathcal{L},\\
        &W_{k+\frac{1}{2}} \!\leftarrow\! W_k \!-\! \eta (\tilde{\nabla}_{B_k}\mathcal{L})A_k  ,
    \end{split}
\end{align}
To maintain consistency with the simultaneous update strategy, we apply the model update to an intermediate state $(k + \tfrac{1}{2})$. In our implementation, we treat each individual update to $A$ or $B$ as a distinct optimization step, which simplifies the analysis without ambiguity.

After performing backpropagation with respect to $B$, the gradient computed for $A$ no longer accurately reflects the full gradient at step $k$, since the model parameters have already been partially updated. Consequently, to ensure that the update to $A$ remains faithful to the full gradient, we minimize the discrepancy between the actual gradient at the intermediate model state $W_{k+\frac{1}{2}}$ and the low-rank approximation derived from $A_k$, formulated as:
\begin{align} \label{nabla_A_k}
\min_{\tilde{\nabla}_{A_k}\mathcal{L}} \|sB_{k+1}(\tilde{\nabla}_{A_k}\mathcal{L})  - \nabla_{W_{k+\frac{1}{2}}} \mathcal{L}\|_{F}^2.
\end{align}
Then, by gradient descent, we can update $A$ and the full model as 
\begin{align}
    \begin{split}
    &A_{k+1}  \leftarrow A_k - \eta \tilde{\nabla}_{A_k}\mathcal{L},\quad W_{k+1} \leftarrow W_{k+\frac{1}{2}} -  \eta B_{k+1} (\tilde{\nabla}_{A_{k}}\mathcal{L}).
    \end{split}
\end{align}

After solving the least-squares projection problems (\ref{nabla_B_k}) and (\ref{nabla_A_k}), we obtain their optimal solutions in closed form. Specifically, Theorem~\ref{thm_scale_gradient} states that the projected gradients can be expressed as
\[\tilde{\nabla}_{B_k}\mathcal{L} \;=\; \frac{1}{s^2}\,\nabla_{B_k}\mathcal{L}\,(A_k A_k^\top)^{-1},\qquad
\tilde{\nabla}_{A_k}\mathcal{L} \;=\; \frac{1}{s^2}\,(B_{k+1}^\top B_{k+1})^{-1}\,\nabla_{A_k}\mathcal{L},
\]

where the $(A_k A_k^\top )^{-1}$ and $B_{k+1}^\top(B_{k+1})^{-1}$ terms arise from the Gram matrix inverses in the least-squares solutions, and the $\frac{1}{s^2}$ factor accounts for the LoRA scaling in both $A$ and $B$ directions.

\begin{proof}
For (\ref{nabla_B_k}), differentiating the least-squares objective w.r.t.\ $\tilde{\nabla}_{B_k}\mathcal{L}$ and setting the derivative to zero gives
\[
s^2\,\tilde{\nabla}_{B_k}\mathcal{L}\,A_k A_k^\top \;=\; s\,\nabla_{W_k}\mathcal{L}\,A_k^\top .
\]
Assuming $A_kA_k^\top$ is invertible, we have
\[
\tilde{\nabla}_{B_k}\mathcal{L} \;=\; \frac{1}{s}\,\nabla_{W_k}\mathcal{L}\,A_k^\top\,(A_k A_k^\top)^{-1}.
\]
Using the LoRA gradient relation $\nabla_{B_k}\mathcal{L} = s\,\nabla_{W_k}\mathcal{L}\,A_k^\top$, we substitute:
\[
\tilde{\nabla}_{B_k}\mathcal{L} \;=\; \frac{1}{s^2}\,\nabla_{B_k}\mathcal{L}\,(A_k A_k^\top)^{-1}.
\]
The derivation for (\ref{nabla_A_k}) is analogous, yielding
\[
s^2\,B_{k+1}^\top B_{k+1}\,\tilde{\nabla}_{A_k}\mathcal{L} \;=\; s\,B_{k+1}^\top \nabla_{W_k}\mathcal{L},
\]
and hence
\[
\tilde{\nabla}_{A_k}\mathcal{L} \;=\; \frac{1}{s}\,(B_{k+1}^\top B_{k+1})^{-1} B_{k+1}^\top \nabla_{W_k}\mathcal{L}
\;=\; \frac{1}{s^2}\,(B_{k+1}^\top B_{k+1})^{-1}\,\nabla_{A_k}\mathcal{L}.
\]
\end{proof}

\subsection{Stable feature learning} 
\label{app:Stable-feature-learning}

\begin{proof}   
Under the regularized gradient formulation in Eq.~\ref{scaled_gradient}, the update to the full model can be written as two half-steps.

\paragraph{First half-step (updating $B$).}
Using the least-squares solution for $\tilde{\nabla}_{B_k}\mathcal{L}$,
\begin{align}
W_{k+\frac12} 
&= W_k \;-\; \eta\, s\,\tilde{\nabla}_{B_k}\mathcal{L}\,A_k \nonumber \\
&= W_k \;-\; \eta\, s\!\left(\tfrac{1}{s}\,\nabla_{W_k}\mathcal{L}\,A_k^\top\,(A_kA_k^\top)^{-1}\right)\! A_k \nonumber \\
&= W_k \;-\; \eta\,\nabla_{W_k}\mathcal{L}\,A_k^\top\,(A_kA_k^\top)^{-1}A_k \nonumber \\
&= W_k \;-\; \eta\,\nabla_{W_k}\mathcal{L}\,Proj_{r(A_k)}.
\end{align}

\paragraph{Second half-step (updating $A$).}
Using the least-squares solution for $\tilde{\nabla}_{A_k}\mathcal{L}$,
\begin{align}
W_{k+1}
&= W_{k+\frac12} \;-\; \eta\, s\,B_{k+1}\,\tilde{\nabla}_{A_k}\mathcal{L} \nonumber \\
&= W_{k+\frac12} \;-\; \eta\, s\,B_{k+1}\!\left(\tfrac{1}{s}\,(B_{k+1}^\top B_{k+1})^{-1}B_{k+1}^\top\,\nabla_{W_{k+\frac12}}\mathcal{L}\right) \nonumber \\
&= W_{k+\frac12} \;-\; \eta\,\underbrace{B_{k+1}(B_{k+1}^\top B_{k+1})^{-1}B_{k+1}^\top}_{{Proj}_{\mathrm{c}(B_{k+1})}}\,\nabla_{W_{k+\frac12}}\mathcal{L}.
\end{align}

Combining the two half-steps yields
\begin{equation}
\label{w_update_lora_alt}
    W_{k+1} = W_{k} - \eta (\nabla_{W_{k}} \mathcal{L})Proj_{r(A_k)} - \eta Proj_{c(B_{k+1})}(\nabla_{W_{k+\frac{1}{2}}} \mathcal{L}).
\end{equation}
Here,
\[
\ Proj_{r(A_k)} := A_k^\top(A_kA_k^\top)^{-1}A_k ,\qquad
Proj_{c(B_{k+1})}:=B_{k+1}(B_{k+1}^\top B_{k+1})^{-1}B_{k+1}^\top,
\]
denote the orthogonal projections onto the row space of $A_k$ (right-multiplication) and the column space of $B_{k+1}$ (left-multiplication), respectively.
\end{proof}

\subsection{Convergence Analysis}\label{appendix_convergence}


\subsubsection{Set Up} \label{appendix_setup}

Following the previous work~\cite{zhang2024riemannian}, we provide a convergence analysis of the proposed algorithm within the over-parameterized two-layer ReLU neural network tuning problem. For a data matrix $X \in \mathbb{R}^{n\times d}$ and any arbitrary vector $u \in \mathbb{R}^d$, we consider the set of diagonal matrices $\{\mathrm{diag}([Xu \geq 0]) \mid u \in \mathbb{R}^d\}$, which take values $1$ or $0$ along the diagonal and indicate the possible activation patterns of the ReLU units. Let the distinct elements of this set be denoted as $D_1, \dots, D_P$ (see \cite{zhang2024riemannian} for more details). The constant $P$ corresponds to the total number of partitions of $\mathbb{R}^d$ by hyperplanes passing through the origin that are perpendicular to the rows of $X$~\cite{pilanci2020neural}. Intuitively, $P$ can be regarded as the number of possible ReLU activation patterns associated with $X$. \cite{pilanci2020neural} explains that a two-layer ReLU problem shares the same optimal objective with a convex problem.

\begin{align}\label{obj_two_layer_relu}
    \min_{W_i \: i \in [P]} \frac{1}{2}\left\|\sum_{i=1}^{P}D_iXW_i -Y\right\|^2_F.
\end{align}
As we focus on fine-tuning, given a pretrained model with weights $\{W_i\}_{i=1}^P$, we perform a low-rank adaptation and express the problem in~\eqref{obj_two_layer_relu} as

\begin{align}\label{obj_1}
    \min_{A_i,B_i, i=1,\cdots P} \frac{1}{2}\left\|\sum_{i=1}^P D_iX(W_i + B_iA_i)-Y\right\|^{2}_F,
\end{align}

Let $X\in \mathbb{R}^{n\times d}$, $A_i \in\mathbb{R}^{r\times c}$, $B_i\in \mathbb{R}^{d \times r}$, and $Y \in \mathbb{R}^{n\times c}$. 
We consider the response model
\[
Y = \sum_{i=1}^P D_i X\bigl(W_i + B_i^{\star}A_i^{\star}\bigr).
\]
Define
\[
X_{\star} := \sum_{i=1}^P B_i^{\star}A_i^{\star},
\]
where the matrices $B_i^{\star}$ and $A_i^{\star}$ (and hence $X_{\star}$) are fixed but unknown. 
We use $\sigma_r(\cdot)$ to denote the $r$-th largest singular value of a matrix.
Before proceeding, we first introduce the notion of the Restricted Isometry Property (RIP).

\begin{definition}(Restricted Isometry Property, \cite{recht2010guaranteed})
    The matric $C\in \mathbb{R}^{n\times d}$ is said to satisfy Restricted Isometry Property(RIP) with parameters ($r, \delta_r$) if there exists constants $0\leq \delta_r \leq 1$, for any matrices $M\in \mathbb{R}^{d\times c}$ with rank $r$, the below holds
    \begin{align}
    (1-\delta_r)\|M\|^2_F \leq \|CM\|^2_F \leq (1+\delta_r)\|M\|^2_F.
    \end{align}
\end{definition}

RIP is a widely used condition in the filed of compressed sensing (\cite{recht2010guaranteed}), which states that the operator $C$ approximately preserves distances between low-rank matrices. In the absence of noise, we can establish a direct relationship between the loss function and the recovery error. If we denote $C_i := D_i X$, Problem (\ref{obj_1}) is equivalent to the problem below up to a change of labels
\begin{align}\label{obj_2}
    \min_{A_i,B_i, i=1,\cdots P} \mathcal{L}_c(\boldsymbol{B},\boldsymbol{A}) :=  \frac{1}{2}\left\|\sum_i^P C_i(B_iA_i-X_{\star})\right\|^{2}_F ,
\end{align}
where $\boldsymbol{B} = \{B_1,\cdots,B_P\}$ and $\boldsymbol{A} = \{A_1,\cdots,A_P\}$. 

\textbf{Notation}
Inspired by the previous work \cite{liu2025efficient}, we introduce two local norms and their corresponding dual norms for a matrix $W\in \mathbb{R}^{k\times r}$
\begin{align}
    \begin{split}
        P_{A_{k}^i} &:= A_k^i (A_k^i)^\top, \quad \|W\|_{P_{A^i_k}} := \|WP_{A_{k}^i}^{\frac{1}{2}}\|_F,\quad \|W\|_{P^\star_{A^i_k}} := \|WP_{A_{k}^i}^{-\frac{1}{2}}\|_F,\\
        P_{B_{k}^i} &:= (B_k^i)^\top B_k^i, \quad \|W\|_{P_{B^i_k}} := \|WP_{B_{k}^i}^{\frac{1}{2}}\|_F,\quad \|W\|_{P^\star_{B^i_k}}  := \|WP_{B_{k}^i}^{-\frac{1}{2}}\|_F.
    \end{split}
\end{align}
Here, we assume $A_k^i$ and $B_k^i$ are of full rank $r$ for any $i$. If they aren't of full rank, we can replace them with the Moore-Penrose inverse. Now we are ready to establish the convergence analysis.

\subsubsection{Useful Lemma}
For the $k$-th iteration, let's denote $\boldsymbol{B}_k = \{B_k^1,\cdots, B_k^P\}$ and $\boldsymbol{A}_k = \{A_k^1,\cdots, A_k^P\}$. If we apply LA-LoRA or LA-LoRA+ without momentum for Problem (\ref{obj_2}), for any $i\in [P]$, the alternating update rule as we proposed can be written as 
\begin{align}\label{update_obj2}
\begin{split}
    A_{k+1}^i &\leftarrow  A_{k}^i - \eta (B_k^i (B_k^i)^\top)^{-1}\nabla_{A_k^i}\mathcal{L}_c(\boldsymbol{B}_k,\boldsymbol{A}_k) \\
    B_{k+1}^i &\leftarrow  B_{k}^i - \eta \nabla_{B_k^i}\mathcal{L}_c(\boldsymbol{B}_k,\boldsymbol{A}_{k+1})((A_{k+1}^i)^\top A_{k+1}^i)^{-1} .
\end{split}
\end{align}
First, we will list some assumptions used in our analysis.
\begin{assumption}\label{assumption_rip}
    Suppose that $C_i = D_iX$ obeys the $r$-RIP with a constant $\delta_r$ for each $i$.
\end{assumption}

\begin{assumption}\label{assumption_CC}
    Suppose that $\|C_i^T C_j\|_2 := \|X^TD_i^TD_jX\|_2 \leq \frac{1+\delta_r}{P(P-1)}$.
\end{assumption}

 Assumption \ref{assumption_rip} and \ref{assumption_CC} also adopt in \cite{zhang2024riemannian} to analyze their optimizer for LoRA. For matrix $X$ with $i.i.d$ Gaussian entries $\mathcal{N}(0,1/d\|D_i\|_0)$, $D_iX$ satisfies RIP for a constant $\delta_r$ when $\|D_i\|_0$ is on the order of $r(d+c)/(d\delta_r^2)$.  Note $\|X^\top D_i^\top D_j X\|_2\leq \|X^\top X\|_2$ for all $(i,j)'s$. Thus bounding $\|X^\top D_i^\top D_jX\|_2$ amounts to bounding the largest singular value of the empirical covariance.

\begin{lemma}\label{lemma_grad_relu}
    For a given $i \in [P]$, the gradient of Problem (\ref{obj_2}) are
    \begin{align}\label{grad_relu}
        \begin{split}
      \nabla_{A_k^i}\mathcal{L}(\boldsymbol{B},\boldsymbol{A}) = \sum_j^P (B_k^i)^\top (C_i)^\top C_j(B_k^jA_k^j-X_{\star}), \\
      \nabla_{B_k^i}\mathcal{L}(\boldsymbol{B},\boldsymbol{A}) = \sum_j^P (C_i)^\top C_j(B_k^jA_{k+1}^j-X_{\star})(A_{k+1}^j)^\top.
        \end{split}
    \end{align}
\end{lemma}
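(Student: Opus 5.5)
We prove Lemma~\ref{lemma_grad_relu} by direct matrix calculus on the objective in Problem~(\ref{obj_2}). The idea is to write $\mathcal{L}_c$ as half the squared Frobenius norm of a single residual that is affine in each factor, read the gradients off from the first-order differential, and then evaluate at the iterates used by the alternating rule~(\ref{update_obj2}).

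\textbf{Residual and differential.} Introduce the residual
\[
R(\boldsymbol{B},\boldsymbol{A}) := \sum_{j=1}^P C_j\bigl(B^jA^j - X_\star\bigr),
\qquad \mathcal{L}_c = \tfrac12\|R\|_F^2,
\]
following the summation convention of Problem~(\ref{obj_2}). For a fixed index $i$, perturb only $A^i$ and $B^i$. Since $R$ is bilinear in $(B^i,A^i)$ and the other blocks do not move,
\[
dR = C_i\,(dB^i)\,A^i + C_i\,B^i\,(dA^i) + C_i\,(dB^i)(dA^i).
\]
The last term is second order and can be dropped. Using $d\mathcal{L}_c = \langle R, dR\rangle_F$ together with cyclicity of the trace, we get
\[
d\mathcal{L}_c = \bigl\langle (C_i)^\top R\,(A^i)^\top,\, dB^i\bigr\rangle_F + \bigl\langle (B^i)^\top (C_i)^\top R,\, dA^i\bigr\rangle_F .
\]
This identifies
\[
\nabla_{A^i}\mathcal{L}_c = (B^i)^\top C_i^\top R, \qquad \nabla_{B^i}\mathcal{L}_c = C_i^\top R\,(A^i)^\top .
\]

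\textbf{Expanding and evaluating at the iterates.} Substituting the definition of $R$ and distributing the sum over $j$ yields the two displayed expressions in~(\ref{grad_relu}). They are then evaluated at the points required by~(\ref{update_obj2}), which updates $A$ first and then $B$:
\begin{itemize}
\item the $A$-gradient is taken at $(\boldsymbol{B}_k,\boldsymbol{A}_k)$, giving $\sum_j (B_k^i)^\top C_i^\top C_j(B_k^jA_k^j-X_\star)$;
\item the $B$-gradient is taken at $(\boldsymbol{B}_k,\boldsymbol{A}_{k+1})$, so every $A^j$ inside the residual is replaced by $A^j_{k+1}$, giving $\sum_j C_i^\top C_j(B_k^jA_{k+1}^j-X_\star)$.
\end{itemize}

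\textbf{Main obstacle.} The computation itself is routine; the only real work is keeping the indices straight. The right factor in the $B$-gradient comes from the block being differentiated. It is therefore $(A_{k+1}^i)^\top$, which is constant in $j$ and can be pulled outside the sum. It is not $(A_{k+1}^j)^\top$ as it appears inside the sum in the displayed formula, so the statement should be read with the right factor $(A_{k+1}^i)^\top$ and the proof should say so explicitly.
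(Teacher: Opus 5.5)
Your derivation is correct and follows the paper's route: the paper also differentiates $\tfrac12\|\sum_j C_j(B_jA_j-X_\star)\|_F^2$ directly, writes out only the $A$-gradient, and dismisses the $B$-gradient with ``similarly''. Your index correction is also right and worth stating in the proof. Only the $j=i$ summand depends on $B^i$, so the right factor is $(A_{k+1}^i)^\top$, constant in $j$. The displayed $(A_{k+1}^j)^\top$ is therefore an error in the statement that the paper's ``similarly'' step never catches, and the corrected form $C_i^\top R\,(A_{k+1}^i)^\top$ is what the later descent argument implicitly needs.
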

\begin{proof}
For any given $i$ and $t$, it yields
    \begin{align}
       \nabla_{A_k^i}\mathcal{L}(\boldsymbol{B},\boldsymbol{A})  = \frac{\partial}{\partial A_k^i}\left\{\frac{1}{2}\left\|\sum_j^P C_j(B_jA_j-X_{\star})\right\|^{2}_F \right\} =  \sum_j^P (B_k^i)^\top (C_i)^\top C_j(B_k^jA_k^j-X_{\star}).
    \end{align}
Similarly, we can derive the $\nabla_{B_k^i}L(\boldsymbol{B},\boldsymbol{A})$ as shown in (\ref{grad_relu}).
\end{proof}

\begin{lemma}\label{lemma_descent}
    Suppose Assumption \ref{assumption_rip} and \ref{assumption_CC} holds, then we have
    \begin{align}
        \begin{split}
            \mathcal{L}_c(\boldsymbol{B}_k,\boldsymbol{A}_{k+1})&  \leq \mathcal{L}_c(\boldsymbol{B}_k,\boldsymbol{A}_k)  - c_1\max_i \left\|\nabla_{A_k^i} \mathcal{L}_c(\boldsymbol{B}_k,\boldsymbol{A}_k)\right\|^2_{P_{B_k^i}^\star},\\
             \mathcal{L}_c(\boldsymbol{B}_{k+1},\boldsymbol{A}_{k+1})&  \leq \mathcal{L}_c(\boldsymbol{B}_k,\boldsymbol{A}_{k+1})  - c_1 \max_i\left\|\nabla_{B_k^i} \mathcal{L}_c(\boldsymbol{B}_k,\boldsymbol{A}_{k+1})\right\|^2_{P_{A_{k+1}^i}^\star} ,
        \end{split}
    \end{align}
where $c_1 = P(\eta -\frac{\eta^2(1+\delta_r+\frac{1}{P})}{2}) $.
\end{lemma}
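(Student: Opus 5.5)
We treat each half-step of (\ref{update_obj2}) as one step of preconditioned gradient descent on a quadratic, and prove a descent inequality for it. The inequality must hold even though all $P$ blocks move at once. We give the argument for the $A$-update; the $B$-update is symmetric, swapping left and right multiplication and $P_{B}$ for $P_{A}$.

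Fix $\boldsymbol{B}_k$. Then $\mathcal{L}_c(\boldsymbol{B}_k,\cdot)$ is an exact quadratic in $\boldsymbol{A}$, so its second-order Taylor expansion is exact:
\begin{align*}
\mathcal{L}_c(\boldsymbol{B}_k,\boldsymbol{A}_{k+1}) = \mathcal{L}_c(\boldsymbol{B}_k,\boldsymbol{A}_k) + \sum_i \langle \nabla_{A_k^i}\mathcal{L}_c, \Delta_i\rangle + \tfrac12\Big\|\sum_i C_i B_k^i \Delta_i\Big\|_F^2 ,
\end{align*}
where $\Delta_i = -\eta P_{B_k^i}^{-1}\nabla_{A_k^i}\mathcal{L}_c$. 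The sign convention here is that $P_{B_k^i}=(B_k^i)^\top B_k^i$ acts on the left; the local norm is then read with the preconditioner on the appropriate side, and I will fix these conventions consistently at the start.

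\textbf{Linear term.} It equals $-\eta\sum_i \|\nabla_{A_k^i}\mathcal{L}_c\|_{P^\star_{B_k^i}}^2$.

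\textbf{Quadratic term.} Expand it into diagonal terms $\|C_iB_k^i\Delta_i\|_F^2$ and cross terms $\langle C_iB_k^i\Delta_i, C_jB_k^j\Delta_j\rangle$ with $i\neq j$. For the diagonal terms, $B_k^i\Delta_i$ has rank at most $r$, so Assumption~\ref{assumption_rip} gives
\[
\|C_iB_k^i\Delta_i\|_F^2 \le (1+\delta_r)\|B_k^i\Delta_i\|_F^2 = (1+\delta_r)\,\eta^2\|\nabla_{A_k^i}\mathcal{L}_c\|^2_{P^\star_{B_k^i}}.
\]
This uses the identity $\|B P_B^{-1} G\|_F^2=\|P_B^{-1/2}G\|_F^2$. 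For the cross terms, Assumption~\ref{assumption_CC} with Cauchy--Schwarz and $2ab\le a^2+b^2$ gives
\[
\sum_{i\neq j}\|C_i^\top C_j\|_2\,\|B_k^i\Delta_i\|_F\,\|B_k^j\Delta_j\|_F \le \frac{1+\delta_r}{P}\cdot\frac{1}{P}\sum_i \|B_k^i\Delta_i\|_F^2\cdot(\text{const}),
\]
which after bookkeeping contributes at most the extra $\frac1P$-type factor in $1+\delta_r+\frac1P$.

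\textbf{Combining.} This yields
\[
\mathcal{L}_c(\boldsymbol{B}_k,\boldsymbol{A}_{k+1}) \le \mathcal{L}_c(\boldsymbol{B}_k,\boldsymbol{A}_k) - \Big(\eta-\tfrac{\eta^2(1+\delta_r+1/P)}{2}\Big)\sum_i\|\nabla_{A_k^i}\mathcal{L}_c\|^2_{P^\star_{B_k^i}}.
\]
Finally, bound $\sum_i(\cdot)\ge \max_i(\cdot)$. To match the stated constant $c_1=P(\cdots)$, I would instead use the averaging bound in the form that the sum dominates $P$ times the minimum, or rework the cross-term estimate so that the factor $P$ multiplies the coefficient. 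Reconciling the factor $P$ with $\max_i$ is where I expect the constant bookkeeping to need care. The inequality is nonnegative-descent once $\eta\le (1+\delta_r+1/P)^{-1}$.

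\textbf{Main obstacle.} I expect the main obstacle to be the cross-term estimate. Assumption~\ref{assumption_CC} bounds $\|C_i^\top C_j\|_2$ by $\frac{1+\delta_r}{P(P-1)}$, and summing over the $P(P-1)$ ordered pairs must produce exactly the constant $1+\delta_r+\frac1P$ with the claimed $P$ scaling. I would first try weighting $2ab\le a^2+b^2$ uniformly and counting pairs. If that gives a mismatched constant, I would absorb the discrepancy into the definition of $c_1$, noting that the downstream use (Theorem~\ref{thm_convergence_analysis}) only needs a positive constant of this form.
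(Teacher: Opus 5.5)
Your route is the paper's route: exact quadratic expansion, exact linear term, RIP for the diagonal terms and Assumption~\ref{assumption_CC} for the cross terms. The gap is the step you call bookkeeping, namely getting the factor $P$ in $c_1$ together with $\max_i$. It cannot be closed, because the lemma as stated is false for $P\ge 2$.

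Write $g_i := \|\nabla_{A_k^i}\mathcal{L}_c(\boldsymbol{B}_k,\boldsymbol{A}_k)\|^2_{P^\star_{B_k^i}}$. Your own exact expansion gives $\mathcal{L}_c(\boldsymbol{B}_k,\boldsymbol{A}_{k+1}) = \mathcal{L}_c(\boldsymbol{B}_k,\boldsymbol{A}_k) - \eta\sum_i g_i + \eta^2 Q$, with $Q\ge 0$ independent of $\eta$. The claimed right-hand side is instead $\mathcal{L}_c(\boldsymbol{B}_k,\boldsymbol{A}_k) - P\eta\max_i g_i + \mathcal{O}(\eta^2)$. Dividing by $\eta$ and letting $\eta\to 0^+$, the lemma would force $\sum_i g_i \ge P\max_i g_i$, which fails whenever the $g_i$ are not all equal.

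Neither of your remedies helps. The bound $\sum_i g_i\ge P\min_i g_i$ produces $\min_i$, not $\max_i$. Reworking the cross terms only changes the $\eta^2$ term, not the first-order one.

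A concrete instance: take $P=2$ with $C_i^\top C_i = I$ and $C_1^\top C_2 = 0$, so both assumptions hold with $\delta_r=0$. Set $B_k^2A_k^2 = X_\star$, so $g_2=0$. Let $R_1 = B_k^1A_k^1 - X_\star$ and let $\Pi_1$ be the orthogonal projector onto the column space of $B_k^1$. The update sends $R_1\mapsto (I-\eta\Pi_1)R_1$, and $g_1=\|\Pi_1R_1\|_F^2$. The exact decrease is therefore $(\eta-\tfrac12\eta^2)g_1$. The lemma claims a decrease of at least $c_1g_1=(2\eta-\tfrac32\eta^2)g_1$. This is strictly larger for every $\eta\in(0,1)$, and in particular on the whole admissible range $\eta\le 2/3$.

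The paper's proof obtains the factor $P$ only through a sign error. It bounds $T_2=\eta\sum_i g_i\le \eta P\max_i g_i$, then uses this upper bound on a subtracted term as though it were a lower bound. So there is no trick you are missing. Your instinct to change $c_1$ is correct, but it amounts to correcting the statement rather than proving it.

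Done carefully, your argument proves
\[
\mathcal{L}_c(\boldsymbol{B}_k,\boldsymbol{A}_{k+1}) \le \mathcal{L}_c(\boldsymbol{B}_k,\boldsymbol{A}_k) - \Bigl(\eta - \tfrac{\eta^2(1+\delta_r)(1+\frac{1}{P})}{2}\Bigr)\sum_i g_i ,
\]
and the symmetric bound holds for the $B$-half-step. This gives the $\max_i$ form only with $c_1/P$ in place of $c_1$, and with the corrected constant.

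The cross terms also do not yield exactly $1+\delta_r+\frac1P$. Since $\sum_{i\ne j}a_ia_j\le(P-1)\sum_i a_i^2$, Assumption~\ref{assumption_CC} makes them contribute $\frac{1+\delta_r}{P}\sum_i a_i^2$. The resulting constant is $(1+\delta_r)(1+\frac1P)$. The paper's step (b) has the same slip, and your sentence deferring this to bookkeeping glosses over it.
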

\vspace{-4mm}
\begin{proof}
Using the update rule in (\ref{update_obj2}), we have 
    \begin{align}
        \begin{split}
            \mathcal{L}_c(\boldsymbol{B}_k,\boldsymbol{A}_{k+1}) & =  \frac{1}{2}\left\|\sum_i^P C_i(B_k^iA_{k+1}^i-X_{\star})\right\|^{2}_F \\
            & = \frac{1}{2}\left\|\sum_i^P C_i\left(B_k^i \left(A_k^i -\eta ((B_k^i)^\top B_k^i)^{-1}\nabla_{A_{k}^i}\mathcal{L}_c(\boldsymbol{B}_k,\boldsymbol{A}_k)\right)-X_{\star}\right)\right\|^{2}_F \\
            & = \frac{1}{2}\left\|\sum_i^P C_i(B_k^iA_k^i-X_\star)\right\|^{2}_F \\
            &+ \underbrace{\frac{\eta^2}{2}\left\|\sum_i^PC_i B_k^i ((B_k^i)^\top B_k^i)^{-1}\nabla_{A_{k}^i}\mathcal{L}_c(\boldsymbol{B}_k,\boldsymbol{A}_k)\right\|_2^2}_{T_1} \\
            &\quad -\underbrace{\eta\left\langle \sum_i^P C_i(B_k^iA_k^i- X_\star),\sum_i^PC_i B_k^i ((B_k^i)^\top B_k^i)^{-1}\nabla_{A_{k}^i}\mathcal{L}_c(\boldsymbol{B}_k,\boldsymbol{A}_k)\right\rangle}_{T_2}
        \end{split}
    \end{align}
For $T_1$, recalling Lemma \ref{lemma_grad_relu}, then we have 
\begin{align}
    \begin{split}
     T_1 
     & \leq  \frac{\eta^2}{2}\sum_i^P \|C_i B_k^i ((B_k^i)^\top B_k^i)^{-1}\nabla_{A_k^i}\mathcal{L}_c(\boldsymbol{B}_k,\boldsymbol{A}_k)\|^2_F \\
     & + \frac{\eta^2}{2}\sum_{i\neq j}\left\langle C_i B_k^i((B_k^i)^\top B_k^i)^{-1}\nabla_{A_{k}^i}\mathcal{L}_c(\boldsymbol{B}_k,\boldsymbol{A}_k) ,  C_j B_k^j((B_k^j)^\top B_k^j)^{-1}\nabla_{A_{k}^j}\mathcal{L}_c(\boldsymbol{B}_k,\boldsymbol{A}_k)\right\rangle \\
     &  \overset{\text{(a)}}{\leq}\frac{\eta^2(1+\delta_r)} {2}P\max_i\|\nabla_{A_k^i}\mathcal{L}_c(\boldsymbol{B}_k,\boldsymbol{A}_k)\|^2_{P^\star_{B_k^i}} \\
     & + \frac{\eta^2}{2}\max_{i\neq j}\|C_i^\top Cj\|_2 P(P-1)\max_i\|\nabla_{A_k^i}\mathcal{L}_c(\boldsymbol{B}_k,\boldsymbol{A}_k)\|^2_{P_{B_k^i}}\\
     & \overset{\text{(b)}}{\leq} \frac{\eta^2(1+\delta_r+\frac{1}{P})}{2}P\max_i\|\nabla_{A_k^i}\mathcal{L}_c(\boldsymbol{B}_k,\boldsymbol{A}_k)\|^2_{P_{B_k^i}} ,
     \end{split}
\end{align}
where (a) uses Cauchy Inequality, Assumption \ref{assumption_rip} and the fact that $\|B_k^i((B_k^i)^\top B_k^i)^{-\frac{1}{2}}\|^2_2 = 1$, (b) uses the assumption that $\max_{i\neq j} \|C_j^\top C_j\|_2 \leq \frac{(1+\delta_r)}{P(P-1)}$.

For $T_2$, using Lemma \ref{lemma_grad_relu} again, we have 
\begin{align}
    \begin{split}
        T_2 & = \eta\left\langle \sum_j^P C_j(B_k^jA_k^j- X_\star),\sum_j^PC_j B_k^j ((B_k^j)^\top B_k^j)^{-1}\nabla_{A_{k}^j}\mathcal{L}_c(\boldsymbol{B}_k,\boldsymbol{A}_k)\right\rangle \\
        & =   \eta \sum_j^P \left\langle \sum_i^P C_i(B_k^iA_k^i- X_\star),C_j B_k^j ((B_k^j)^\top B_k^j)^{-1}\nabla_{A_{k}^j}\mathcal{L}_c(\boldsymbol{B}_k,\boldsymbol{A}_k)\right\rangle \\
        & = \eta \sum_{i}^P \left \|\nabla_{A_k^i}\mathcal{L}_c(\boldsymbol{B_k},\boldsymbol{A}_k)\right\|^2_{P_{B_k^i}^{\star}}\\
        & \leq  \eta P\max_i\|\nabla_{A_k^i}\mathcal{L}_c(\boldsymbol{B}_k,\boldsymbol{A}_k)\|^2_{P_{B_k^i}} .
    \end{split}
\end{align}
To sum up, it yields 
\begin{align}
         \mathcal{L}_c(\boldsymbol{B}_k,\boldsymbol{A}_{k+1}) 
         \leq  \mathcal{L}_c(\boldsymbol{B}_k,\boldsymbol{A}_{k}) - \left(\eta -\frac{\eta^2(1+\delta_r+\frac{1}{P})}{2}\right)P\max_i\left \|\nabla_{A_k^i}\mathcal{L}_c(\boldsymbol{B_k},\boldsymbol{A}_k)\right\|^2_{P_{B_k^i}^{\star}}.
\end{align}
Similarly, we can induce 
\begin{align}
     \mathcal{L}_c(\boldsymbol{B}_{k+1},\boldsymbol{A}_{k+1})&  \leq \mathcal{L}_c(\boldsymbol{B}_k,\boldsymbol{A}_{k+1})  - \left(\eta -\frac{\eta^2(1+\delta_r+\frac{1}{P})}{2}\right) P\max_i\left\|\nabla_{B_k^i} \mathcal{L}_c(\boldsymbol{B}_k,\boldsymbol{A}_{k+1})\right\|^2_{P_{A_{k+1}^i}^\star}.
\end{align}
\end{proof}

\begin{lemma}\label{lemma_grad_value}
    Suppose Assumption \ref{assumption_rip} holds, then, for any $i\in [P]$, we have
    \begin{align}
        \begin{split}
\|\nabla_{A_k^i}\mathcal{L}_c(\boldsymbol{B}_k,\boldsymbol{A}_k)\|^2_{P^\star_{B_k^i}} & \geq 2(1-\delta_r) \mathcal{L}_c(\boldsymbol{B}_k,\boldsymbol{A}_k), \\
\|\nabla_{B_k^i}\mathcal{L}_c(\boldsymbol{B}_{t},\boldsymbol{A}_{k+1})\|^2_{P^\star_{A_{k+1}^i}} & \geq 2(1-\delta_r) \mathcal{L}_c(\boldsymbol{B}_k,\boldsymbol{A}_{k+1})  .
        \end{split}
    \end{align}
\end{lemma}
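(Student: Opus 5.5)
The plan is to prove this as a Polyak--Łojasiewicz-type inequality in the preconditioned dual norm. We compute the gradient explicitly via Lemma~\ref{lemma_grad_relu}, factor out the preconditioner, and lower bound the result using the RIP of Assumption~\ref{assumption_rip}. We treat the $A$-bound in detail; the $B$-bound follows by the symmetric argument with the roles of left and right multiplication swapped.

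\textbf{Step 1: rewrite the gradient.} Set the residual $R_k := \sum_{j} C_j(B_k^jA_k^j - X_\star)$, so that $\mathcal{L}_c(\boldsymbol{B}_k,\boldsymbol{A}_k) = \tfrac12\|R_k\|_F^2$. By Lemma~\ref{lemma_grad_relu},
\begin{equation*}
\nabla_{A_k^i}\mathcal{L}_c = (B_k^i)^\top C_i^\top R_k .
\end{equation*}
The dual norm $\|\cdot\|_{P^\star_{B_k^i}}$ applies $((B_k^i)^\top B_k^i)^{-1/2}$ on the $r$-dimensional side. This gives
\begin{equation*}
\|\nabla_{A_k^i}\mathcal{L}_c\|^2_{P^\star_{B_k^i}} = \|U_i^\top C_i^\top R_k\|_F^2,
\end{equation*}
where $U_i := B_k^i((B_k^i)^\top B_k^i)^{-1/2}$ has orthonormal columns. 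Equivalently, the quantity equals $\|\mathrm{Proj}_{c(B_k^i)} C_i^\top R_k\|_F^2$. This mirrors the projection viewpoint of Theorem~\ref{thm_efficient_learning}.

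\textbf{Step 2: lower bound by the loss.} We now want $\|\mathrm{Proj}_{c(B_k^i)}C_i^\top R_k\|_F^2 \ge 2(1-\delta_r)\cdot\tfrac12\|R_k\|_F^2$. The natural tool is RIP. We write
\begin{equation*}
\|\mathrm{Proj}\,C_i^\top R_k\|_F^2 = \langle C_i\,\mathrm{Proj}\,C_i^\top R_k,\, R_k\rangle
\end{equation*}
and try to show that $C_i\,\mathrm{Proj}\,C_i^\top$ acts on $R_k$ with lower isometry constant $(1-\delta_r)$. The key fact is that $\mathrm{Proj}\,C_i^\top R_k$ has rank at most $r$, so RIP applies to it: $\|C_i M\|_F^2 \ge (1-\delta_r)\|M\|_F^2$ for $M = \mathrm{Proj}\,C_i^\top R_k$.

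\textbf{Step 3: close the argument (main obstacle).} Converting Step 2 into a bound against $\|R_k\|_F^2$ itself requires that $R_k$ lie essentially in the range of $C_i\,\mathrm{Proj}$. This is the genuine difficulty: the RIP only controls $C_i$ on rank-$r$ inputs, while $R_k$ mixes contributions from all $P$ blocks. I would handle this the way \cite{zhang2024riemannian} and \cite{liu2025efficient} do. We treat each block's error $B_k^iA_k^i - X_\star$ as the rank-$r$ object and invoke RIP on it directly. We then absorb the cross-block interactions, either using Assumption~\ref{assumption_CC} or by adopting their standing simplification that the residual is captured by the preconditioned subspace.

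Concretely, I would bound $\langle R_k, C_i M\rangle$ from below by $(1-\delta_r)$ times the squared norm. Taking $M$ to be the least-squares optimal rank-$r$ direction from the closed form of Theorem~\ref{thm_scale_gradient}, Cauchy--Schwarz then yields the factor $2(1-\delta_r)\mathcal{L}_c$. If the cross terms cannot be fully removed, I expect to need Assumption~\ref{assumption_CC} here, at the cost of a slightly degraded constant. The $B$ case repeats the argument with $V_i := (A_{k+1}^i)^\top(A_{k+1}^i(A_{k+1}^i)^\top)^{-1/2}$ multiplying on the right.
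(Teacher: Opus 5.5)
Your Step~1 is correct: the left-hand side equals $\|\mathrm{Proj}_{c(B_k^i)}C_i^\top R_k\|_F^2$. The gap is that nothing after it produces a \emph{lower} bound. In Step~2 the RIP inequality $\|C_iM\|_F^2\ge(1-\delta_r)\|M\|_F^2$ runs the wrong way. $\|M\|_F^2$ \emph{is} the quantity you must bound from below by $\|R_k\|_F^2$. Combining your identity $\|M\|_F^2=\langle C_iM,R_k\rangle$ with RIP only gives the upper bound $\|M\|_F\le\sqrt{1+\delta_r}\,\|R_k\|_F$. Step~3 then replaces the argument with ``the standing simplification that the residual is captured by the preconditioned subspace,'' which is an extra hypothesis, not a proof step. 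The Cauchy--Schwarz route you sketch yields a lower bound only if you have an identity $2\mathcal{L}_c=\langle\mathrm{Proj}_{c(B_k^i)}C_i^\top R_k,E\rangle$ with $\|E\|_F^2\le c\,\mathcal{L}_c$, i.e.\ only if the error lies in the range of that projection.

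This cannot be repaired from Assumptions~\ref{assumption_rip} and~\ref{assumption_CC} alone, because the inequality does not follow from them. Fix $\boldsymbol{B}_k$ and let $\boldsymbol{A}_k$ minimize the linear least-squares problem $\boldsymbol{A}\mapsto\mathcal{L}_c(\boldsymbol{B}_k,\boldsymbol{A})$. Then every $\nabla_{A^i}\mathcal{L}_c$ vanishes, so the left-hand side is $0$ for all $i$. That minimum, however, is positive unless every column of $\sum_iC_iX_\star$ lies in the span of the columns of $C_1B_k^1,\dots,C_PB_k^P$. Concretely, take $P=1$, $C_1=I$ (so $\delta_r=0$ and Assumption~\ref{assumption_CC} is vacuous), $d=2$, $r=c=1$, $B_k=e_1$, $X_\star=e_1+e_2$, $A_k=1$. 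Then $B_kA_k-X_\star=-e_2$ and $\nabla_{A_k}\mathcal{L}_c=B_k^\top(B_kA_k-X_\star)=0$, yet $2(1-\delta_r)\mathcal{L}_c=1$. The transposed instance breaks the second inequality.

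The paper itself gives no argument beyond a pointer to Lemma~6 of \cite{liu2025efficient}, so a correct version must carry an additional hypothesis, e.g.\ $c(X_\star)\subseteq c(B_k^i)$ (resp.\ $r(X_\star)\subseteq r(A_{k+1}^i)$). Under it, for $P=1$ and $E=B_kA_k-X_\star$, you have $E=\mathrm{Proj}_{c(B_k)}E$ and $\mathrm{rank}\,E\le r$. Hence $2\mathcal{L}_c=\langle\mathrm{Proj}_{c(B_k)}C^\top CE,E\rangle\le\|\mathrm{Proj}_{c(B_k)}C^\top CE\|_F\|E\|_F$, and RIP gives $\|E\|_F^2\le2\mathcal{L}_c/(1-\delta_r)$, which is exactly the claim.

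For $P>1$ even this hypothesis does not give the per-$i$ statement. With $C_1,C_2$ isometries with orthogonal ranges and $B^1A^1=X_\star\neq B^2A^2$, the left-hand side vanishes for $i=1$ while $\mathcal{L}_c>0$. Since $2\mathcal{L}_c=\sum_j\langle C_j^\top R_k,E_j\rangle$ couples all blocks, what you can prove is a bound on $\max_i$, which is all Lemma~\ref{lemma_descent} uses. That requires controlling the cross terms with Assumption~\ref{assumption_CC}, at the price of a worse, $P$-dependent constant.
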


\begin{proof}
See Lemma 6 in \cite{liu2025efficient} for the detailed proof.
\end{proof}

\begin{theorem}
    Assume for any $i\in [p]$ the matrix $C_i = D_i X$ satisfies the rank $r$-RIP with constant $\delta_r$ (Assumption \ref{assumption_rip}) and $0\leq \eta \leq \frac{1}{1+\delta_r+\frac{1}{P}}$, then LA-LoRA  without momentum solves the over-parameterized problem leads to
    \begin{align}
        \mathcal{L}_c(\boldsymbol{B}_{k+1},\boldsymbol{A}_{k+1}) \leq (1- \eta_c)^2 \mathcal{L}_c(\boldsymbol{B}_k,\boldsymbol{A}_k),
    \end{align}
    and  
    \begin{align}
        \left\|\sum_i^P B_k^i A_k^i - X_\star \right\|_F^2 \leq \frac{1+\delta_r}{1-\delta_r} (1-\eta_c)^{2t}\left\|\sum_i^P B_0^i A_0^i - X_\star \right\|_F^2 ,
    \end{align}
where $\eta_c =  2P(1-\delta_r)\left(\eta - \frac{\eta^2(1+\delta_r+\frac{1}{P})}{2}\right)$.
\end{theorem}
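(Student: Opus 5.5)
The plan is to chain the two half-step descent inequalities of Lemma~\ref{lemma_descent} with the gradient-domination (Polyak--{\L}ojasiewicz-type) bounds of Lemma~\ref{lemma_grad_value}, obtaining a contraction factor $(1-\eta_c)$ per half-step and hence $(1-\eta_c)^2$ per full iteration. The recovery-error bound will then follow by translating loss values into Frobenius distances through the RIP (Assumption~\ref{assumption_rip}).

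\textbf{Step 1 (A-half-step).} Start from the first inequality of Lemma~\ref{lemma_descent},
\[
\mathcal{L}_c(\boldsymbol{B}_k,\boldsymbol{A}_{k+1}) \le \mathcal{L}_c(\boldsymbol{B}_k,\boldsymbol{A}_k) - c_1 \max_i \|\nabla_{A_k^i}\mathcal{L}_c(\boldsymbol{B}_k,\boldsymbol{A}_k)\|^2_{P^\star_{B_k^i}} .
\]
The step-size condition $\eta \le (1+\delta_r+\tfrac1P)^{-1}$ gives $c_1 = P\eta\bigl(1-\tfrac{\eta(1+\delta_r+1/P)}{2}\bigr) \ge P\eta/2 \ge 0$, so the subtracted term is nonpositive and can be lower-bounded. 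Bound the maximum from below by any single $i$, and apply the first bound of Lemma~\ref{lemma_grad_value}. This yields
\[
\mathcal{L}_c(\boldsymbol{B}_k,\boldsymbol{A}_{k+1}) \le \bigl(1-2(1-\delta_r)c_1\bigr)\mathcal{L}_c(\boldsymbol{B}_k,\boldsymbol{A}_k) = (1-\eta_c)\,\mathcal{L}_c(\boldsymbol{B}_k,\boldsymbol{A}_k),
\]
since $\eta_c = 2(1-\delta_r)c_1$ by definition.

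\textbf{Step 2 (B-half-step).} Repeat the argument using the second inequalities of Lemmas~\ref{lemma_descent} and~\ref{lemma_grad_value}. Both are evaluated at $(\boldsymbol{B}_k,\boldsymbol{A}_{k+1})$, so the two bounds are consistent. This gives $\mathcal{L}_c(\boldsymbol{B}_{k+1},\boldsymbol{A}_{k+1}) \le (1-\eta_c)\mathcal{L}_c(\boldsymbol{B}_k,\boldsymbol{A}_{k+1})$. Multiplying with Step~1 gives the first claim, and induction over $k$ gives $\mathcal{L}_c(\boldsymbol{B}_k,\boldsymbol{A}_k) \le (1-\eta_c)^{2k}\mathcal{L}_c(\boldsymbol{B}_0,\boldsymbol{A}_0)$.

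\textbf{Step 3 (from loss to recovery error).} Sandwich the loss using the RIP: the lower RIP bound gives $2\mathcal{L}_c(\boldsymbol{B}_k,\boldsymbol{A}_k) \ge (1-\delta_r)\|\sum_i B_k^iA_k^i - X_\star\|_F^2$, and the upper bound gives $2\mathcal{L}_c(\boldsymbol{B}_0,\boldsymbol{A}_0) \le (1+\delta_r)\|\sum_i B_0^iA_0^i - X_\star\|_F^2$. Combining these with the geometric decay from Step~2 produces the stated factor $\frac{1+\delta_r}{1-\delta_r}(1-\eta_c)^{2k}$.

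\textbf{Main obstacle.} I expect the difficulty to lie in Step~3 and in the hidden requirements of Step~1, not in the chaining itself. In Step~3, the loss is $\tfrac12\|\sum_i C_i(\cdot)\|_F^2$, a sum over different operators $C_i$, while the RIP is stated per $C_i$ and only for rank-$r$ arguments. The residual $\sum_i B_k^iA_k^i - X_\star$ has rank up to about $2Pr$, not $r$. I would handle this as in \cite{zhang2024riemannian,liu2025efficient}: interpret Assumption~\ref{assumption_rip} as applying to the effective stacked operator on the residuals, and control the cross terms with Assumption~\ref{assumption_CC}, exactly as in the $T_1$ estimate of Lemma~\ref{lemma_descent}. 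If that is not enough, the fallback is to assume RIP at rank $2r$ (or $2Pr$), which would only change the constants.

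A second, smaller check belongs to Step~1: the contraction is meaningful only if $\eta_c\in[0,1]$. Nonnegativity follows from the step-size bound as noted above. For $\eta_c\le 1$, I would check whether the step-size condition delivers it directly; if not, I would note that it holds automatically because the loss is nonnegative and Lemma~\ref{lemma_descent} already implies the bound.
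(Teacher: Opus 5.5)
\textbf{Verdict: genuine gap in Step~3, inherited from the paper.} Your Steps~1--2 reproduce the paper's proof exactly. The paper chains Lemma~\ref{lemma_descent} with Lemma~\ref{lemma_grad_value} on each half-step to get a factor $(1-\eta_c)$ per half-step, and your bookkeeping ($c_1\ge P\eta/2$, $\eta_c=2(1-\delta_r)c_1$) is correct. For the recovery bound the paper offers only the sentence ``under Assumption~\ref{assumption_rip}, we have \ldots''. Your Step~3 is therefore where the missing content lies, and you are right to flag it. However, neither of your repairs can close it, because the obstruction is structural, not a matter of rank.

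Since the $C_i$ differ, $\mathcal{L}_c$ is not a function of $\sum_i B^iA^i$. Exchanging the factor pairs of blocks $1$ and $2$ leaves $\sum_iB^iA^i$ unchanged but shifts the residual by $(C_2-C_1)(B^1A^1-B^2A^2)$. Hence, in general, there are initializations with $\sum_iB_0^iA_0^i=X_\star$ and $\mathcal{L}_c(\boldsymbol{B}_0,\boldsymbol{A}_0)>0$. At such a point the upper half of your sandwich, $2\mathcal{L}_c(\boldsymbol{B}_0,\boldsymbol{A}_0)\le c\,\|\sum_iB_0^iA_0^i-X_\star\|_F^2$, fails for every constant $c$, every RIP rank, and every ``stacked operator'' reading of Assumption~\ref{assumption_rip}. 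The stated bound would even force $\sum_iB_k^iA_k^i=X_\star$ for all $k$ from such a start.

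Here is what the cross-term estimate you invoke does give. Write the residual as $\sum_iC_iM_i$ with $M_i=B^iA^i-B_i^\star A_i^\star$; this is what the response model for $Y$ produces, whereas the paper's $\mathcal{L}_c$, with $X_\star$ inside every block, does not in general match it for $P>1$. Assuming RIP at rank $2r$, Assumption~\ref{assumption_CC} yields the blockwise sandwich $(1-\delta_r-\frac{1+\delta_r}{P})\sum_i\|M_i\|_F^2\le\|\sum_iC_iM_i\|_F^2\le(1+\delta_r)(1+\frac{1}{P})\sum_i\|M_i\|_F^2$. So the second claim can only be established for a blockwise error such as $\sum_i\|M_i\|_F^2$, with these constants. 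Your rank-$2Pr$ fallback would give the stated form only if all $C_i$ coincided.

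A smaller point concerns $\eta_c\le 1$. The step-size condition does not deliver it: at $\eta=(1+\delta_r+\frac{1}{P})^{-1}$ one gets $\eta_c=P(1-\delta_r)/(1+\delta_r+\frac{1}{P})$, e.g.\ $\eta_c=4/3$ for $P=2$, $\delta_r=0$. Your ``automatic'' deduction is formally valid. What it really shows, however, is that Lemmas~\ref{lemma_descent} and~\ref{lemma_grad_value} together force $\mathcal{L}_c(\boldsymbol{B}_k,\boldsymbol{A}_k)=0$ at every admissible point once $\eta_c>1$. In other words, the two lemmas cannot both hold as stated on that part of the step-size range, and the contraction there is derived from inconsistent premises. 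You should restrict $\eta$ so that $\eta_c\le 1$ rather than rely on that argument.
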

\begin{proof}
    \begin{align}
        \begin{split}
            \mathcal{L}_c(\boldsymbol{B}_{k+1},\boldsymbol{A}_{k+1})
            &  \leq \mathcal{L}_c(\boldsymbol{B}_k,\boldsymbol{A}_{k+1})  - \left(\eta -\frac{\eta^2(1+\delta_r+\frac{1}{P})}{2}\right) P\max_i\left\|\nabla_{B_k^i} \mathcal{L}_c(\boldsymbol{B}_k,\boldsymbol{A}_{k+1})\right\|^2_{P_{A_{k+1}^i}^\star} \\
            & \leq \mathcal{L}_c(\boldsymbol{B}_k,\boldsymbol{A}_{k+1})  - \left(\eta -\frac{\eta^2(1+\delta_r+\frac{1}{P})}{2}\right) 2P(1-\delta_r) \mathcal{L}_c(\boldsymbol{B}_k,\boldsymbol{A}_{k+1})  \\
            & \leq \left(1-2P(1-\delta_r)\left(\eta - \frac{\eta^2(1+\delta_r+\frac{1}{P})}{2}\right)\right) \mathcal{L}_c(\boldsymbol{B}_k,\boldsymbol{A}_{k+1}) \\
            & \leq \left(1-\eta_c\right)^2 \mathcal{L}_c(\boldsymbol{B}_k,\boldsymbol{A}_{k}),
        \end{split}
    \end{align}
where we apply Lemma \ref{lemma_descent} and \ref{lemma_grad_value} and $\eta_c = 2P(1-\delta_r)\left(\eta - \frac{\eta^2(1+\delta_r+\frac{1}{P})}{2}\right)$. Moreover, under Assumption \ref{assumption_rip}, we have
\begin{align}
    \begin{split}
        \left\|\sum_i^P B_k^iA_k^i - X_\star \right\|_F^2 \leq \frac{1+\delta_r}{1-\delta_r}(1-\eta_c)^{2t}\left\|\sum_i^PB_0^iA_0^i-X_\star\right\|_F^2.
    \end{split}
\end{align}
\end{proof}

\subsection{Comparison with FFA-LoRA and RoLoRA}
\label{app:com_ffa_ro}

\begin{corollary}[Comparison with FFA-LoRA and RoLoRA]\label{cor:ffa_rolo}
Under the same setting as Theorem~\ref{thm_efficient_learning}, consider the following two baselines.

\textbf{FFA-LoRA (frozen $A$).}
Let \(A_k \equiv A_0\) and define the fixed row-space subspace
\(\mathcal{S}_0 = \{\Delta W \in \mathbb{R}^{m\times n} : \mathrm{row}(\Delta W)\subseteq r(A_0)\}\).
Then \(W_k - W_0 \in \mathcal{S}_0\) for all \(k\). Hence, for any target rank-\(r\) solution \(W^\star\) with \(W^\star - W_0 \notin \mathcal{S}_0\),
\(\inf_k \|W_k - W^\star\|_F \ge \mathrm{dist}(W^\star - W_0,\mathcal{S}_0) > 0\).

\textbf{RoLoRA (round-wise alternation).}
For one round updating \(B\) with \(A\) fixed, followed by one round updating \(A\) with \(B\) fixed, we show
\[
W_{k+2}
= W_k
  - \eta\,(\nabla_{W_k}\mathcal{L})Proj_{r(A_{k+1})}
  - \eta\,Proj_{c(B_{k+1})}\bigl(\nabla_{W_{k+\frac{1}{2}}}\mathcal{L}\bigr)
  + \mathcal{E}_k,
\]

where the first two terms coincide with the LA-LoRA update in Eq.~(\ref{w_update_lora_alt}) and the stale-block remainder satisfies
\(\|\mathcal{E}_k\|_F = \mathcal{O}(\eta^2\|\nabla_{W_k}\mathcal{L}\|_F)\).

In contrast, LA-LoRA alternates \(B\) and \(A\) within each local iteration and exactly matches Eq.~(\ref{w_update_lora_alt}), i.e., it imposes no fixed-row-space constraint as in FFA-LoRA and has \(\mathcal{E}_k = 0\).
\end{corollary}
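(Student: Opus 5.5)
The corollary has two independent parts, and I will treat them separately. Both rest on the projected-gradient machinery of Theorem~\ref{thm_scale_gradient} and on the half-step decomposition established in the proof of Theorem~\ref{thm_efficient_learning}.

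\textbf{FFA-LoRA part.} The argument is purely structural. With $A_k\equiv A_0$, every update has the form $W_{k+1}-W_k=-\eta s(\tilde\nabla_{B_k}\mathcal{L})A_0$. The row space of this increment is contained in $r(A_0)$. Since $\mathcal{S}_0$ is a linear subspace, induction on $k$ (starting from $W_0-W_0=0\in\mathcal{S}_0$) gives $W_k-W_0\in\mathcal{S}_0$ for all $k$. For the lower bound, write $\|W_k-W^\star\|_F=\|(W_k-W_0)-(W^\star-W_0)\|_F$. The first term lies in $\mathcal{S}_0$, so this quantity is at least $\mathrm{dist}(W^\star-W_0,\mathcal{S}_0)$. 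Because $\mathcal{S}_0$ is closed (finite-dimensional) and $W^\star-W_0\notin\mathcal{S}_0$, this distance is strictly positive. Explicitly it equals $\|(W^\star-W_0)(I-Proj_{r(A_0)})\|_F$. Taking the infimum over $k$ finishes this part.

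\textbf{RoLoRA part.} I will model round-wise alternation as two full steps.
\begin{itemize}
\item In step $k$ only $B$ moves, giving $W_{k+1}=W_k-\eta(\nabla_{W_k}\mathcal{L})Proj_{r(A_k)}$ with $A_{k+1}=A_k$.
\item In step $k+1$ only $A$ moves, giving $W_{k+2}=W_{k+1}-\eta Proj_{c(B_{k+1})}\nabla_{W_{k+1}}\mathcal{L}$.
\end{itemize}
Since $A_{k+1}=A_k$, the first term already matches the LA-LoRA term with $Proj_{r(A_{k+1})}$. Identifying $W_{k+1}$ with the half-step $W_{k+\frac12}$ of Eq.~(\ref{w_update_lora_alt}) then appears to make the remainder vanish exactly. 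So I will define $\mathcal{E}_k$ as the discrepancy arising from staleness: in RoLoRA the block is held over a whole round of $K$ local steps, and the gradient used for the $A$-block is evaluated at a point displaced from the LA-LoRA half-step. Concretely I set $\mathcal{E}_k=-\eta Proj_{c(B_{k+1})}(\nabla_{W'}\mathcal{L}-\nabla_{W_{k+\frac12}}\mathcal{L})$, where $W'$ is the stale evaluation point. Assuming $\nabla\mathcal{L}$ is $L$-Lipschitz, and using $\|Proj\|_2\le1$, I get $\|\mathcal{E}_k\|_F\le\eta L\|W'-W_{k+\frac12}\|_F$. The displacement is a single projected gradient step, so $\|W'-W_{k+\frac12}\|_F\le\eta\|\nabla_{W_k}\mathcal{L}\|_F$. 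Together these give $\|\mathcal{E}_k\|_F=\mathcal{O}(\eta^2\|\nabla_{W_k}\mathcal{L}\|_F)$.

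The LA-LoRA claim, $\mathcal{E}_k=0$ with no fixed row space, follows directly from Eq.~(\ref{w_update_lora_alt}): LA-LoRA evaluates each gradient at the current half-step, and $A_k$ changes every iteration.

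\textbf{Main obstacle.} The difficulty is making ``stale-block remainder'' precise. Under the literal two-step model above, the remainder is identically zero, which would undercut the claimed contrast. I will therefore fix the comparison explicitly: both schemes start from the same $(B_k,A_k)$, and RoLoRA evaluates the $A$-gradient at the round-start point $W_k$ rather than at $W_{k+\frac12}$. With that convention, the Lipschitz bound above carries the argument. I expect this modeling choice, rather than any calculation, to be the delicate step.
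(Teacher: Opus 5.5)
Your FFA-LoRA part coincides with the paper's. The paper simply writes $W_k=W_0+sB_kA_0$ and appeals to projection geometry; your induction and the explicit value $\|(W^\star-W_0)(I-Proj_{r(A_0)})\|_F$ are a slightly more careful version of the same argument.

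For RoLoRA you use the same tools as the paper: add and subtract the LA-LoRA terms, $L$-smoothness, $\|Proj\|_2\le 1$, and the one-step displacement $\|W_{k+\frac12}-W_k\|_F\le\eta\|\nabla_{W_k}\mathcal{L}\|_F$. What differs is how the remainder is generated. Your observation is correct: a literal $B$-step followed by an $A$-step reproduces the LA-LoRA update of Theorem~\ref{thm_efficient_learning} exactly, so $\mathcal{E}_k\equiv 0$.

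The paper does not resolve this with a staleness convention. It writes the RoLoRA cycle as $W_{k+\frac12}=W_k-\eta\,Proj_{c(B_{k+1})}(\nabla_{W_k}\mathcal{L})$ followed by $W_{k+2}=W_{k+\frac12}-\eta\,(\nabla_{W_{k+\frac12}}\mathcal{L})Proj_{r(A_{k+1})}$. Each projector is thus paired with the opposite gradient from the LA-LoRA formula. The remainder is then the two-term mismatch $\mathcal{E}_k=-\eta\bigl(Proj_{c(B_{k+1})}(\nabla_{W_k}\mathcal{L}-\nabla_{W_{k+\frac12}}\mathcal{L})+(\nabla_{W_{k+\frac12}}\mathcal{L}-\nabla_{W_k}\mathcal{L})Proj_{r(A_{k+1})}\bigr)$, bounded by $2\eta^2L\|\nabla_{W_k}\mathcal{L}\|_F$.

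\textbf{What the paper's route buys and costs.} It needs no assumption about where gradients are evaluated, since each is taken at the current iterate. However, it conflicts with the corollary's own ``$B$ round, then $A$ round'' wording: under Theorem~\ref{thm_efficient_learning}'s convention, a $B$-step produces the row-space projector, not the column-space one. Also, its $W_{k+\frac12}$ is not LA-LoRA's half-step.

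\textbf{What your route buys and costs.} It keeps the stated order, compares against the genuine LA-LoRA half-step, and gives a single term with constant $\eta^2L$. The cost is an explicit modeling assumption that is not literally RoLoRA either, since RoLoRA's $A$-round recomputes gradients at the post-$B$ weights.

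Neither version captures the actual $K$-step freezing of round-wise alternation. Presenting the $\mathcal{O}(\eta^2\|\nabla_{W_k}\mathcal{L}\|_F)$ bound as a statement about a stylized one-step comparison, as you do, is the honest framing.
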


Next, We provide the details underlying Corollary~\ref{cor:ffa_rolo}.

\paragraph{FFA-LoRA: fixed row-space subspace.}
FFA-LoRA freezes $A$ at some initialization $A_0$ and only updates $B$. At iteration $k$ we can write
\[
W_k = W_0 + s B_k A_0,
\]
so
\[
W_k - W_0 \in \mathcal{S}_0
\quad\text{with}\quad
\mathcal{S}_0 = \{\Delta W \in \mathbb{R}^{m\times n} : \mathrm{row}(\Delta W)\subseteq r(A_0)\}.
\]
Let $W^\star$ be a target (e.g., optimal) rank-$r$ solution. If $W^\star - W_0 \notin \mathcal{S}_0$, then by projection geometry
\[
\inf_k \|W_k - W^\star\|_F
\;\ge\;
\inf_{\Delta W \in \mathcal{S}_0} \|\Delta W - (W^\star - W_0)\|_F
= \mathrm{dist}(W^\star - W_0,\mathcal{S}_0) > 0,
\]
which is exactly the irreducible approximation gap stated for FFA-LoRA.

\paragraph{RoLoRA: stale-block remainder \texorpdfstring{$\mathcal{E}_k$}{E-k}.}
RoLoRA alternates between updating $B$ and $A$ at the level of communication rounds. Consider two consecutive local steps forming one full “$B$-then-$A$” cycle. Let $W_{k+1/2}$ denote the intermediate weight after updating $B$ but before updating $A$. Using the projected-gradient view of Theorem~\ref{thm_scale_gradient}, we have
\begin{equation*}
    W_{k+1/2}= W_k - \eta\,Proj_{c(B_{k+1})}\bigl(\nabla_{W_k}\mathcal{L}\bigr),\\
W_{k+2}= W_{k+1/2} - \eta\,\bigl(\nabla_{W_{k+1/2}}\mathcal{L}\bigr)Proj_{r(A_{k+1})}.
\end{equation*}
Adding these relations gives
\begin{equation*}
    W_{k+2}
= W_k
  - \eta\,Proj_{c(B_{k+1})}\bigl(\nabla_{W_k}\mathcal{L}\bigr)
  - \eta\,\bigl(\nabla_{W_{k+1/2}}\mathcal{L}\bigr)Proj_{r(A_{k+1})}.
\end{equation*}
We add and subtract the LA-LoRA update in Eq.~(\ref{w_update_lora_alt}) and define the remainder
\begin{equation*}
    \mathcal{E}_k
= -\eta\Bigl(
Proj_{c(B_{k+1})}\bigl(\nabla_{W_k}\mathcal{L} - \nabla_{W_{k+\frac{1}{2}}}\mathcal{L}\bigr)
+ \bigl(\nabla_{W_{k+1/2}}\mathcal{L} - \nabla_{W_k}\mathcal{L}\bigr)Proj_{r(A_{k+1})}
\Bigr),
\end{equation*}
which yields the decomposition
\begin{equation*}
    W_{k+2}
= W_k
  - \eta\,(\nabla_{W_k}\mathcal{L})Proj_{r(A_{k+1})}
  - \eta\,Proj_{c(B_{k+1})}\bigl(\nabla_{W_{k+\frac{1}{2}}}\mathcal{L}\bigr)
  + \mathcal{E}_k.
\end{equation*}

Assuming that $\mathcal{L}$ is $L$-smooth, we have 
\begin{equation*}
    \|\nabla_{W_{k+1/2}}\mathcal{L} - \nabla_{W_k}\mathcal{L}\|_F
\le \mathcal{L} \|W_{k+1/2} - W_k\|_F.
\end{equation*}

From the first update,
\(
W_{k+1/2} - W_k = -\eta\,Proj_{c(B_{k+1})}(\nabla_{W_k}\mathcal{L}),
\)
and the projector has operator norm at most $1$, so
\(
\|W_{k+1/2} - W_k\|_F \le \eta \|\nabla_{W_k}\mathcal{L}\|_F.
\)
Using again that the projectors have norm at most $1$, we obtain
\begin{equation}\label{eq:Ek_bound}
\begin{aligned}
\|\mathcal{E}_k\|_F
&\le 2\eta\,\|\nabla_{W_{k+1/2}}\mathcal{L} - \nabla_{W_k}\mathcal{L}\|_F\\
&\le 2\eta \mathcal{L} \|W_{k+1/2} - W_k\|_F\\
&\le 2\eta^2 \mathcal{L} \|\nabla_{W_k}\mathcal{L}\|_F.
\end{aligned}
\end{equation}
Under the infinite-width scaling used in Theorem~\ref{thm_efficient_learning}, the Frobenius norm $\|\nabla_{W_k}\mathcal{L}\|_F$ itself scales with width, so the effective contribution of $\mathcal{E}_k$ to the feature dynamics is of order $\mathcal{O}(\eta^2\|\nabla_{W_k}\mathcal{L}\|_F)$ when $\eta = \mathcal{O}(1)$.

\section{TABLE OF NOTATIONS}
\label{app:table_notations}

Table~\ref{tab:notations} summarizes the main notations used throughout the paper.

\begin{table}[h]
  \centering
  \caption{Summary of the main notations.}
  \label{tab:notations}
  \begin{tabular}{l p{0.7\linewidth}}   
    \toprule
    \textbf{Notation} & \textbf{Description} \\
    \midrule
    $W_0 \in \mathbb{R}^{m \times n}$  & Pre-trained backbone weight matrix \\
    $A, B$ & LoRA down-projection/up-projection matrices \\
    $r$ & LoRA rank, $r \ll \min\{m,n\}$ \\
    $\alpha$ & LoRA scaling hyperparameter\\
    $s = \alpha / r$ & LoRA scaling factor \\
    $\nabla_A \mathcal{L},\ \nabla_B \mathcal{L}$ & Gradients of loss w.r.t.\ $A$ and $B$ \\
    $\nabla_W \mathcal{L}$ & Gradient of loss w.r.t.\ full weight $W$ \\
    $\mathcal{L}(\cdot)$ & Local loss function \\
    $\mathcal{D}_i$ & Local dataset of client $i$ \\
    $N$ & Total number of clients \\
    $q$ & Client sampling rate per round \\
    $\mathcal{C}_t$ & Set of selected clients in round $t$ \\
    $T$ & Number of communication rounds \\
    $K$ & Number of local update steps per round \\
    $b$ & Local data sampling rate \\
    $R = |\mathcal{D}_i|$ & Size of local dataset on client $i$ \\
    $\mathcal{B}_i$ & Mini-batch sampled from $\mathcal{D}_i$ \\
    $C$ & Per-sample $\ell_2$ clipping norm \\
    $\sigma$ & Gaussian noise multiplier for DP \\
    $g_{ij}$ & Clipped gradient for sample $j$ on client $i$ \\
    $g_i$ & Noisy averaged gradient on client $i$ \\
    $\mathcal{N}(0, C^2 \sigma^2)$ & Gaussian DP noise with variance $C^2 \sigma^2$ \\[2pt]
    $\widehat g_i$ & Smoothed gradient after applying $G_s$ \\
    $\epsilon, \delta$ 
& $(\epsilon,\delta)$-differential privacy parameters \\
$X_\star$ 
& Target low-rank matrix in theory \\

$Proj_{r(A)}$, $Proj_{c(B)}$
& Projection onto row space of $A$, Projection onto column space of $B$\\

$H,\ \lambda_{\max}(H)$ 
& Hessian of the loss and its maximum eigenvalue (sharpness) \\

$\delta_r$ 
& Rank-$r$ RIP constant of matrices $C_i$ \\
    \bottomrule
  \end{tabular}
\end{table}

\section{LLM Usage}
\label{LLM_usage}
Large Language Models (LLMs) were used to aid in the writing and polishing of the manuscript. Specifically, we used an LLM to assist in refining the language, improving readability, and ensuring clarity in various sections of the paper. The model helped with tasks such as sentence rephrasing, grammar checking, and enhancing the overall flow of the text.

It is important to note that the LLM was not involved in the ideation, research methodology, or experimental design. All research concepts, ideas, and analyses were developed and conducted by the authors. The contributions of the LLM were solely focused on improving the linguistic quality of the paper, with no involvement in the scientific content or data analysis.

The authors take full responsibility for the content of the manuscript, including any text generated or polished by the LLM. We have ensured that the LLM-generated text adheres to ethical guidelines and does not contribute to plagiarism or scientific misconduct.

\end{document}